\documentclass[lettersize,journal]{IEEEtran}
\usepackage[T1]{fontenc}
\usepackage{amsmath,amssymb,amsfonts,mathtools}
\usepackage{amsthm}
\usepackage{graphicx}
\usepackage{booktabs}
\usepackage[misc]{ifsym}
\usepackage{bm}
\usepackage{xcolor}
\usepackage{tabularx}
\usepackage{longtable}
\usepackage{enumitem}
\usepackage{array}
\usepackage{textcomp}
\usepackage{stfloats}
\usepackage{url}
\usepackage{verbatim}
\usepackage{cite}
\usepackage[ruled,linesnumbered]{algorithm2e}
\usepackage[hidelinks]{hyperref}

\newcommand{\Tr}{\operatorname{Tr}}

\newcommand{\NMSE}{\operatorname{NMSE}}
\newcommand{\NRMSE}{\operatorname{NRMSE}}

\newtheorem{theorem}{Theorem}
\newtheorem{proposition}{Proposition}
\newtheorem{lemma}{Lemma}
\newtheorem{corollary}{Corollary}
\newtheorem{remark}{Remark}

\begin{document}

\title{Free-Probability Kernels for Zero-Rollout Hyperparameter Selection in Reservoir Computing}

\author{%
\IEEEauthorblockN{Sara Malacarne$^1$, Andrea Ceni$^2$,
Claudio Gallicchio$^2$}\\
\IEEEauthorblockA{%
$^1$Telenor Research \& Innovation, Oslo, Norway\\
$^2$Department of Computer Science, University of Pisa,
Pisa, Italy}}

\maketitle

\begin{abstract}
Reservoir computing (RC) couples a fixed recurrent dynamical system with a trained lightweight readout, but this efficiency is partly lost during hyperparameter selection: the recurrent gain, input scale, and leakage rate determine the reservoir’s stability and temporal processing regime and are usually tuned through many rollouts. We introduce a deterministic, pilot-informed selector for leaky linear reservoirs followed by coordinate-wise nonlinear features. Free probability yields cross-lag propagation coefficients that summarize how the reservoir mixes past inputs. In the large-width limit, these coefficients define a deterministic temporal kernel that approximates the finite-reservoir feature geometry. Kernel ridge regression on a short labeled pilot sequence therefore ranks candidate operating regimes without instantiating or rolling out a reservoir, and the selected configuration transfers across widths. Across ten synthetic temporal benchmarks, zero-rollout selection obtains a mean deployment score of $0.772$, compared with $0.774$ for exhaustive simulation-based search, while avoiding $156\,600$ selection rollouts. With a small rollout budget, the proposed ranking provides the strongest mean performance at every tested budget and reaches the exhaustive reference using $4.8\%$ of its rollout cost. On four public electricity-transformer-temperature (ETT) forecasting datasets, five retained candidates recover the exhaustive operating point on three datasets. On multivariate cellular-traffic forecasting, 15 rollouts per cell reach the 462-rollout exhaustive reference and outperform random search and Bayesian optimization at low budgets. These results position free-probability kernels as deterministic surrogates for selecting reservoir operating regimes when validation rollouts are scarce.
\end{abstract}

\begin{IEEEkeywords}
Reservoir computing, Echo state networks, Free probability, Kernel methods, Hyperparameter selection
\end{IEEEkeywords}

\section{Introduction}
\label{sec:intro}

Reservoir computing (RC) provides an efficient framework for temporal
sequence learning: a fixed recurrent system maps the input history into
a high-dimensional representation, while only a lightweight readout is
trained~\cite{lukosevicius2009reservoir,Tanaka2019}. Its canonical
instance is the Echo State Network (ESN)~\cite{jaeger2001echo}.
Although readout training is inexpensive, performance depends strongly
on hyperparameters controlling the recurrent dynamics, particularly the
recurrent gain, input scale, and leakage rate. These parameters govern
stability, memory, and the relative influence of recent and distant
inputs, and their best values can vary substantially across tasks and
temporal scales.

In standard practice, these hyperparameters are selected by repeatedly
instantiating finite-width reservoirs, generating their state
trajectories, fitting readouts, and evaluating validation error.
Consequently, the search cost grows with the candidate grid, the
reservoir width, and the number of random realizations. The selected
configuration may also depend on the width used during search, so
selection may need to be repeated when the deployment width changes.

We address this bottleneck with a deterministic, task-informed selector
that requires no finite-reservoir rollout during candidate ranking.
Given a short labelled input--output sequence from the downstream task,
the method evaluates each candidate through a deterministic large-width
kernel. The pilot sequence describes the prediction task; it is not
obtained by running the candidate reservoirs. The resulting ranking can
be used directly as a zero-rollout selector or as a pre-screen that
reduces the candidate grid before a small number of task-direct
finite-reservoir evaluations.

We study a leaky linear recurrence followed by coordinate-wise
nonlinear readout features. For this architecture, the nonlinear
feature Gram matrix converges at large reservoir width to a
deterministic temporal kernel. Its covariance is governed by mixed
cross-lag propagation moments of the recurrent matrix, which we compute
using free probability (FP). For each operating point
$\theta=(\sigma_r,\sigma_{\rm in},\alpha)$, the resulting kernel
approximates the feature geometry of the finite reservoir with those
same parameters. Kernel ridge regression on the pilot sequence can
therefore rank the candidate operating points, after which the selected
parameters are transferred to the finite reservoir used for deployment.

Existing reservoir kernels are used mainly either to describe
large-width reservoir behaviour or as predictors in their own right.
Our use is different: for each candidate
$\theta=(\sigma_r,\sigma_{\rm in},\alpha)$, the deterministic kernel
approximates the feature Gram matrix of the finite reservoir with the
same parameters. We use this approximation to rank candidates on
labelled pilot data, and then deploy the selected $\theta^\star$ in the
finite reservoir.

The main analytical difficulty is that the same recurrent matrix is
applied repeatedly over time, so the effects of inputs arriving at
different lags are not independent. Our derivation captures these
cross-time dependencies and shows that, at large width, the resulting
nonlinear feature similarities converge to a deterministic kernel. To
the best of our knowledge, this is the first task-informed,
zero-rollout method that uses such a kernel to select the
hyperparameters of a finite reservoir.

The main contributions are as follows:
\begin{enumerate}[leftmargin=*,itemsep=0pt]
    \item
    We derive the deterministic large-width post-nonlinearity kernel
    of a leaky linear reservoir. The analysis computes the mixed
    cross-lag propagation moments generated by repeated use of the same
    recurrent matrix and establishes the coordinate-level
    self-averaging required by the nonlinear feature map. We also
    obtain a controlled complete-history extension and explicit
    kernels for several structured recurrent ensembles.

    \item
    We use this kernel as a task-informed surrogate for
    finite-reservoir hyperparameter selection. Candidate values of
    $(\sigma_r,\sigma_{\rm in},\alpha)$ are ranked from labelled pilot
    data without instantiating any candidate reservoir. If the
    deterministic validation score has a unique best candidate, then a
    sufficiently wide finite reservoir selects the same candidate with
    high probability.

    \item
    We evaluate zero-rollout selection and FP-based pre-screening on
    synthetic temporal benchmarks, the public ETT forecasting
    datasets~\cite{zhou2021informer}, and an operational
    cellular-traffic forecasting task. We compare against exhaustive
    task-direct selection, a memory-based proxy, nonlinear-ESN
    selection, random search~\cite{bergstra2012random}, and Bayesian
    optimization using the tree-structured Parzen estimator
    (TPE)~\cite{bergstra2011algorithms}, under matched
    finite-reservoir rollout budgets.
\end{enumerate}

Complete proofs, an explicit kernel catalogue for alternative recurrent
random matrices, and additional experimental results are provided in
Appendices~\ref{app:fixed-context-proofs}--\ref{sec:forecasting-horizons}.


\section{Background and Problem Formulation}
\label{sec:background}

Existing approaches to reservoir hyperparameter selection can be
divided broadly into task-based search and dynamical proxies.
Task-based methods evaluate candidate configurations according to their
downstream validation performance. The simplest examples are grid and
random search~\cite{bergstra2012random}, while more adaptive strategies
include Bayesian optimization for reservoir
parameters~\cite{maat2018efficient}, gradient-based
optimization~\cite{thiede2019gradient}, and evolutionary
methods~\cite{matzner2022hyperparameter}. More efficient validation
schemes reduce or reuse some of these evaluations
~\cite{racca2021robust,lukosevicius2019efficient}. Despite their
different search strategies, these methods share the same basic cost:
to assess a new candidate, they instantiate a finite reservoir,
generate its state trajectory, fit a readout, and evaluate its
predictions.

A different strategy is to select or adapt reservoirs using properties
of their internal dynamics. Echo-state-property conditions and
spectral-radius guidelines identify stable operating regions
~\cite{jaeger2001echo,yildiz2012revisiting,
lukosevicius2012practical}, while memory capacity and
Jacobian-based indicators quantify aspects of temporal processing
~\cite{verstraeten2009quantification}. Reservoir dynamics may also be
adapted without labels, for example through intrinsic
plasticity~\cite{schrauwen2008improving}. These criteria provide useful
information about stability, memory, or dynamical regime, but they are
not designed to rank candidates directly according to the labelled
downstream validation objective. When they depend on realized states,
they also still require a finite reservoir to be instantiated and run.

Large-width reservoir theory offers a third perspective: replacing the
random finite system by a deterministic limiting description.
Hermans and Schrauwen formulated infinite-width ESNs as recurrent
kernel machines~\cite{hermans2012recurrent}. Couillet et al.\ used
random-matrix theory to characterize the training and testing behaviour
of large linear ESNs~\cite{couillet2016random}, while Dong et al.\
derived nonlinear recurrent kernels and used them directly for
prediction~\cite{dong2020reservoir}. Related recurrent-kernel
constructions have since been developed for leaky, sparse, and deep
reservoir topologies~\cite{d2025comparison}. Gonon, Grigoryeva, and
Ortega~\cite{gonon2025reservoir} instead construct an
infinite-dimensional Volterra reservoir whose kernel defines a
universal temporal predictive model.

These works show that the behaviour of a large reservoir can often be
represented by a deterministic kernel. Their main purpose, however, is
to characterize the reservoir or to use the limiting kernel itself as
the predictive model. We address a different question: whether such a
kernel can replace finite-reservoir rollouts during hyperparameter
selection. For each candidate
$\theta=(\sigma_r,\sigma_{\rm in},\alpha)$, we construct the
large-width kernel corresponding to the finite reservoir with those
same parameters. Its performance on labelled pilot data is used to
rank the candidates, after which the selected
$\theta^\star$ is transferred back to the finite reservoir used for
deployment.

We now introduce the reservoir family for which this correspondence is derived. Consider the leaky linear recurrence
\begin{equation}
x_{t+1}
=
A x_t+\alpha W_{\rm in}u_t,
\qquad
A=(1-\alpha)I_n+\alpha W_r,
\label{eq:model}
\end{equation}
equipped with coordinate-wise nonlinear readout features
\begin{equation}
z_t=\psi(x_t),
\qquad
\hat y_t=W_{\rm out}^{\top}z_t.
\label{eq:nonlinearity}
\end{equation}
Here,
$u_t\in\mathbb{R}^{d_{\rm in}}$,
$x_t,z_t\in\mathbb{R}^{n}$, and
$\hat y_t,y_t\in\mathbb{R}^{d_{\rm out}}$ denote the input, linear
state, nonlinear feature vector, prediction, and target, respectively.
The recurrent and input matrices
$W_r\in\mathbb{R}^{n\times n}$ and
$W_{\rm in}\in\mathbb{R}^{n\times d_{\rm in}}$
are randomly initialized and remain fixed, while only
$W_{\rm out}\in\mathbb{R}^{n\times d_{\rm out}}$ is fitted.
The leakage rate is $\alpha\in(0,1]$, the reservoir is initialized at
$x_0=0$, and $\psi$ is applied coordinate-wise. The experiments use
$\psi=\tanh$. The scalings of $W_r$ and $W_{\rm in}$ introduce the
recurrent gain $\sigma_r$ and input scale $\sigma_{\rm in}$ and are
specified in Section~\ref{sec:method}.

Placing the nonlinearity after the recurrence separates temporal memory
from nonlinear processing. Related linear-transition designs also
appear in modern state-space sequence models such as S4 and
LRU~\cite{gu2021efficiently,orvieto2023resurrecting}. Linear recurrent
dynamics do not by themselves imply limited temporal modelling
capacity: stable linear reservoirs equipped with sufficiently expressive
polynomial or neural-network readouts form universal approximation
families for broad classes of fading-memory
filters~\cite{gonon2020universality,grigoryeva2018universal}. In such architectures, the linear
recurrence represents and propagates the input history, while the
readout supplies the nonlinear approximation capacity.

Our model uses the lightweight feature map
$z_t=\tanh(x_t)$ followed by a trained linear output layer. This
coordinate-wise readout is not itself covered verbatim by the broader
universality theorem, but it has recently been shown to be competitive
with fully nonlinear ESNs on a range of temporal
tasks~\cite{lagomarsini2025benchmarking}. More importantly for the
present analysis, placing the nonlinearity outside the recurrent loop
keeps the state linear in the propagated input history.

Indeed, unrolling~\eqref{eq:model} from $x_0=0$ gives
\begin{equation}
x_t
=
\alpha\sum_{k=0}^{t-1}A^kW_{\rm in}u_{t-1-k}.
\label{eq:unrolled-intro}
\end{equation}
Because the same recurrent matrix is applied repeatedly over time, the
covariance between states depends on the mixed normalized moments
\begin{equation}
\frac{1}{n}
\Tr\!\left(A^k(A^\ell)^\top\right).
\label{eq:mixed-trace-background}
\end{equation}
These moments form the reservoir’s cross-lag propagation profile: diagonal terms measure the average squared gain of delayed echoes, whereas off-diagonal terms measure their overlap. Unlike the spectral radius, this profile also retains cross-time coupling and non-normal amplification.

The next section derives the large-width limits of these mixed moments, uses them to construct the nonlinear feature kernel, and applies the kernel to zero-rollout candidate ranking.

\section{Free-Probability Kernel for Temporal Model Selection}
\label{sec:method}

Sections~\ref{sec:model} and~\ref{sec:kernel-notation} specify the
reservoir family and introduce the empirical-kernel notation. The new
theoretical results begin in Section~\ref{sec:fp-kernel}, where we
derive the deterministic large-width kernel at fixed context length;
Sections~\ref{sec:extensions} and~\ref{sec:selector} extend this
construction to the complete washed-out history and use it for
hyperparameter selection.

Throughout this section,
$\theta=(\sigma_r,\sigma_{\rm in},\alpha)$ denotes a fixed candidate
configuration. To avoid overloading the notation, we suppress the
dependence on $\theta$ in $A$, $W_r$, $W_{\rm in}$, and the reservoir
states, but retain it in the kernel and covariance quantities because
the selector compares these objects across candidates. The subscript
$n$ identifies a finite-width random kernel, while kernels without
$n$ are deterministic large-width limits. The superscript $(L)$
denotes an $L$-context quantity, and $(\infty)$ denotes complete
history.

\subsection{Leaky linear reservoir and stability}
\label{sec:model}

We use the leaky linear reservoir model of Section~\ref{sec:background}, eq.~\eqref{eq:model} with a scaled real Ginibre recurrent
matrix $W_r$ and an independent Gaussian input matrix $W_{\rm in}$:
\begin{equation}
\begin{aligned}
  W_r
  &= \frac{\sigma_r}{\sqrt{n}}\,G_r,
  & G_r &\in \mathbb{R}^{n\times n}, \\
  W_{\rm in}
  &= \frac{\sigma_{\rm in}}{\sqrt{d_{\rm in}}}\,G_{\rm in},
  & G_{\rm in} &\in \mathbb{R}^{n\times d_{\rm in}}, \\
  (G_r)_{ij}
  &\overset{\mathrm{i.i.d.}}{\sim} \mathcal{N}(0,1),
  & (G_{\rm in})_{ij}
  &\overset{\mathrm{i.i.d.}}{\sim} \mathcal{N}(0,1), \\
  G_r &\perp G_{\rm in}. &&
\end{aligned}
\label{eq:scaling}
\end{equation}
The $1/\sqrt{n}$ normalization keeps the spectrum and operator norm of
$W_r$ on an $O(1)$ scale as the reservoir width grows. The normalization
of $W_{\rm in}$ controls the scale of the injected input. Indeed, for
fixed deterministic vectors $u,v\in\mathbb{R}^{d_{\rm in}}$,
\begin{equation}
  \mathbb{E}_{W_{\rm in}}
  \!\left[W_{\rm in}u v^\top W_{\rm in}^\top\right]
  =
  \frac{\sigma_{\rm in}^2}{d_{\rm in}}
  (u^\top v)I_n .
  \label{eq:input-isotropy}
\end{equation}
In particular,
\begin{equation}
  \operatorname{Var}_{W_{\rm in}}
  \!\left[\left(W_{\rm in}u\right)_i\right]
  =
  \frac{\sigma_{\rm in}^2}{d_{\rm in}}\lVert u\rVert^2 .
  \label{eq:input-variance}
\end{equation}
Our asymptotic analysis takes $n\to\infty$ with $d_{\rm in}$ fixed; no
additional high-dimensional assumption on the input vectors is required.
The independence of $G_r$ and $G_{\rm in}$ allows us to condition on the
recurrent matrix while retaining an independent Gaussian input map.

For the linear recurrence, the echo-state property can be characterized
exactly. Let $x_t$ and $x_t'$ be two trajectories driven by the same
input sequence but initialized at $x_0$ and $x_0'$, respectively. Their
difference satisfies
\[
  x_t-x_t' = A^t(x_0-x_0').
\]
Hence, the influence of the initial condition vanishes for every pair
$x_0,x_0'$ if and only if $A^t \to 0$ as $t\to\infty$, which is
equivalent to
\begin{equation}
  \rho(A)<1.
  \label{eq:linear-esp}
\end{equation}

For $W_r=(\sigma_r/\sqrt{n})G_r$, the circular law places the
large-width eigenvalue cloud of $W_r$ in a disk centered at the origin
with radius $\sigma_r$~\cite{BordenaveChafai2012}. Therefore, the eigenvalue cloud of
\[
  A=(1-\alpha)I_n+\alpha W_r
\]
is asymptotically a disk centered at $1-\alpha$ with radius
$\alpha\sigma_r$. Its asymptotic spectral radius is consequently
\begin{equation}
  \rho_{\rm FP}(\alpha,\sigma_r)
  :=
  (1-\alpha)+\alpha\sigma_r.
  \label{eq:rho-fp}
\end{equation}
For $\alpha>0$, the asymptotic linear echo-state condition
$\rho_{\rm FP}<1$ is equivalent to $\sigma_r<1$. A finite-width
robustness margin is introduced in Section~\ref{sec:selector}.

\subsection{Readout and finite-reservoir kernel notation}
\label{sec:kernel-notation}

The matrices $W_r$ and $W_{\rm in}$ in
eq.~\eqref{eq:scaling} are never trained. The readout features are
\begin{equation}
  z_t=\psi(x_t),
  \qquad
  z_{t,i}=\psi(x_{t,i}),
  \quad i=1,\ldots,n,
  \label{eq:feature-map}
\end{equation}
where $\psi:\mathbb{R}\to\mathbb{R}$ is applied coordinate-wise. The
experiments use $\psi=\tanh$. The theoretical kernel result assumes
$\psi\in C_b^1(\mathbb{R})$, meaning that $\psi$ is bounded and
continuously differentiable with bounded derivative.

Let $\mathcal{T}$ and $\mathcal{V}$ denote the training and validation
time indices of the pilot sequence. Stacking the corresponding
untruncated feature vectors gives
\begin{equation}
  Z_T=\begin{pmatrix}z_t^\top\end{pmatrix}_{t\in\mathcal{T}},
  \qquad
  Z_V=\begin{pmatrix}z_t^\top\end{pmatrix}_{t\in\mathcal{V}}.
  \label{eq:feature-matrices}
\end{equation}
Since the feature map $\psi$ is fixed within each construction, we
suppress it in the notation for the pilot kernel blocks. The
width-normalized empirical kernel blocks for candidate $\theta$ are
\begin{equation}
\begin{aligned}
  K_{n,\theta,TT}
  &=\frac{1}{n}Z_TZ_T^\top, \\
  K_{n,\theta,VT}
  &=\frac{1}{n}Z_VZ_T^\top .
\end{aligned}
\label{eq:empirical-kernel-blocks}
\end{equation}
They are indexed by pilot time steps, not by reservoir coordinates.
Writing $Y_T$ for the training targets, kernel ridge regression predicts
on the validation block as
\begin{equation}
  \widehat{Y}_{V,n}(\theta,\lambda)
  =K_{n,\theta,VT}
  \left(K_{n,\theta,TT}+\lambda I\right)^{-1}Y_T .
  \label{eq:kernel-ridge}
\end{equation}

The blocks in~\eqref{eq:empirical-kernel-blocks} are computed from the
untruncated finite-reservoir trajectory. Section~\ref{sec:fp-kernel}
first derives deterministic large-width limits for their
$L$-context counterparts. Section~\ref{sec:extensions} then shows that,
in the stable regime, the finite-context kernels converge to the
complete-history kernel associated with the washed-out recurrence.

\subsection{Finite-context free-probabilistic kernel construction}
\label{sec:fp-kernel}

Starting from $x_0=0$, the linear recurrence can be unrolled exactly:
\begin{equation}
  x_t
  =\alpha\sum_{k=0}^{t-1}
  A^kW_{\rm in}u_{t-1-k} .
  \label{eq:unrolled-state}
\end{equation}
For a fixed context length $L<\infty$, the selector retains the most
recent $L+1$ propagated inputs and uses the truncated state
\begin{equation}
  x_t^{(L)}
  :=\alpha\sum_{k=0}^{L}
  A^kW_{\rm in}u_{t-1-k} .
  \label{eq:truncated-state}
\end{equation}
The corresponding empirical nonlinear kernel entry at width $n$ is
\begin{equation}
  K_{\psi,n,\theta}^{(L)}(t,s)
  :=
  \frac{1}{n}\,
  \psi(x_t^{(L)})^\top
  \psi(x_s^{(L)}).
  \label{eq:empirical-nonlinear-kernel}
\end{equation}

For the pilot train and validation sets, define the corresponding
empirical $L$-context blocks entrywise by
\begin{equation}
\begin{aligned}
  \bigl[K_{n,\theta,TT}^{(L)}\bigr]_{t,s}
  &:=K_{\psi,n,\theta}^{(L)}(t,s),
  &&t,s\in\mathcal{T}, \\
  \bigl[K_{n,\theta,VT}^{(L)}\bigr]_{t,s}
  &:=K_{\psi,n,\theta}^{(L)}(t,s),
  &&t\in\mathcal{V},\ s\in\mathcal{T}.
\end{aligned}
\label{eq:empirical-truncated-kernel-blocks}
\end{equation}

Because the recurrence is linear, the state covariance is determined by
mixed normalized traces of propagated recurrent matrices. For fixed
$k,\ell\geq0$, define
\begin{equation}
  \tau_{k,\ell}
  :=\sum_{j=0}^{\min(k,\ell)}
  \binom{k}{j}\binom{\ell}{j}
  (1-\alpha)^{k+\ell-2j}(\alpha\sigma_r)^{2j}.
\label{eq:tau}
\end{equation}
These coefficients quantify how input components injected at two lags
remain correlated after propagation through the reservoir.

\begin{proposition}[Ginibre cross-lag propagation coefficients]
\label{prop:moments}
Let
\[
  A_n=(1-\alpha)I_n+\alpha W_r^{(n)},
  \qquad
  W_r^{(n)}=\frac{\sigma_r}{\sqrt{n}}G_r^{(n)}.
\]
For every fixed $k,\ell\geq0$,
\begin{equation}
  \frac{1}{n}\operatorname{Tr}
  \!\left(A_n^k(A_n^\ell)^\top\right)
  \xrightarrow{L^1}
  \tau_{k,\ell}.
  \label{eq:trace-moment-limit}
\end{equation}
\end{proposition}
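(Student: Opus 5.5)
Expanding both powers binomially reduces the problem to mixed traces of $W=W_r^{(n)}$ alone. Write $c=1-\alpha$ and $W=W_r^{(n)}$, so $A_n=cI_n+\alpha W$ with $I_n$ commuting with $W$. Then
\[
A_n^k(A_n^\ell)^\top=\sum_{i=0}^{k}\sum_{j=0}^{\ell}\binom{k}{i}\binom{\ell}{j}c^{k-i}c^{\ell-j}\alpha^{i+j}\,W^i(W^\top)^j .
\]
Since $k,\ell$ are fixed, this is a finite sum. It therefore suffices to show, for each fixed $i,j\ge0$, that
\[
\tfrac1n\Tr\bigl(W^i(W^\top)^j\bigr)\xrightarrow{L^1}\delta_{ij}\,\sigma_r^{2i}.
\]
Substituting this limit collapses the double sum to its diagonal $i=j$, which gives exactly $\tau_{k,\ell}$ in~\eqref{eq:tau}.

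\textbf{Expectation.} I would first compute $\mathbb{E}\frac1n\Tr(G^i(G^\top)^j)\,n^{-(i+j)/2}$ by Wick's formula. The trace is a closed walk
\[
a_0\to a_1\to\cdots\to a_i\to b_1\to\cdots\to b_j=a_0 .
\]
The first $i$ steps use entries $G_{a_{m}a_{m+1}}$, and the last $j$ steps use transposed entries, i.e.\ they traverse edges backwards. Each entry must be paired. A leading-order pairing needs $(i+j)/2+1$ free indices, which forces a tree-like (noncrossing) structure. The key observation is that a forward $G$-step and a forward $G^\top$-step cannot pair along a tree edge with the same orientation unless they are matched in nested position. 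Consequently, within the block $G^i$ no two factors pair at leading order (the Ginibre matrix is $R$-diagonal / circular: $\phi(w^m)=0$ and $*$-moments vanish unless the $w$ and $w^*$ counts balance). This forces $i=j$.

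\textbf{The $i=j$ case.} Here the only leading pairing matches $G_{a_{m}a_{m+1}}$ in $G^i$ with the mirror factor in $(G^\top)^i$, i.e.\ the $m$-th factor from the centre on each side. That is a single noncrossing pairing, giving limit $\sigma_r^{2i}$. All other pairings lose at least one factor of $n$, so $\mathbb{E}[\cdot]=\delta_{ij}\sigma_r^{2i}+O(1/n)$. Equivalently, in free-probability language, $W$ converges in $*$-distribution to a circular element $w$ of variance $\sigma_r^2$ (Voiculescu's theorem for Ginibre matrices). Since $w$ is $R$-diagonal, $\phi(w^i(w^*)^j)=0$ for $i\ne j$, and the nested structure gives $\phi(w^i(w^*)^i)=\sigma_r^{2i}$. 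I would invoke this as the cleanest route and use the Wick count only to confirm the rate.

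\textbf{Concentration.} For $L^1$ convergence I would bound the variance: $\operatorname{Var}\bigl(\frac1n\Tr(W^i(W^\top)^j)\bigr)=O(1/n^2)$. This follows by the standard genus expansion of the second moment, where connected pairings across the two traces cost $n^{-2}$; alternatively, Gaussian Poincaré applied to a polynomial in the entries gives $O(1/n)$, which is enough. Then $L^2$ convergence, hence $L^1$ convergence, follows for each term. Linearity over the finite binomial sum finishes the proof.

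\textbf{Main obstacle.} The main obstacle is the combinatorial step for mixed words: showing rigorously that off-diagonal $i\ne j$ words vanish and that exactly one pairing survives when $i=j$. I expect to handle it by citing asymptotic $*$-freeness and circularity of Ginibre matrices rather than by enumerating pairings.
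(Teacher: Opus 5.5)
Your proposal is correct and follows essentially the same route as the paper. The shared steps are the binomial expansion into the mixed traces $n^{-1}\Tr(X_n^i(X_n^\top)^j)$ with $X_n=G_r^{(n)}/\sqrt n$, the limit $\delta_{ij}$ (times $\sigma_r^{2i}$ in your scaling) from convergence of real Ginibre to a circular element $c$ and the unique nested non-crossing pairing of $c^i(c^*)^j$, and a variance bound giving $L^2$ and hence $L^1$ convergence.

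The one thing to tighten is your remark that Gaussian Poincar\'e gives only $O(1/n)$. Combined with uniform moment bounds on $\|X_n\|$, it already gives the sharper $\operatorname{Var}=O(n^{-2})$, and this is how the paper argues. The reason is that each gradient coordinate carries a factor $n^{-3/2}$, while the squared Frobenius norm of the remaining word is at most $n$ times its squared operator norm.
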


The proof expands the mixed powers by the binomial theorem, uses the
circular-element limit for the normalized $*$-moments, and controls
fluctuations by Gaussian concentration. Complete details are given in
Appendix~\ref{app:fixed-context-proofs}.

The trace limit controls only an average over coordinates. Because the
activation is applied coordinate-wise, we additionally require the
diagonal propagation terms to self-average.
\begin{lemma}[Averaged diagonal self-averaging]
\label{lem:diag}
Let $A_n=(1-\alpha)I_n+\alpha W_r^{(n)}$. For every fixed
$L<\infty$,
\begin{equation}
\begin{aligned}
  \max_{0\leq k,\ell\leq L}
  \frac{1}{n}\sum_{i=1}^n
  \Bigl|
  \bigl[A_n^k(A_n^\ell)^\top\bigr]_{ii}
  -\tau_{k,\ell}
  \Bigr|^2
  \xrightarrow{\mathbb{P}}0.
\end{aligned}
\label{eq:diag-avg}
\end{equation}
\end{lemma}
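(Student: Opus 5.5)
Since $L$ is fixed, the maximum in \eqref{eq:diag-avg} runs over finitely many pairs $(k,\ell)$, so it suffices to prove the convergence in probability for each fixed pair. Write $D_i^{(n)}:=[A_n^k(A_n^\ell)^\top]_{ii}$. The plan is a second-moment argument that splits each diagonal entry into a mean part and a fluctuation part.

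First, by Gaussian orthogonal invariance of $G_r$ (the law of $W_r$ is invariant under conjugation by permutations, and indeed by any orthogonal $O$), the vector $(D_i^{(n)})_i$ is exchangeable. Hence $\mathbb{E}D_i^{(n)}=\frac1n\mathbb{E}\Tr(A_n^k(A_n^\ell)^\top)$ for every $i$, and this converges to $\tau_{k,\ell}$ by Proposition~\ref{prop:moments}, since $L^1$ convergence implies convergence of expectations. It therefore suffices to show $\frac1n\sum_i\mathbb{E}|D_i^{(n)}-\mathbb{E}D_i^{(n)}|^2=\mathbb{E}|D_1^{(n)}-\mathbb{E}D_1^{(n)}|^2\to0$. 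Together with $|\mathbb{E}D_1-\tau_{k,\ell}|^2\to0$, this gives convergence of the averaged squared error in $L^1$, hence in probability, by Markov's inequality.

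Second, I would prove the variance bound for a single diagonal entry $D_1=e_1^\top A^k(A^\ell)^\top e_1$. Expand $A^k$ by the binomial theorem (valid because $I$ and $W_r$ commute), so that $D_1$ is a finite linear combination of $e_1^\top W^{a}(W^{b})^\top e_1$ with $a\le k$, $b\le \ell$. Each such term is a polynomial in the Gaussian entries. Two routes are available.
\begin{enumerate}[leftmargin=*]
\item \emph{Combinatorial route.} Compute $\mathrm{Var}$ by Wick's theorem over closed walks from vertex $1$. Each walk contributes $n^{-(a+b)/2}$ times the number of free vertex labels, and a covariance between two walk pairs requires the two walks to share an edge. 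Gluing them costs one factor of $n$, giving $\mathrm{Var}=O(1/n)$.
\item \emph{Concentration route.} Truncate to the event $\{\|G_r\|\le 3\sqrt n\}$, which has probability $1-e^{-cn}$. On this event the map $G_r\mapsto e_1^\top W^a(W^b)^\top e_1$ is Lipschitz with constant $O(n^{-1/2})$ in Frobenius norm: its derivative is a sum of rank-one terms, each of norm $O(\sigma_r^{a+b}/\sqrt n)$. Gaussian concentration then gives sub-Gaussian fluctuations of size $O(n^{-1/2})$. The complement event is handled by polynomial moment bounds on $\|G_r\|$ together with Cauchy--Schwarz.
\end{enumerate}
I would use the concentration route, since it matches the tools already used for Proposition~\ref{prop:moments}.

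The main obstacle is the Lipschitz estimate. The trace in Proposition~\ref{prop:moments} gains an extra $1/n$ from normalization, whereas a single diagonal entry does not, so I must check that the gradient of $e_1^\top W^a(W^b)^\top e_1$ really has Frobenius norm $O(n^{-1/2})$. The gradient with respect to $G_r$ is $\frac{\sigma_r}{\sqrt n}\sum (W^{j})^\top e_1 e_1^\top W^{a-1-j}\cdots$, a sum of outer products of vectors of the form $(W^p)^\top e_1$ or $W^q(\cdot)e_1$. Each such vector has norm at most $\|W\|^{p}\le (3\sigma_r)^p$ on the good event. The gradient is thus a sum of boundedly many rank-one matrices of norm $O(1/\sqrt n)$, which gives the required constant and therefore variance $O(1/n)$. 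The truncation must be handled with the usual extension of Lipschitz functions, or with a smooth cutoff, and that is the one delicate technical point.
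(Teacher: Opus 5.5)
Your proposal is correct and matches the paper's proof. Permutation invariance makes the diagonal entries identically distributed with mean $\frac1n\mathbb{E}\Tr\bigl(A_n^k(A_n^\ell)^\top\bigr)\to\tau_{k,\ell}$. A gradient bound of order $n^{-1/2}$ on a single diagonal word then gives $\operatorname{Var}=O(1/n)$, and Markov's inequality with a finite union bound finishes the argument.

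The one difference is that the paper applies the Gaussian Poincar\'e inequality directly to the polynomial $(M)_{11}$, bounding $\mathbb{E}\lVert\nabla (M)_{11}\rVert^2\le \frac{r^2}{n}\mathbb{E}\max\{1,\lVert X_n\rVert^{2(r-1)}\}$ through the uniform norm moments. This removes the truncation and Lipschitz-extension step that you flagged as the delicate point.
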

Thus, although individual reservoir units remain random, their 
diagonal propagation profile becomes deterministic. This coordinate-level result is essential because $\psi$ acts neuronwise; dependence among different units is handled by the concentration step below.
The complete proof is given in Appendix~\ref{app:fixed-context-proofs}.

Using the cross-lag propagation
coefficients, define the deterministic
linear-state covariance
\begin{equation}
\begin{aligned}
  Q_{\theta}^{(L)}(t,s)
  :=\alpha^2\frac{\sigma_{\rm in}^2}{d_{\rm in}}
  \sum_{k,\ell=0}^{L}
  \tau_{k,\ell}\,
  u_{t-1-k}^\top u_{s-1-\ell},
\end{aligned}
\label{eq:q}
\end{equation}
where $\theta=(\sigma_r,\sigma_{\rm in},\alpha)$, and the associated
$2\times2$ covariance matrix
\begin{equation}
  \Sigma_{\theta}^{(L)}(t,s)
  :=
  \begin{pmatrix}
    Q_{\theta}^{(L)}(t,t) & Q_{\theta}^{(L)}(t,s)\\
    Q_{\theta}^{(L)}(t,s) & Q_{\theta}^{(L)}(s,s)
  \end{pmatrix}.
  \label{eq:Sigma-limit}
\end{equation}

The nonlinear feature kernel is obtained by passing the limiting
Gaussian state covariance through the coordinate-wise activation.
\begin{theorem}[Deterministic nonlinear reservoir kernel]
\label{thm:kernel}
Let $\psi\in C_b^1(\mathbb{R})$. For fixed $L,t,s$ and a deterministic
bounded input sequence $(u_r)_r$, as $n\to\infty$,
\begin{equation}
  K_{\psi,n,\theta}^{(L)}(t,s)
  \xrightarrow{\mathbb{P}}
  K_{\psi,\theta}^{(L)}(t,s),
  \label{eq:kernel-limit}
\end{equation}
where
\begin{equation}
\begin{aligned}
  K_{\psi,\theta}^{(L)}(t,s)
  &:=\mathbb{E}\!\left[\psi(g_t)\psi(g_s)\right], \\
  (g_t,g_s)
  &\sim\mathcal{N}\!\left(
    0,\Sigma_{\theta}^{(L)}(t,s)
  \right).
\end{aligned}
\label{eq:Gaussian-kernel}
\end{equation}
The convergence is in probability with respect to the joint randomness
of $W_r$ and $W_{\rm in}$.
\end{theorem}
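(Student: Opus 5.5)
The plan is to condition on the recurrent matrix $W_r$ and exploit the Gaussianity of $W_{\rm in}$. Write $x_t^{(L)}=\alpha\sum_{k=0}^{L}A^kW_{\rm in}u_{t-1-k}$. Conditional on $W_r$, the pair $(x_t^{(L)},x_s^{(L)})$ is a centered jointly Gaussian vector in $\mathbb{R}^{2n}$, since it is linear in the Gaussian matrix $G_{\rm in}$. By the isotropy identity \eqref{eq:input-isotropy}, the conditional cross-covariance between coordinates $i$ and $j$ is
\[
\mathbb{E}\bigl[x_{t,i}^{(L)}x_{s,j}^{(L)}\,\big|\,W_r\bigr]
=\alpha^2\frac{\sigma_{\rm in}^2}{d_{\rm in}}\sum_{k,\ell=0}^{L}
\bigl[A^k(A^\ell)^\top\bigr]_{ij}\,u_{t-1-k}^\top u_{s-1-\ell}.
\]
For fixed $i$, the pair $(x_{t,i}^{(L)},x_{s,i}^{(L)})$ is therefore Gaussian with a $2\times2$ covariance $\Sigma_{n,i}$. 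Its entries are finite linear combinations, with coefficients that are bounded because the input is bounded, of the diagonal terms $[A^k(A^\ell)^\top]_{ii}$ with $0\le k,\ell\le L$.

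The proof then proceeds in three steps.

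\textbf{Step 1 (conditional mean).} Define $F(\Sigma)=\mathbb{E}[\psi(g_1)\psi(g_2)]$ for $(g_1,g_2)\sim\mathcal{N}(0,\Sigma)$. Then
\[
\mathbb{E}\bigl[K_{\psi,n,\theta}^{(L)}(t,s)\,\big|\,W_r\bigr]=\frac1n\sum_i F(\Sigma_{n,i}).
\]
Because $\psi\in C_b^1$, the map $F$ is Lipschitz in the Cholesky/square-root factor of $\Sigma$. Writing $(g_1,g_2)=\Sigma^{1/2}\xi$ gives $|F(\Sigma)-F(\Sigma')|\le C\|\Sigma^{1/2}-\Sigma'^{1/2}\|$. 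The matrix square root is $1/2$-Hölder on positive semidefinite matrices, so $|F(\Sigma)-F(\Sigma')|\le C'\|\Sigma-\Sigma'\|^{1/2}$. Applying Jensen and Cauchy--Schwarz,
\[
\Bigl|\frac1n\sum_iF(\Sigma_{n,i})-F(\Sigma_\theta^{(L)}(t,s))\Bigr|
\le C'\Bigl(\frac1n\sum_i\|\Sigma_{n,i}-\Sigma^{(L)}_\theta\|^2\Bigr)^{1/4}.
\]
The right-hand side tends to zero in probability by Lemma~\ref{lem:diag}, because $\|\Sigma_{n,i}-\Sigma_\theta^{(L)}\|$ is bounded by a constant times $\max_{k,\ell}|[A^k(A^\ell)^\top]_{ii}-\tau_{k,\ell}|$. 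The limit $F(\Sigma_\theta^{(L)}(t,s))$ is exactly the kernel \eqref{eq:Gaussian-kernel}.

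\textbf{Step 2 (conditional fluctuation).} It remains to show that $K_{\psi,n,\theta}^{(L)}(t,s)$ concentrates around its conditional mean given $W_r$. I would use Gaussian concentration (Poincaré) in $G_{\rm in}$. Set $M_t=\alpha\sum_k A^k u_{t-1-k}^\top\otimes$ (acting on $\mathrm{vec}(G_{\rm in})$), so that $x_t^{(L)}=M_t\,\mathrm{vec}(G_{\rm in})\cdot\sigma_{\rm in}/\sqrt{d_{\rm in}}$. The map $G_{\rm in}\mapsto\frac1n\psi(x_t)^\top\psi(x_s)$ has gradient norm at most
\[
\frac{C}{n}\bigl(\|\psi\|_\infty\|\psi'\|_\infty\bigr)\sqrt n\,(\|M_t\|_{\rm op}+\|M_s\|_{\rm op}).
\]
Hence the conditional variance is $O\bigl(n^{-1}\max_{k\le L}\|A\|_{\rm op}^{2k}\bigr)$. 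By the operator norm bound for Ginibre matrices, $\|W_r\|_{\rm op}\to2\sigma_r$ almost surely, so $\|A\|_{\rm op}=O_{\mathbb{P}}(1)$. The conditional variance is therefore $O_{\mathbb{P}}(1/n)$, and conditional Chebyshev gives
\[
K_{\psi,n,\theta}^{(L)}(t,s)-\mathbb{E}[\,\cdot\,|W_r]\xrightarrow{\mathbb P}0.
\]
This concentration step is what handles the dependence between different units, which Lemma~\ref{lem:diag} alone does not control.

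\textbf{Step 3 (combination).} Adding Steps 1 and 2 yields \eqref{eq:kernel-limit}. Convergence holds in probability with respect to the joint law, because the conditional statements hold on events whose $W_r$-probability tends to one.

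The main obstacle is Step 1. It needs a uniform-in-coordinates control of $F(\Sigma_{n,i})$ even though individual $\Sigma_{n,i}$ may be degenerate or deviate substantially from $\Sigma_\theta^{(L)}$. The Hölder bound on the square root, combined with the averaged $L^2$ statement of Lemma~\ref{lem:diag}, is designed to avoid requiring uniform diagonal convergence. If the Hölder route proves awkward near singular $\Sigma$, I would instead smooth $\psi$ and use the heat-equation identity $\partial F/\partial\Sigma_{12}=\mathbb{E}[\psi'(g_1)\psi'(g_2)]$, which gives ordinary Lipschitz dependence on the off-diagonal entry, and handle the diagonal entries by a similar interpolation.
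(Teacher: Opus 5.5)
Your proposal is correct and follows essentially the same route as the paper's proof. You condition on $A_n$ and get the conditional mean from the $1/2$-H\"older continuity of $\Sigma\mapsto\mathbb{E}[\psi(g_1)\psi(g_2)]$ (common-Gaussian coupling plus $\|\Sigma^{1/2}-(\Sigma')^{1/2}\|\le\|\Sigma-\Sigma'\|^{1/2}$) combined with Lemma~\ref{lem:diag}; you then remove the fluctuations with the conditional Gaussian Poincar\'e inequality in $G_{\rm in}$, using $O_{\mathbb{P}}(1)$ control of $\sum_{k\le L}\|A_n^k\|$. The differences are cosmetic. You control $\|A_n\|$ through almost-sure convergence of the Ginibre operator norm, whereas the paper uses uniformly bounded norm moments, and the square-root H\"older bound already holds for singular $\Sigma$, so your fallback smoothing argument is not needed.
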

\paragraph{RC interpretation and proof idea} 
Conditional on the recurrent matrix, each reservoir unit receives a Gaussian projection of the same propagated input history. Lemma~\ref{lem:diag} shows that the unitwise covariance profiles self-average, and concentration over the input weights removes the remaining finite-width fluctuations. Hence a wide random reservoir induces a deterministic temporal feature geometry at kernel level. On any fixed pilot set, the empirical Gram blocks converge in Frobenius norm to their deterministic counterparts. Complete details are given in Appendix~\ref{app:fixed-context-proofs}.

The nonlinear expectation in~\eqref{eq:Gaussian-kernel} can be
evaluated for the exact feature map $\psi=\tanh$ by deterministic
low-dimensional Gaussian quadrature. For each pair $(t,s)$, the
covariance matrix $\Sigma_\theta^{(L)}(t,s)$ defines a bivariate
Gaussian law, and $K_{\tanh,\theta}^{(L)}(t,s)$ is obtained by
Gauss--Hermite quadrature of
$\mathbb{E}[\tanh(g_t)\tanh(g_s)]$. Since
Theorem~\ref{thm:kernel} holds for any $\psi\in C_b^1$, a closed-form
erf/arcsine surrogate is also available; its uniform entrywise
approximation bound and empirical validation are given in
Remark~\ref{rem:erf-surrogate} and Appendix~\ref{app:surrogate-validation}. The experimental use of the exact
kernel and of this surrogate is specified in Section~\ref{sec:experiments}.

\subsection{Complete history and other recurrent ensembles}
\label{sec:extensions}

A finite context is the implementable selector; the complete-history limit connects it to the fading-memory interpretation of RC. When $\sigma_r<1$, the reservoir forgets remote inputs geometrically, so increasing $L$ recovers the washed-out state with controlled error.
For a bounded input history and $\sigma_r<1$,
define
\begin{equation}
  Q_{\theta}^{(\infty)}(t,s)
  :=
  \alpha^2\frac{\sigma_{\rm in}^2}{d_{\rm in}}
  \sum_{k,\ell=0}^{\infty}
  \tau_{k,\ell}\,
  u_{t-1-k}^{\top}u_{s-1-\ell},
  \label{eq:q-infinite}
\end{equation}
and let $K_{\psi,\theta}^{(\infty)}(t,s)$ be the Gaussian nonlinear
kernel obtained from this covariance as in
\eqref{eq:Gaussian-kernel}. 
The covariance series converges absolutely,
and the finite-context kernels converge geometrically to the
complete-history kernel:
\begin{equation}
  K_{\psi,\theta}^{(L)}(t,s)
  \longrightarrow
  K_{\psi,\theta}^{(\infty)}(t,s),
  \quad L\to\infty.
  \label{eq:kernel-context-limit}
\end{equation}
Because the admissible candidate grid is finite, this convergence is
uniform over the candidates used by the selector.

For a complete bounded input history, let
\begin{equation}
  x_t^{(\infty)}
  :=
  \alpha\sum_{k=0}^{\infty}
  A^k W_{\rm in}u_{t-1-k}
  \label{eq:complete-washed-out-state}
\end{equation}
denote the complete washed-out state. For real Ginibre recurrence with
$\sigma_r<1$, this series is well defined almost surely for all
sufficiently large $n$. Strong convergence controls the powers of the
non-normal transition matrix and hence the finite-width truncation
tail. Consequently, on every fixed finite pilot set,
\begin{equation}
  K_{\psi,n,\theta}^{(\infty)}(t,s)
  :=
  \frac{1}{n}
  \psi\!(x_t^{(\infty)})^\top
  \psi\!(x_s^{(\infty)})
  \xrightarrow[n\to\infty]{\mathbb{P}}
  K_{\psi,\theta}^{(\infty)}(t,s).
  \label{eq:full-history-kernel-limit}
\end{equation}
The quantitative truncation bounds and the full Ginibre argument are
given in Appendix~\ref{app:full-history-proof}.

The same construction applies to recurrent ensembles with deterministic
mixed propagation moments and diagonal self-averaging. The
Haar-orthogonal argument and explicit kernels for cyclic, circulant,
skew-symmetric, and complex-valued recurrences are given in
Appendices~\ref{app:haar-diagonal-proof}--\ref{app:recurrent-matrix-kernels}. The experiments in this paper use real
Ginibre recurrence throughout.

\subsection{Zero-rollout selection on temporal pilot data}
\label{sec:selector}

The kernel now acts as a virtual reservoir during model selection. For each operating point, it captures the large-width temporal feature geometry of the corresponding finite reservoir without constructing recurrent weights or state trajectories.

For each candidate $\theta=(\sigma_r,\sigma_{\rm in},\alpha)$, the
construction above replaces the empirical feature kernel of a large
random reservoir by a deterministic kernel computed directly from a
short labelled pilot sequence. Let $K_{\mathrm{sel},\theta}^{(L)}$ denote the deterministic
selection kernel used in a given experiment. In the experiments below,
this is either the exact $\tanh$ kernel evaluated by quadrature or the
closed-form erf surrogate described above; the precise choice is stated
in Section~\ref{sec:experiments}.
For pilot train and validation index sets $\mathcal{T}$ and
$\mathcal{V}$, define the deterministic kernel blocks entrywise by
\begin{equation}
\begin{aligned}
  \bigl[K_{\theta,TT}^{(L)}\bigr]_{t,s}
  &:=K_{{\rm sel},\theta}^{(L)}(t,s),
  && t,s\in\mathcal{T}, \\
  \bigl[K_{\theta,VT}^{(L)}\bigr]_{t,s}
  &:=K_{{\rm sel},\theta}^{(L)}(t,s),
  && t\in\mathcal{V},\ s\in\mathcal{T},
\end{aligned}
\label{eq:deterministic-kernel-blocks}
\end{equation}
where each entry is computed from $Q_\theta^{(L)}$ by either exact tanh
quadrature or the closed-form erf surrogate, as specified above. For a
ridge parameter $\lambda$, the corresponding validation prediction is
\begin{equation}
  \widehat{Y}_V(\theta,\lambda)
  =K_{\theta,VT}^{(L)}
  \left(K_{\theta,TT}^{(L)}+\lambda I\right)^{-1}Y_T.
  \label{eq:deterministic-kernel-ridge}
\end{equation}

\paragraph{FP selector}

\begin{algorithm}[h]
\small
\DontPrintSemicolon
\KwIn{Pilot sequence $(u_t,y_t)$, split into train/validation;
context length $L$; candidate grid $\Theta$; ridge grid $\Lambda$}
\KwOut{Selected hyperparameters $\theta^\star$ and ranking regularizer
$\lambda^\star_{\rm sel}$}
\BlankLine
\For{each $\theta=(\sigma_r,\sigma_{\rm in},\alpha)\in\Theta$
satisfying the stability guard}{
Compute $\tau_{k,\ell}(\theta)$ via~\eqref{eq:tau}
for $0\leq k,\ell\leq L$;
Build $Q_{\theta}^{(L)}$ via~\eqref{eq:q};
compute $K_{{\rm sel},\theta}^{(L)}$;
form $K_{\theta,TT}^{(L)}$ and $K_{\theta,VT}^{(L)}$
via~\eqref{eq:deterministic-kernel-blocks};
\For{each $\lambda\in\Lambda$}{
Compute $\widehat{Y}_V(\theta,\lambda)$
via~\eqref{eq:deterministic-kernel-ridge};
compute $\NMSE_{\rm val}^{(L)}(\theta,\lambda)$;
}
}
\Return $(\theta^\star,\lambda^\star_{\rm sel})
=\arg\min_{\theta,\lambda}\NMSE_{\rm val}^{(L)}(\theta,\lambda)$;
\caption{Zero-rollout FP hyperparameter selector}
\label{alg:fp-selector}
\end{algorithm}

Algorithm~\ref{alg:fp-selector} presents the holdout-ridge version used
for the synthetic benchmarks. For ETT and Telco, the inner selection of
$\lambda$ is instead performed by generalized cross-validation on the
training block, after which the candidate is scored on the chronological
validation block. The reservoir-candidate ranking procedure is otherwise
unchanged.

All matrices in Algorithm~\ref{alg:fp-selector} are indexed by pilot
time steps rather than reservoir coordinates: no $n\times n$ recurrent
matrix is formed and no finite reservoir is rolled out. Here
$\NMSE_{\rm val}^{(L)}(\theta,\lambda)
=\sum_{t\in\mathcal{V}}\lVert \widehat{y}_t-y_t\rVert^2
/\sum_{t\in\mathcal{V}}\lVert y_t-\bar{y}_{\mathcal{V}}\rVert^2$
is the normalized mean-squared error on the pilot validation block,
with $\bar{y}_{\mathcal{V}}$ the validation target mean. After precomputing input inner products, each candidate kernel
requires $O(T_{\rm pilot}^2L^2)$ lag-pair operations and
$O(T_{\rm pilot}^2)$ memory, followed by kernel ridge regression on the
$T_{\rm train}\times T_{\rm train}$ block. Crucially, none of these
costs depends on the deployment width $n$. The regularizer
$\lambda^\star_{\rm sel}$ is used only for ranking; at deployment,
$\lambda$ is re-selected on the deployment training block over the same
canonical grid, using the kernel-scale feature convention of
Section~\ref{sec:experiments}, so that the selection-stage
and deployment-stage regularizers are directly comparable.

Theorem~\ref{thm:kernel} concerns kernel entries; the selector,
however, returns an $\arg\min$ over a finite grid. The following
corollary records the corresponding decision-level result.

\begin{corollary}[Consistency of FP selection]
\label{cor:selection-consistency}
Assume that the deterministic kernel and the finite reservoir use the
same feature map. Fix finite candidate and ridge grids with strictly
positive ridge parameters. For any fixed context length $L$, if the
deterministic validation score has a unique minimizer, then finite-width
reservoir selection returns the same minimizer with probability tending
to one as $n\to\infty$.

In the stable regime, a unique complete-history optimum is preserved by
all sufficiently large finite contexts, and selection based on the
complete finite-reservoir trajectory converges to this optimum as
$n\to\infty$.
\end{corollary}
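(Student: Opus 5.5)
The plan is a continuous-mapping argument on the finite grid $\Theta\times\Lambda$. It has three parts: continuity of the validation score in the kernel blocks, a strict-gap argument for the $\arg\min$, and the two limit interchanges needed for the complete-history statement.

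\textbf{Step 1: the score is continuous in the kernel blocks.} For fixed $L$ and each $(\theta,\lambda)$ with $\lambda>0$, I regard the validation score as a map
\[
(K_{TT},K_{VT})\mapsto \NMSE\bigl(K_{VT}(K_{TT}+\lambda I)^{-1}Y_T\bigr).
\]
On pairs with $K_{TT}$ positive semidefinite this map is continuous, because $\|(K_{TT}+\lambda I)^{-1}\|\le\lambda^{-1}$. Concretely, the resolvent identity
\[
(K+\lambda I)^{-1}-(K'+\lambda I)^{-1}=(K+\lambda I)^{-1}(K'-K)(K'+\lambda I)^{-1}
\]
gives a local Lipschitz bound in Frobenius norm. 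The empirical blocks $\frac1n Z_TZ_T^\top$ and their deterministic limits are both Gram matrices, hence positive semidefinite. The denominator of the NMSE does not depend on $\theta$ or $\lambda$; I assume it is nonzero, i.e.\ the validation targets are nonconstant.

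\textbf{Step 2: entrywise convergence lifts to the score.} Theorem~\ref{thm:kernel} gives convergence in probability of each kernel entry. The pilot index set is finite, so the empirical blocks $K^{(L)}_{n,\theta,TT}$ and $K^{(L)}_{n,\theta,VT}$ converge in Frobenius norm. By the continuous mapping theorem, each empirical score $S_n(\theta,\lambda)$ then converges in probability to the deterministic score $S(\theta,\lambda)$. The grid is finite, so this convergence is uniform: $\max_{\theta,\lambda}|S_n-S|\xrightarrow{\mathbb P}0$.

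\textbf{Step 3: the gap argument for the $\arg\min$.} Suppose $S$ has a unique minimizer $\theta^\star$, meaning that $\min_\lambda S(\theta^\star,\lambda)$ is strictly smaller than $\min_\lambda S(\theta,\lambda)$ for every $\theta\neq\theta^\star$. Let $\delta>0$ be the gap to the second-best value. On the event $\max|S_n-S|<\delta/2$, the empirical $\arg\min$ over $\theta$ equals $\theta^\star$, and this event has probability tending to one. If uniqueness is meant jointly in $(\theta,\lambda)$, the identical argument applies to the pair. This settles the fixed-$L$ claim when finite-width selection is applied to the $L$-context features.

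\textbf{Step 4: the complete-history claim.} This step needs two limits.

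First, the finite-context deterministic kernels converge entrywise to $K^{(\infty)}_{\psi,\theta}$ as $L\to\infty$, geometrically and uniformly over the finite grid, by \eqref{eq:kernel-context-limit}. By Step 1 the deterministic scores $S^{(L)}$ therefore converge uniformly to $S^{(\infty)}$. Let $\delta_\infty$ be the gap of $S^{(\infty)}$. Once $\max|S^{(L)}-S^{(\infty)}|<\delta_\infty/2$, the minimizer of $S^{(L)}$ is the complete-history optimum.

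Second, \eqref{eq:full-history-kernel-limit} gives convergence in probability of the empirical complete-history kernels, built from the washed-out trajectory, to $K^{(\infty)}_{\psi,\theta}$. Steps 2 and 3 then apply verbatim with $L=\infty$.

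\textbf{Main obstacle.} The real difficulty is the untruncated trajectory started at $x_0=0$, which is what the experiments actually use. Its state differs from the washed-out state by the tail $\alpha\sum_{k\ge t}A^kW_{\rm in}u_{t-1-k}$, i.e.\ the contribution of the pre-history. Two options cover this. One is to interpret the finite pilot with pre-history inputs set to zero, in which case $x_t$ coincides with a finite-context state with $L=t-1$ and Steps 2--3 apply directly. The other is to bound the discrepancy uniformly in $n$ by $\|A^k\|\le C\rho^k$, using the strong-convergence operator-norm control cited in the paper for $\sigma_r<1$. The Lipschitz property of $\psi$ then transfers this bound to the kernel entries, giving $|K_n-K_n^{(\infty)}|\le C'\rho^{t}$ with high probability. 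The stability guard $\sigma_r<1$ is essential at this point; checking that the operator-norm bound holds uniformly in $k$ for the non-normal matrix $A$ is the step I expect to be delicate.
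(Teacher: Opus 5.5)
Your proposal is correct and follows the paper's proof essentially step for step: Lipschitz continuity of the ridge validation score in the kernel blocks via $\lVert (K_{TT}+\lambda I)^{-1}\rVert\le\lambda^{-1}$ and the resolvent identity, uniform convergence over the finite pilot, candidate and ridge grids, the half-gap argument for the $\arg\min$, and then Proposition~\ref{prop:infinite-context} followed by Theorem~\ref{thm:sm-full-history-kernel} for the complete-history claim. Your closing discussion of the untruncated $x_0=0$ trajectory goes beyond what the corollary asserts, because there the ``complete finite-reservoir trajectory'' is the washed-out state $x_t^{(\infty)}$; the uniform power bound $\lVert A_n^m\rVert\le C_R q_R^m$ that you flag as delicate is exactly Proposition~\ref{prop:ginibre-full-history}, so nothing further is needed.
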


The result follows from kernel convergence, continuity of kernel ridge
regression, and stability of a unique minimizer on a finite grid. The
complete statement and proof are given in Appendices~\ref{app:fixed-context-proofs}--\ref{app:full-history-proof}.

The corollary concerns agreement on the pilot validation objective; it
does not assert that the selected configuration minimizes deployment
test error. When several candidates are nearly tied, agreement may
require a larger context length or reservoir width. In the synthetic
experiments, the selector uses the erf surrogate while the finite
reservoir uses $\tanh$, so the corollary does not apply verbatim. The
selection-level impact of this approximation is evaluated separately in
Remark~\ref{rem:erf-surrogate} and Appendix~\ref{app:surrogate-validation}.

\paragraph{Stability guard}
Equation~\eqref{eq:rho-fp} gives the asymptotic spectral radius of the
Ginibre transition matrix. Since the large-width selector cannot detect
realization-specific instabilities of finite Ginibre matrices close to
the stability boundary, we apply the fixed conservative guard
\begin{equation}
  \rho_{\rm FP}(\alpha,\sigma_r)
  \leq 1-\delta,
  \qquad
  \delta=0.05.
  \label{eq:stability-guard}
\end{equation}
The margin is a practical finite-width safeguard rather than a claim of
optimality, and it is kept fixed across deployment widths. We apply the
guard to every candidate grid before selection. For recurrent matrices
whose finite-width spectral boundary is controlled exactly, such as
scaled Haar-orthogonal or cyclic-shift recurrence, the additional margin
is not required.

\section{Experiments}
\label{sec:experiments}

\subsection{Setup}

We evaluate the selectors in three complementary regimes. First, a
suite of ten synthetic sequence benchmarks covers nonlinear temporal
processing, explicit memory, delayed nonlinear transformations, and
forecasting of chaotic dynamical systems. Second, four public ETT-small
datasets (ETTh1, ETTh2, ETTm1, and ETTm2) test open-data real-world
forecasting at hourly and 15-minute sampling rates. Third, a proprietary cellular-traffic dataset tests operational
multivariate forecasting on ten randomly selected network cells. Thus,
the evaluation spans controlled mechanistic tasks, reproducible public
forecasting data, and a cell-replicated operational application. 

\textit{Synthetic benchmarks:}
The synthetic suite contains NARMA-$k$ for
$k\in\{10,20,30,50\}$~\cite{atiya2000new}, linear memory
capacity (MC)~\cite{jaeger2001memory}, the nonlinear
delayed-input task of Inubushi and
Yoshimura~\cite{inubushi2017reservoir}, and four forecasting
tasks generated from chaotic dynamical systems:
Lorenz63~\cite{lorentz1963deterministic}, Mackey--Glass with two
delay/horizon settings~\cite{mackey1977oscillation}, and
5D Lorenz96~\cite{lorenz1996predictability}.
Their deployment protocols and task parameters are
summarised in Table~\ref{tab:tasks}.

\begin{table}[t]
\centering
\scriptsize
\setlength{\tabcolsep}{3pt}
\caption{Synthetic task protocols. Deployment train/test lengths are
counted after washout. Forecast horizons are given in task time steps.}
\label{tab:tasks}
\begin{tabularx}{\columnwidth}{lrrrX}
\toprule
Task & Washout & Deploy train & Test & Input, target, and task parameter\\
\midrule
NARMA10 & 100 & 800 & 800 &
$u_t\!\sim\! U(0,0.5)$; NARMA target, order $k=10$\\
NARMA20 & 100 & 1200 & 1200 &
$u_t\!\sim\! U(0,0.5)$; NARMA target, order $k=20$\\
NARMA30 & 100 & 1200 & 1200 &
$u_t\!\sim\! U(0,0.5)$; NARMA target, order $k=30$\\
NARMA50 & 100 & 1600 & 1600 &
$u_t\!\sim\! U(0,0.5)$; NARMA target, order $k=50$\\
MC & 100 & 2000 & 2000 &
$u_t\!\sim\! U(-1,1)$; delayed targets at lags $1{:}200$\\
Inubushi & 100 & 1000 & 1000 &
$u_t\!\sim\! U(-1,1)$; $y_t=\sin(2u_{t-5})$\\
Lorenz63 & 200 & 2000 & 2000 &
scalar Lorenz63 forecast, $(\sigma,\rho,\beta)=(10,28,8/3)$, horizon $h=25$\\
SF-MG30 & 200 & 2000 & 2000 &
Mackey--Glass forecast, $\tau=30$, horizon $h=10$\\
MG84 & 200 & 1000 & 1000 &
Mackey--Glass forecast, $\tau=17$, horizon $h=84$\\
Lorenz96 & 200 & 2000 & 2000 &
5D Lorenz96 forecast, forcing $F=8$, horizon $h=25$\\
\bottomrule
\end{tabularx}
\end{table}

\paragraph{Real-data benchmarks}
We use the public ETT-small benchmark introduced
in~\cite{zhou2021informer}, comprising ETTh1, ETTh2,
ETTm1, and ETTm2.
For ETT, the input consists of oil temperature (OT)
and calendar covariates, and the target is future OT at
physical horizons of 1, 6, and 12 hours. ETTh1/ETTh2 are sampled hourly, whereas ETTm1/ETTm2 are
sampled every 15 minutes. The proprietary Telco benchmark uses hourly
voice-traffic key performance indicators (KPIs) together with calendar
covariates, and jointly predicts both KPIs over horizons $1{:}12$ for
each of ten cells. All real-data splits are chronological.
Section~\ref{sec:real-data} gives the context lengths, split sizes,
anchor strides, and rollout details.

\paragraph{Zero-rollout and exhaustive selectors}
\textbf{FP} is the proposed zero-rollout selector
(Algorithm~\ref{alg:fp-selector}). It ranks candidate operating points
by deterministic kernel NMSE on labelled task-specific pilot data,
without instantiating a finite reservoir. In the synthetic suite, three
independently generated pilot sequences are used and each candidate is
assigned its largest validation NMSE across the three pilots; this
conservative aggregation penalizes pilot-sensitive configurations. The
three pilot sequences play, for the deterministic FP selector, the role
that reservoir seeds play for the simulation-based selectors: FP is
replicated over independently sampled task pilots, whereas the empirical
selectors are replicated over finite-reservoir realizations.
\textbf{Memory}$_n$ is a task-agnostic empirical proxy. At each grid
point and selection width $n$, a ridge readout reconstructs
$D=2n$ delayed samples of a generic scalar input $u_t\sim U(-1,1)$
from the reservoir features. Its score follows Jaeger's memory-capacity
criterion~\cite{jaeger2001memory}:
\begin{equation}
  \mathrm{MemScore}(\theta) = \frac{1}{D}\sum_{d=1}^{D}
  \max\!\left(0,\;1 - \frac{\sum_t(u_{t-d}-\hat{u}_{t-d})^2}
  {\sum_t(u_{t-d}-\bar{u}_d)^2}\right),
  \label{eq:memory}
\end{equation}
where $\bar{u}_d$ is the sample mean of the lag-$d$ target. Scores are
averaged over three reservoir seeds. \textbf{Direct}$_n$ evaluates the
downstream validation task at width $n$ and ranks candidates by their
mean score over three reservoir seeds. \textbf{ESN Direct}$_n$ uses the
same task-direct protocol with the standard nonlinear ESN recurrence
$x_{t+1}=(1-\alpha)x_t+\alpha\tanh(W_rx_t+W_{\rm in}u_t)$.

\paragraph{Budgeted selectors}
We additionally compare three selectors under a matched retained
candidate budget $K$. \textbf{FP-K} first ranks the admissible grid with
FP at zero rollout cost and then applies Direct$_{500}$ only to the top
$K$ candidates; we write FP-$5$, FP-$10$, etc.\ for specific budgets. \textbf{Random-K} samples $K$ admissible candidates
uniformly without replacement and returns the candidate with the best
Direct$_{500}$ validation score. \textbf{TPE-K}, a sequential Bayesian-optimization baseline,
uses the outcomes of previous trials to direct later evaluations toward
promising regions of the search space. Specifically, it applies the
tree-structured Parzen estimator (TPE)~\cite{bergstra2011algorithms}
to the discrete $(\sigma_r,\sigma_{\rm in},\alpha)$ search axes for
$K$ task-direct trials and returns the best observed candidate. 

Each task-direct trial
uses three selection seeds, so one run of any budgeted selector spends
$3K$ finite-reservoir rollouts; FP's deterministic pre-ranking adds
none. Random-K and TPE-K are repeated for 20 independent selection draws on
the synthetic suite and 10 draws on ETT and Telco, reflecting the higher
cost of the forecasting evaluations. Each individual run still receives
the same matched budget of $K$ candidates; repetitions are used only to
estimate stochastic-selector variability.

\paragraph{Regularization protocol}
All ridge regressions in this paper (FP kernel selection, empirical
selection, and deployment) use a single canonical regularization grid
$\lambda\in\{10^{-12},10^{-11},\ldots,10^2\}$ expressed on the kernel
scale. Finite-reservoir readouts are fitted on width-normalized features
$z_t/\sqrt{n}$, so that the empirical Gram matrix $\tfrac{1}{n}ZZ^\top$
and the deterministic FP kernel live on the same scale and a given
$\lambda$ corresponds to the same effective regularization at every
stage and every reservoir width. The selection rule for $\lambda$
differs by regime. On the synthetic benchmarks, where the
data-generating processes are stationary and fresh i.i.d.-driven
sequences can be generated, $\lambda$ is selected by the holdout-ridge
protocol on this grid. On the real-data ETT and Telco benchmarks,
$\lambda$ is selected by generalized cross-validation (RidgeCV) on the
corresponding training block over the same grid, both inside
finite-reservoir selection rollouts and at deployment. This uses the
full training block for regularization selection, avoiding an
additional dependence on a single chronological validation window in the
nonstationary real-data setting.

\paragraph{FP kernel implementation}
All finite-reservoir deployments use coordinate-wise $\tanh$ features.
For FP selection, we use the exact $\tanh$ kernel, evaluated by
Gauss--Hermite quadrature, on all real-data benchmarks (ETT and
Telco). For the synthetic benchmark suite and the synthetic timing
sweeps, we use the closed-form erf surrogate from Remark~\ref{rem:erf-surrogate}.
This surrogate is used only as a computational accelerator in the
controlled synthetic setting; Appendix~\ref{app:surrogate-validation} verifies that matched
erf-surrogate and exact-$\tanh$ selections have negligible
selection-level impact on the checked synthetic settings.

\paragraph{Synthetic benchmark protocol}
The synthetic Cartesian grid contains $6\,480$ raw points. Applying the
fixed stability guard~\eqref{eq:stability-guard} leaves $5\,220$
admissible candidates. Full-grid empirical selection at
$n_{\rm select}=500$ therefore uses
$5220\times3=15\,660$ finite-reservoir rollouts per task, or
$156\,600$ across the ten-task suite. FP ranks the same admissible grid
with zero finite-reservoir rollouts. Its pilot sequences are generated
after the task-specific warmup and have length $T_{\rm pilot}=500$,
split into 333 training and 167 validation time steps; the deterministic
selection kernel uses context $L=50$. The washout lengths in
Table~\ref{tab:tasks} apply only to finite-reservoir deployment, since
FP selection does not instantiate or roll out a reservoir.

After selection, $\theta^\star$ is fixed and evaluated with new
reservoir seeds. Intermediate-width trends use 3 deployment seeds at
$n\in\{1\,000,3\,000,5\,000,10\,000\}$; the $n=20\,000$ synthetic
comparison uses 10 deployment seeds. In the posthoc synthetic budget
sweep, selected configurations use 3 deployment seeds.

\paragraph{Real-data benchmark protocol}
\label{par:real-data_protocol}
All forecasting experiments use physical horizons of 1, 6, and 12
hours. A \emph{forecast anchor} is a time index at which the preceding
input history is used to predict the future target vector, and hence
defines one forecasting example. Because nearby anchors in the long ETT
series produce strongly overlapping examples, we retain every $s$th
valid anchor, using the same anchor indices for all selectors.

For each anchor, the finite-reservoir state is obtained by a local
rollout over the input history represented by the FP kernel. In the FP
construction, $L$ is the maximum retained lag, so lags $0,\ldots,L$
are included and each state uses $L+1$ input samples. Thus, $L=96$
corresponds to 97 hourly samples spanning a maximum lag of 96 hours,
whereas $L=384$ corresponds to 385 quarter-hourly samples over the
same maximum-lag span. The FP kernel is evaluated only between the
selected forecast anchors.

For ETT, the input contains oil temperature (OT) and calendar
covariates, and the target is future OT. ETTh is sampled hourly and
uses $L=96$, horizons $h\in\{1,6,12\}$, and anchor stride 43.
ETTm is sampled every 15 minutes and uses $L=384$, horizons
$h\in\{4,24,48\}$, and anchor stride 173. Hence both variants use a
96-hour maximum retained lag and the same physical forecast horizons.
ETTh uses $8\,640/2\,880/2\,880$ train/validation/test rows, while
ETTm uses $34\,560/11\,520/11\,520$ rows.

For the cellular-traffic task, the input combines data and voice KPIs
with sine--cosine calendar covariates, and the target is the joint
forecast of both KPIs over horizons $1{:}12$. Each cell uses 841
training, 360 validation, and 932 test rows, with $L=96$. Because the
Telco series are shorter than ETT, every valid anchor is retained.

In all forecasting experiments, one readout jointly predicts all
horizons from a single reservoir state. Accordingly, one configuration
$\theta^\star$ is selected per ETT dataset or per Telco cell using the
validation score aggregated over all horizons. Every selector,
including full-grid Direct$_{500}$, optimizes this same aggregate
objective. Selecting horizon-specific configurations is also possible
by restricting the targets to one horizon, but is not considered here.

The ETT and Telco Cartesian grids each contain 245 raw candidates, of
which 154 satisfy the stability guard. All selectors operate on this
same admissible set, so exhaustive empirical selection requires
$154\times3=462$ finite-reservoir rollouts per ETT dataset and per
Telco cell.

For FP selection, the chronological training and validation blocks
define the \emph{pilot split}: the deterministic kernel model is fitted
on the training anchors and evaluated on the validation anchors. Since
each real-data series is fixed, a single pilot split is used rather than
averaging over independently generated pilot sequences.

All real-data models are deployed at width $n=20\,000$ with three fresh
reservoir seeds. For ETT, reported deviations are computed over
deployment seeds for deterministic FP-K and over independent selection
draws for Random-K and TPE-K, after averaging deployment seeds within
each draw. For Telco, results are first averaged over selection draws
within each cell and then summarized by the mean and standard deviation
across the 10 cells.

\paragraph{Metrics and statistical comparisons}
Performance is reported as $1-\NRMSE$; higher is better. 
The source of averaging and variability (tasks, deployment seeds, selection draws, or cells) is specified in the corresponding experimental protocol.
Paired selector comparisons use Wilcoxon
signed-rank tests over tasks or cells, with Holm correction across the
reported pairwise comparisons. Stochastic budget repetitions are used to
characterise selection variability, not as additional independent tasks.

\subsection{Synthetic benchmark results}
\label{sec:synthetic-results}

The synthetic suite evaluates three complementary uses of the FP
selector: zero-rollout selection when no simulation budget is available,
width-portable deployment as the reservoir grows, and FP-based
pre-screening when a small rollout budget can be spent. The paragraphs
below address these in turn.

\paragraph{Zero-rollout selection}
Direct$_{500}$ attains the highest mean deployment score across the
ten-task suite, reaching $0.774$ at $n=20\,000$
(Table~\ref{tab:avg-results}). This is expected: it directly evaluates
finite-reservoir downstream validation performance during selection.
FP reaches $0.772$ while using zero finite-reservoir search rollouts.
The task-level comparison in Figure~\ref{fig:bars} shows that FP is
best or tied-best on MC, NARMA30, NARMA50, and Lorenz96, and remains
close to the best selector on several other tasks. The largest gaps
occur on Lorenz63 and Inubushi, where finite-reservoir or ESN direct
selection offers a clearer advantage. FP also exceeds
ESN Direct$_{500}$ on average at $n=20\,000$ ($0.772$ versus $0.734$),
showing that the analytically tractable linear-recurrence setting remains
competitive in this suite. Full task-level scores are in
Appendix~\ref{sec:full-results}.

\begin{table*}[t]
\centering
\small
\setlength{\tabcolsep}{2.2pt}
\caption{
Average deployment score across ten synthetic benchmarks under the holdout-ridge protocol. Scores are $1-\NRMSE$; higher is better. Mean $\pm$ standard deviation is across tasks. Best and second-best results are bold and underlined, respectively.}
\label{tab:avg-results}
\begin{tabular}{lrrrrrr}
\toprule
Method & Selection rollouts & $n=1\,000$ & $n=3\,000$ & $n=5\,000$ & $n=10\,000$ & $n=20\,000$\\
\midrule
FP & 0 & 0.692$\pm$0.171 & \underline{0.744$\pm$0.159} & \underline{0.756$\pm$0.158} & \underline{0.768$\pm$0.158} & \underline{0.772$\pm$0.158}\\
Memory$_{500}$ & 156{,}600 & 0.619$\pm$0.194 & 0.668$\pm$0.193 & 0.683$\pm$0.197 & 0.698$\pm$0.188 & 0.703$\pm$0.186\\
Direct$_{500}$ & 156{,}600 & \textbf{\boldmath 0.720$\pm$0.178} & \textbf{\boldmath 0.749$\pm$0.164} & \textbf{\boldmath 0.762$\pm$0.158} & \textbf{\boldmath 0.769$\pm$0.157} & \textbf{\boldmath 0.774$\pm$0.156}\\
ESN Direct$_{500}$ & 156{,}600 & \underline{0.708$\pm$0.203} & 0.725$\pm$0.195 & 0.727$\pm$0.188 & 0.731$\pm$0.188 & 0.734$\pm$0.186\\
\bottomrule
\end{tabular}
\end{table*}

\begin{figure}[t]
\centering
\includegraphics[width=\columnwidth]{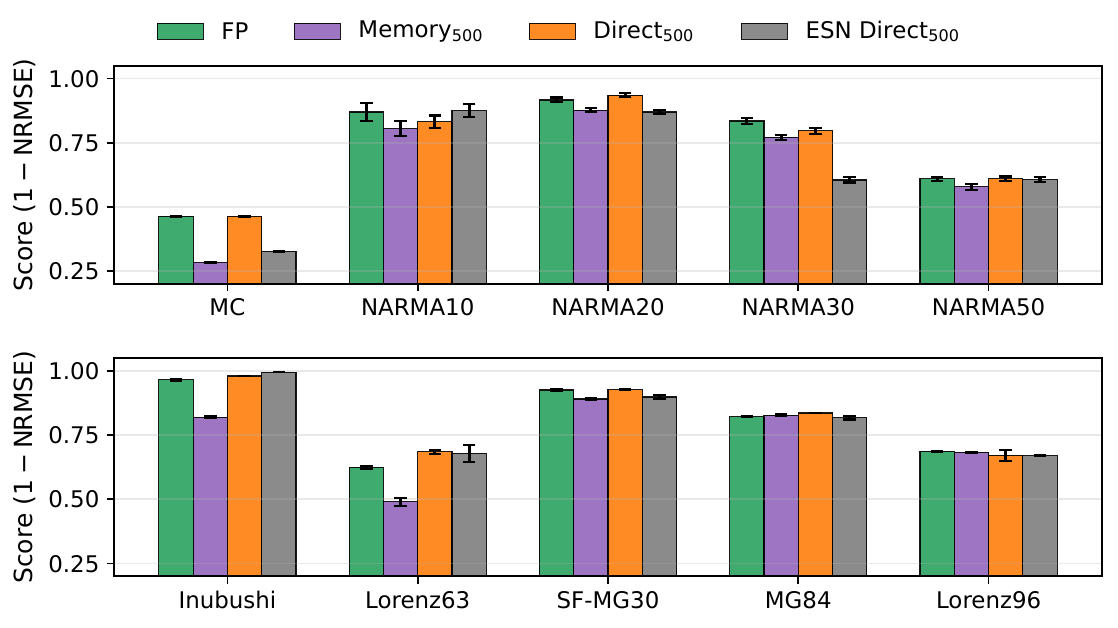}
\caption{Deployment score at $n=20\,000$ across the ten-task synthetic suite under the holdout-ridge protocol. Scores are $1-\NRMSE$; higher is better. Error bars show the standard deviation over deployment seeds.}
\label{fig:bars}
\end{figure}

\paragraph{Average-rank comparison}
Figure~\ref{fig:cd-cost} gives the rank-based
comparison at $n=20\,000$. Direct$_{500}$ has the best mean rank
($1.65$), followed by FP ($2.05$), ESN Direct$_{500}$ ($2.90$), and
Memory$_{500}$ ($3.40$). After Holm correction at $\alpha=0.05$, FP and
Direct$_{500}$ are significantly better than Memory$_{500}$, while the
pairwise differences among FP, Direct$_{500}$, and ESN Direct$_{500}$ are
not significant. The supported conclusion is that FP remains competitive
with the task-direct simulation-based selectors while eliminating
finite-reservoir search rollouts.

\begin{figure}[t]
\centering
\includegraphics[width=\columnwidth]
{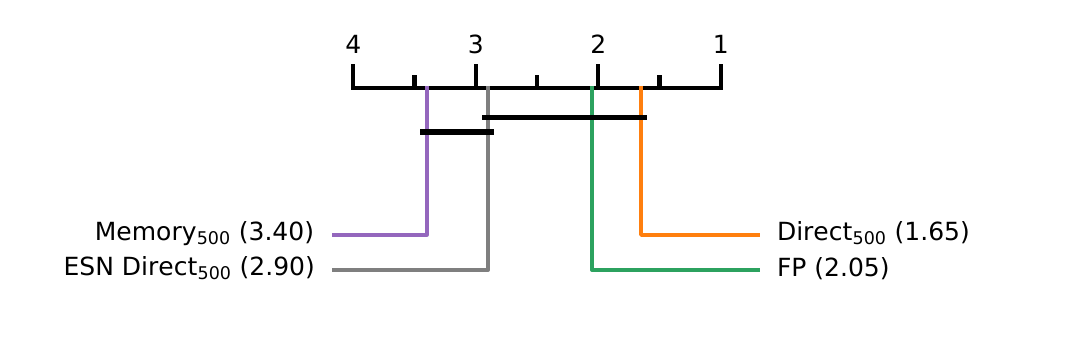}
\caption{Average ranks across the ten synthetic tasks at
$n=20\,000$. Lower is better. Horizontal cliques indicate no
significant pairwise difference under Wilcoxon signed-rank tests with
Holm correction ($\alpha=0.05$).}
\label{fig:cd-cost}
\end{figure}

\paragraph{Width portability and selection cost}
The predictive comparison above is obtained under sharply different
selection costs. Figure~\ref{fig:comp_time_vs_width} reports measured
full-grid selection time at several selection widths. FP does not take
$n_{\rm select}$ as an algorithmic input: its cost depends on the pilot
length, context length, candidate grid, and ridge grid, but not on
reservoir width. We therefore summarize repeated FP timings by a
horizontal mean line; the variation among the individual measurements
reflects run-to-run timing variability rather than width dependence.
The empirical selectors, by contrast, instantiate, roll out, and
evaluate finite reservoirs at every candidate, so their measured cost
increases with $n_{\rm select}$; the connecting lines are only guides
to the eye.

At deployment, the same FP operating point is reused without rerunning
the selector. Its observed mean score increases from $0.692$ at
$n=1000$ to $0.772$ at $n=20\,000$, while the gap to
Direct$_{500}$ narrows from $0.028$ to $0.002$
(Table~\ref{tab:avg-results}). This narrowing is consistent with the
large-width construction: as deployment width increases, the empirical
feature geometry approaches the deterministic kernel targeted by FP.
The result does not imply monotone improvement at every finite width.
The task-level trends in Appendix~\ref{sec:full-results} are consistent with this
interpretation: FP is stable or improving over the reported widths on
every task, whereas selectors tuned at $n_{\rm select}=500$ can exhibit
non-monotone trends on individual tasks.

\begin{figure}[t]
\centering
\includegraphics[width=\columnwidth]
{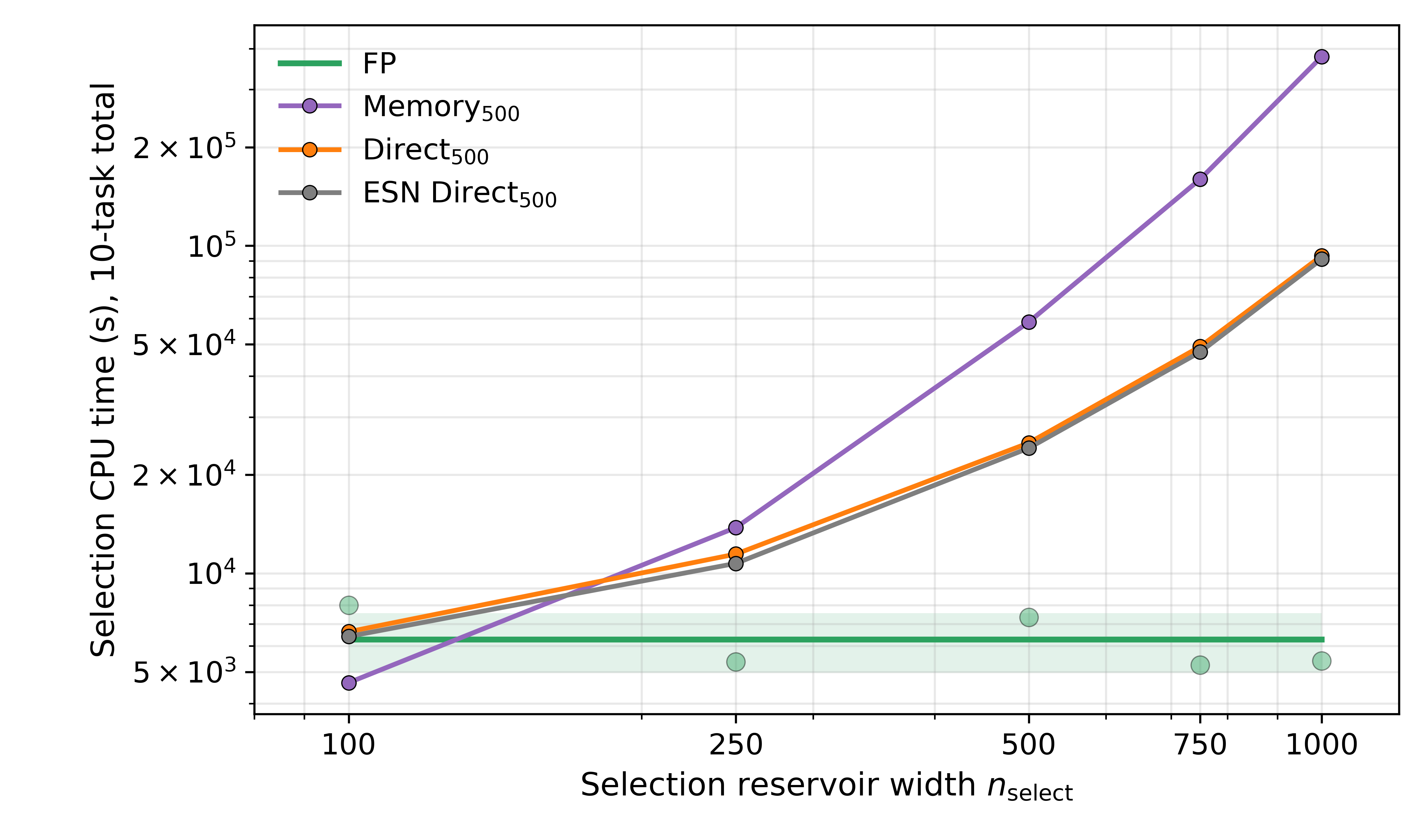}
\caption{Measured aggregate selection CPU time over the ten synthetic
tasks as a function of selection width $n_{\rm select}$. All
finite-reservoir selector points are measured; connecting lines are
guides to the eye and do not represent fitted complexity laws. FP does
not take $n_{\rm select}$ as an input, so its repeated measured runtimes
are summarized by a horizontal mean line, with individual markers and a
shaded $\pm1$ standard-deviation band showing run-to-run timing
variability. FP uses the closed-form erf surrogate employed for
synthetic selection; see Sec.~\ref{sec:fp-kernel}. Lower is better.}
\label{fig:comp_time_vs_width}
\end{figure}

\paragraph{Budgeted screening}
The zero-rollout selector can also serve as a pre-screen when a small
simulation budget is available. FP first ranks the full grid without
finite-reservoir rollouts; Direct$_{500}$ is then evaluated only on the
top $K$ candidates. Table~\ref{tab:fp-screened-synthetic} compares this
FP-K strategy with Random-K and TPE-K, using the same retained candidate
budget but reporting Random/TPE over 20 draws to estimate baseline
variability. FP-K gives the strongest mean performance at every reported budget. The
advantage is largest at small $K$ and shrinks as $K$ grows, indicating
convergence of the budgeted selectors. After Holm correction across the
eight FP-vs-baseline budget comparisons, the paired differences are
directional rather than significant. The main advantage of FP-K is
therefore budget efficiency and determinism: at $K=250$, FP-K matches
the full-grid Direct$_{500}$ reference within sampling variation
($0.777$ versus $0.774$) using $750$ finite-reservoir selection rollouts
per task ($4.8\%$ of the full grid). Random-K and TPE-K use the same
per-run candidate budget, but their reported means are estimated over
20 stochastic draws and reach $0.750$ and $0.769$, respectively. The
screened protocol therefore provides an intermediate operating mode
between zero-rollout selection and exhaustive task-direct search, and
motivates the low-budget Telco experiments (FP-$5$, FP-$10$) below.

\begin{table}[t]
\centering
\small
\setlength{\tabcolsep}{3pt}
\caption{
Budgeted synthetic comparison at $n=20\,000$. Rows indicate the retained candidate budget $K$; scores are deployment $1-\NRMSE$ (higher is better), averaged across the ten tasks. Percentages are relative to exhaustive Direct500 selection.}
\label{tab:fp-screened-synthetic}
\begin{tabular}{rrrrr}
\toprule
$K$ & Ten-task rollouts & FP-K  & Random-K  & TPE-K \\
\midrule
25 & 750 (0.5\%) & \textbf{\boldmath 0.774$\pm$0.159} & 0.669$\pm$0.234 & 0.660$\pm$0.244\\
50 & 1{,}500 (1.0\%) & \textbf{\boldmath 0.779$\pm$0.162} & 0.701$\pm$0.232 & 0.716$\pm$0.214\\
100 & 3{,}000 (1.9\%) & \textbf{\boldmath 0.775$\pm$0.168} & 0.732$\pm$0.219 & 0.742$\pm$0.209\\
250 & 7{,}500 (4.8\%) & \textbf{\boldmath 0.777$\pm$0.166} & 0.750$\pm$0.194 & 0.769$\pm$0.167\\
\midrule
\multicolumn{2}{l}{Full Direct$_{500}$ reference} & \multicolumn{3}{c}{0.774$\pm$0.156}\\
\bottomrule
\end{tabular}
\end{table}

\paragraph{Selection-length sensitivity}
An auxiliary coarse-grid sweep over the selection-sequence length
(Appendix~\ref{sec:timing}) shows that FP is close to its highest peak already at
$T_{\rm select}=250$ and reaches $0.771$ by $T_{\rm select}=500$,
whereas direct selection continues to benefit from longer sequences
as additional
task-specific validation data provide a more reliable ranking of the
candidate reservoirs. Memory$_{500}$ remains constant because its
task-agnostic memory proxy selects the same candidate at every tested
selection length. FP is therefore efficient
not only in finite-reservoir rollouts but also in the length of the
labelled pilot sequence required for selection.

\subsection{Real-data benchmark results}
\label{sec:real-data}

We complement the synthetic suite with real-data forecasting experiments
derived from five datasets: the four public ETT-small datasets
(ETTh1, ETTh2, ETTm1, and ETTm2) and an operational cellular-traffic dataset for ten randomly selected network cells. The corresponding protocol details are given in the real-data benchmark
protocol of Section~\ref{sec:experiments}. The ETT datasets provide reproducible public benchmarks, while the Telco task tests the selector in an operational multivariate setting where finite-reservoir validation rollouts are costly. We therefore focus on two questions: whether zero-rollout FP selection remains competitive with exhaustive task-direct search, and whether FP-based screening allocates small rollout budgets more effectively than Random-K and TPE-K.

\paragraph{Zero-rollout selection}
Table~\ref{tab:forecasting-summary} reports dataset-average scores for
all selectors; horizon-level results are in Appendix~\ref{sec:forecasting-horizons}. On
ETTm1, zero-rollout FP is the strongest of the
linear-recurrence selectors ($0.505$ versus $0.499$ for full-grid
Direct$_{500}$), while ESN Direct$_{500}$ is slightly higher ($0.508$).
At the longest ETTm horizons, the task-agnostic Memory$_{500}$ proxy is
the strongest selector (Appendix~\ref{sec:forecasting-horizons}). This reflects a
short-versus-long-horizon trade-off under aggregate selection. On
ETTm2, for example, Memory$_{500}$ improves the 6- and 12-hour scores
to $0.474$ and $0.501$, compared with $0.456$ and $0.442$ for FP, but
reduces the 1-hour score from $0.862$ to $0.762$. On the ETT dataset
averages, Memory$_{500}$ remains below the task-informed selectors
throughout (Table~\ref{tab:forecasting-summary}). These results also
indicate that the ETT candidate landscape is relatively flat: several
operating points have nearly identical deployment scores, so stochastic
search can catch up as the budget grows.

Telco is different. On the three representative horizons shown in
Table~\ref{tab:forecasting-summary}, zero-rollout FP trails full-grid
Direct$_{500}$ ($0.572$ versus $0.635$), and the task-agnostic
Memory$_{500}$ proxy is also higher ($0.619$). This shows that
zero-rollout ranking alone is not uniformly sufficient on real data.
However, the budgeted aggregate comparison in
Table~\ref{tab:forecasting-budgeted} shows that only a very small
task-direct budget is needed to close the gap: FP-$5$ reaches
$0.622\pm0.112$ against the full-grid Direct$_{500}$ reference
$0.621\pm0.113$, using 15 rather than 462 selection rollouts per cell.

\begin{table}[t]
\centering
\small
\setlength{\tabcolsep}{3pt}
\caption{Three-horizon real-data forecasting summary at $n=20\,000$.
Each entry is the mean$\pm$standard deviation over the three reported
physical horizons (1, 6, and 12 h). Horizon-level scores, whose
deviations are instead over deployment seeds (ETT) or cells (Telco),
are given in Sec.~S-VII of the SM. The Telco aggregate over horizons
$1{:}12$ is reported in Table~\ref{tab:forecasting-budgeted}. Scores are
$1-\NRMSE$; higher is better.}
\label{tab:forecasting-summary}
\begin{tabular}{lcccc}
\toprule
Dataset & FP & Memory$_{500}$ & Direct$_{500}$ & ESN Direct$_{500}$\\
\midrule
ETTh1
& 0.556$\pm$0.16
& 0.522$\pm$0.11
& \textbf{\boldmath 0.567$\pm$0.17}
& \underline{0.564$\pm$0.18}\\
ETTh2
& \textbf{\boldmath 0.664$\pm$0.15}
& 0.525$\pm$0.05
& \textbf{\boldmath 0.664$\pm$0.15}
& \underline{0.662$\pm$0.15}\\
ETTm1
& \underline{0.505$\pm$0.19}
& 0.498$\pm$0.18
& 0.499$\pm$0.19
& \textbf{\boldmath 0.508$\pm$0.20}\\
ETTm2
& \textbf{\boldmath 0.587$\pm$0.24}
& \underline{0.579$\pm$0.16}
& \textbf{\boldmath 0.587$\pm$0.24}
& \textbf{\boldmath 0.587$\pm$0.24}\\
Telco
& 0.572$\pm$0.03
& \underline{0.619$\pm$0.05}
& \textbf{\boldmath 0.635$\pm$0.04}
& 0.607$\pm$0.03\\
\bottomrule
\end{tabular}
\end{table}

\paragraph{Budgeted screening}
As on the synthetic suite, FP-K ranks the grid deterministically at
zero rollout cost and spends the task-direct budget only on the
retained candidates. Table~\ref{tab:forecasting-budgeted} compares FP-K
with Random-K and TPE-K at matched candidate budgets: each individual
selector evaluates $K$ candidates using three reservoir seeds, hence
$3K$ finite-reservoir rollouts. Random-K and TPE-K are repeated for ten
independent draws to estimate their variability; these repetitions use
$30K$ reported rollouts in total but do not increase the candidate
budget available to any individual selector. Their standard deviations
therefore quantify selection instability: a single stochastic run has
the same $3K$ budget but can be substantially below the reported mean.

\begin{table*}[t]
\centering
\small
\setlength{\tabcolsep}{5.0pt}
\caption{
Budgeted real-data forecasting comparison at $n=20\,000$. Each selector evaluates $K$  candidates using $3K$ finite-reservoir rollouts. $\dagger$ denotes FP-K significantly outperforming both Random-K and TPE-K after Holm correction.}
\label{tab:forecasting-budgeted}
\begin{tabular}{llccccc}
\toprule
Dataset & Selector
& \shortstack{$K=5$\\15 rollouts}
& \shortstack{$K=10$\\30 rollouts}
& \shortstack{$K=25$\\75 rollouts}
& \shortstack{$K=50$\\150 rollouts}
& Full Direct$_{500}$\\
\midrule
ETTh1
 & FP-K     & \textbf{0.567$\pm$0.00035} & 0.567$\pm$0.00035
            & 0.567$\pm$0.00035 & 0.567$\pm$0.00035 & 0.567$\pm$0.00035\\
 & Random-K & 0.561$\pm$0.008 & 0.563$\pm$0.009
            & \textbf{0.568$\pm$0.002} & \textbf{0.567$\pm$0.002} & \\
 & TPE-K    & 0.561$\pm$0.013 & \textbf{0.571$\pm$0.007}
            & 0.567$\pm$0.005 & 0.567$\pm$0.005 & \\
\addlinespace[2pt]
ETTh2
 & FP-K     & \textbf{0.665$\pm$0.00049} & \textbf{0.665$\pm$0.00049}
            & \textbf{0.665$\pm$0.00049} & \textbf{0.665$\pm$0.00049}
            & 0.665$\pm$0.00049\\
 & Random-K & 0.609$\pm$0.047 & 0.645$\pm$0.015
            & 0.654$\pm$0.011 & 0.661$\pm$0.003 & \\
 & TPE-K    & 0.626$\pm$0.039 & 0.635$\pm$0.032
            & 0.639$\pm$0.026 & 0.660$\pm$0.007 & \\
\addlinespace[2pt]
ETTm1
 & FP-K     & \textbf{0.501$\pm$0.00079} & \textbf{0.499$\pm$0.00086}
            & 0.499$\pm$0.00086 & 0.499$\pm$0.00086
            & 0.499$\pm$0.00086\\
 & Random-K & 0.493$\pm$0.013 & 0.495$\pm$0.008
            & 0.498$\pm$0.009 & 0.499$\pm$0.007 & \\
 & TPE-K    & 0.493$\pm$0.014 & 0.497$\pm$0.008
            & \textbf{0.503$\pm$0.006} & \textbf{0.501$\pm$0.004} & \\
\addlinespace[2pt]
ETTm2
 & FP-K     & \textbf{0.587$\pm$0.00020} & \textbf{0.587$\pm$0.00020}
            & \textbf{0.587$\pm$0.00020} & \textbf{0.587$\pm$0.00020}
            & 0.587$\pm$0.00020\\
 & Random-K & 0.579$\pm$0.014 & 0.581$\pm$0.012
            & 0.579$\pm$0.004 & 0.584$\pm$0.002 & \\
 & TPE-K    & 0.569$\pm$0.011 & 0.571$\pm$0.008
            & 0.583$\pm$0.005 & 0.584$\pm$0.003 & \\
\addlinespace[2pt]
Telco
 & FP-K     & \textbf{0.622$\pm$0.112}\textsuperscript{$\dagger$}
            & \textbf{0.622$\pm$0.110}\textsuperscript{$\dagger$}
            & \textbf{0.623$\pm$0.110}\textsuperscript{$\dagger$}
            & \textbf{0.621$\pm$0.113} & 0.621$\pm$0.113\\
 & Random-K & 0.548$\pm$0.170 & 0.584$\pm$0.142
            & 0.598$\pm$0.136 & 0.617$\pm$0.116 & \\
 & TPE-K    & 0.572$\pm$0.144 & 0.593$\pm$0.120
            & 0.602$\pm$0.127 & 0.612$\pm$0.114 & \\
\bottomrule
\end{tabular}
\end{table*}

On the public ETT benchmarks, FP-K recovers the full-grid
Direct$_{500}$ operating point already at $K=5$ on ETTh1, ETTh2, and
ETTm2, and selects the same operating point at every reported budget.
TPE-K attains its strongest ETTh1 test score at $K=10$, while the three
screening methods converge at larger budgets. On ETTh2, Random-K and
TPE-K improve with $K$ but remain below FP-K even at $K=50$; on ETTm2
they trail at small budgets and approach FP-K only at larger $K$.

The operational Telco task is more discriminative and provides the
strongest evidence for pilot-informed screening. Full-grid
Direct$_{500}$ reaches $0.621\pm0.113$ across cells, at
$154\times3=462$ rollouts per cell. FP-$5$ already matches or slightly exceeds this
reference ($0.622\pm0.112$) with 15 rollouts, compared with $0.548$
for Random-K and $0.572$ for TPE-K at the same budget, and FP-K
remains between $0.621$ and $0.623$ at every reported budget, compared
with $0.584$ and $0.593$ for the baselines at $K=10$.

FP-K beats both baselines on all ten cells at $K=5$, $K=10$, and
$K=25$; after Holm correction over the eight FP-vs-baseline budget
comparisons, the adjusted values satisfy $p_{\rm Holm}\leq0.0078$ in
all three cases. FP-50 recovers the full-grid Direct$_{500}$ operating
point with 150 rather than 462 rollouts, and by $K=50$ the methods have
largely converged and the corrected differences are no longer
significant.

Taken together, the forecasting results show that FP ranking is most
valuable when validation rollouts are scarce. It either recovers the
full-grid operating point immediately, as on several ETT datasets, or
allocates a small task-direct budget to substantially better candidates
than Random-K and TPE-K, as on the operational Telco task.

\section{Discussion}
\label{sec:discussion}
 
\paragraph{ 
Width-independent tuning of reservoir dynamics}
The large-width kernel relocates selection cost from the reservoir to
the pilot sequence. Empirical selectors instantiate a reservoir for
every grid point; FP instantiates none. This has two concrete
consequences. First, the selection cost is independent of reservoir
width: the same $\theta^\star$ computed at pilot length
$T_{\rm pilot}=500$ can be deployed at any $n$ without rerunning
selection. In our experiments, its mean score improves over the
reported widths, consistently with convergence toward the large-width
kernel. Second, cost scales with pilot length rather than with
reservoir width. The
selection-length study (Appendix~\ref{sec:timing}) shows that FP plateaus at
$T_{\rm select}=500$, while task-direct selection continues to improve;
FP is therefore data-efficient as well as simulation-efficient.

\paragraph{Deterministic screening versus stochastic search}
Random-K and TPE-K are important baselines because they represent the
finite-rollout strategies a practitioner would normally try instead of
full-grid selection. Their behavior also clarifies the role of FP. FP is
not a replacement for high-budget sequential search; rather, it provides
a deterministic ranking of the full candidate grid before any reservoir
is instantiated. The mechanism behind the low-budget advantage is that
the FP ranking is task-informed from the outset: the pilot kernel
already encodes the temporal structure of the downstream task, so the
first $K$ candidates evaluated are concentrated in the relevant region
of the grid. Random-K carries no task information, and TPE-K must spend
its earliest trials learning the response surface before it can exploit
it. The budgeted results show that this ranking is most
valuable at small $K$. On the synthetic suite, FP-K gives the strongest
mean score at every reported budget and matches the full-grid
Direct$_{500}$ reference within sampling variation using $4.8\%$ of the full-grid rollout budget. On Telco, exact-$\tanh$ FP-K is already at the full-grid
Direct$_{500}$ level with $K=5$ and remains stable across the reported
budgets, while Random-K and TPE-K require larger budgets to approach the
same operating point. The low-budget differences against both stochastic
baselines are significant for $K\leq25$ after Holm correction
($p_{\rm Holm}\leq0.0078$).
 On flatter ETT landscapes, the stochastic baselines catch up
as $K$ grows, but their reported means are expectations over stochastic
selector runs; a single run receives the same $3K$ candidate budget and
can vary substantially, as reflected by the standard deviations. Thus,
the practical distinction is not only cost, but also determinism and
stability of the selected operating point.

\paragraph{
Recurrent topology as a kernel design variable}
The theory requires two properties of the recurrent matrix:
asymptotic operator-norm control and diagonal self-averaging of the
entries of $A_n^k(A_n^\ell)^\top$ (Lemma~\ref{lem:diag}). Ginibre
matrices satisfy both with probability tending to one. The same
framework extends to cyclic-shift and Haar-orthogonal recurrences, and
more generally to ensembles with deterministic cross-lag propagation
coefficients; the corresponding statements and coefficient catalogue are
provided in Appendices~\ref{app:haar-diagonal-proof}--\ref{app:recurrent-matrix-kernels}. In this sense, the recurrent
matrix is not merely an implementation detail: once its limiting memory
coefficients are specified, it becomes part of the analytic kernel used
for selection.

\paragraph{Scope and extensions}
The exact theory requires linear recurrent dynamics. For a nonlinear
ESN, the state is no longer a linear combination of propagated inputs,
and the large-width covariance involves time-varying activation factors
that depend on the input trajectory. A frozen-diagonal or mean-field
approximation would replace the exact trace moments with approximate
ones, at the cost of the exactness guarantee. Two further limitations
delimit the intended use of the selector. First, the pilot sequence
must be drawn from the same task distribution as deployment: FP ranks
operating points for the task the pilot describes, and its behavior
under pilot--deployment distribution shift has not been studied here.
Second, FP ranks a fixed discrete candidate grid and does not refine
candidates continuously; a natural hybrid uses FP to screen the grid
and a sequential method to refine locally within the retained region.
Extending the framework to nonlinear recurrences, deep or stacked
reservoir layers, and structured state-space models are natural
directions for future work.

\section{Conclusion}
\label{sec:conclusion}

We introduced a deterministic zero-rollout hyperparameter selector for
reservoir-based temporal sequence learning. The selector uses a
free-probability large-width kernel: the Gram matrix of nonlinear
temporal features converges to a deterministic limit computable from a
short labelled pilot sequence and the cross-lag propagation 
coefficients of
the reservoir ensemble. This moves hyperparameter search offline: no
reservoir is instantiated during selection, the cost is independent of
deployment width, and the selected configuration transfers across widths
without rerunning the search.

Across ten temporal benchmarks covering nonlinear sequence processing,
memory, and chaotic forecasting, Holm-corrected pairwise tests find no
significant difference between FP and the task-direct simulation-based
selectors requiring $156\,600$ finite-reservoir selection rollouts,
while FP is significantly better than the task-agnostic Memory
baseline. In the
budgeted synthetic sweep, FP-K gives the strongest mean score at each
reported budget and matches the full-grid Direct$_{500}$ reference
within sampling variation using only $4.8\%$ of the full-grid rollout budget. On public ETT data,
FP-K recovers the full-grid Direct$_{500}$ operating point on three of
four datasets already at $K=5$. On the ten-cell Telco forecasting task, FP-5 slightly exceeds full-grid
Direct$_{500}$ ($0.622$ versus $0.621$) using 15 rather than 462
selection rollouts per cell, and FP-K is significantly better than
Random-K/TPE-K for $K\leq25$ after Holm correction
($p_{\rm Holm}\leq0.0078$).

The results support free-probability kernels as a practical tool for
pilot-informed reservoir model selection: they are most useful when
finite-reservoir validation rollouts are scarce, while remaining
compatible with higher-budget direct or sequential search. Future work
will extend the construction to nonlinear recurrences and broader
classes of structured sequence models.

\section*{Acknowledgments}
This work was supported by the Norwegian Research Council SFI NorwAI (309834); by EMERGE, a project funded by the European Innovation Council (prj. code 101070918); and by NEURONE, a project funded by the European Union -- Next Generation EU, M4C1 CUP I53D23003600006, under program PRIN 2022 (prj. code 20229JRTZA).

\appendices

\section{Fixed-Context Kernel}
\label{app:fixed-context-proofs}

\subsection{Proof roadmap and notation}

The proof follows four steps. First, the mixed propagation traces
converge to deterministic coefficients $\tau_{k,\ell}$. Second, the
corresponding diagonal entries converge to the same coefficients on
average over reservoir coordinates. Third, conditional on the recurrent
matrix, each coordinate pair is Gaussian; the diagonal result therefore
identifies the limiting nonlinear feature kernel. Finally,
concentration with respect to the Gaussian input matrix removes the
remaining finite-width fluctuations.

We use the same notation as in the main text. For each width $n$, let
\[
W_r^{(n)}
=
\frac{\sigma_r}{\sqrt n}G_r^{(n)},
\qquad
A_n=(1-\alpha)I_n+\alpha W_r^{(n)},
\]
where the entries of $G_r^{(n)}$ are independent standard Gaussian
variables. Independently,
\[
W_{\rm in}^{(n)}
=
\frac{\sigma_{\rm in}}{\sqrt{d_{\rm in}}}G_{\rm in}^{(n)},
\]
with $G_{\rm in}^{(n)}$ again standard Gaussian. For deterministic
$u,v\in\mathbb R^{d_{\rm in}}$,
\begin{equation}
\mathbb E_{W_{\rm in}}\!\left[
W_{\rm in}^{(n)}uv^\top
(W_{\rm in}^{(n)})^\top
\right]
=
\frac{\sigma_{\rm in}^2}{d_{\rm in}}
(u^\top v)I_n.
\label{eq:sm-input-isotropy}
\end{equation}

For a fixed maximum retained lag $L$, define
\begin{equation}
x_t^{(L)}
:=
\alpha\sum_{k=0}^{L}
A_n^kW_{\rm in}^{(n)}u_{t-1-k}.
\label{eq:sm-truncated-state}
\end{equation}
Thus, lags $0,\ldots,L$ are retained, for up to $L+1$ input time
points. For fixed $k,\ell\geq0$, set
\begin{equation}
\tau_{k,\ell}
:=
\sum_{j=0}^{\min\{k,\ell\}}
\binom{k}{j}\binom{\ell}{j}
(1-\alpha)^{k+\ell-2j}
(\alpha\sigma_r)^{2j}.
\label{eq:sm-tau}
\end{equation}
The deterministic linear-state covariance and its associated
$2\times2$ matrix are
\begin{equation}
Q_\theta^{(L)}(t,s)
:=
\alpha^2\frac{\sigma_{\rm in}^2}{d_{\rm in}}
\sum_{k,\ell=0}^{L}
\tau_{k,\ell}\,
u_{t-1-k}^\top u_{s-1-\ell},
\label{eq:sm-Q}
\end{equation}
and
\begin{equation}
\Sigma_\theta^{(L)}(t,s)
:=
\begin{pmatrix}
Q_\theta^{(L)}(t,t) & Q_\theta^{(L)}(t,s)\\
Q_\theta^{(L)}(t,s) & Q_\theta^{(L)}(s,s)
\end{pmatrix}.
\label{eq:sm-Sigma}
\end{equation}
For a coordinate-wise feature map $\psi$, define
\begin{equation}
K_{\psi,n,\theta}^{(L)}(t,s)
:=
\frac1n
\psi(x_t^{(L)})^\top\psi(x_s^{(L)}),
\label{eq:sm-empirical-kernel}
\end{equation}
and
\begin{equation}
K_{\psi,\theta}^{(L)}(t,s)
:=
\mathbb E\!\left[\psi(g_t)\psi(g_s)\right].
\label{eq:sm-deterministic-kernel}
\end{equation}
Here
$(g_t,g_s)\sim
\mathcal N(0,\Sigma_\theta^{(L)}(t,s))$.

Throughout, the input sequence is deterministic and bounded:
$U:=\sup_t\lVert u_t\rVert<\infty$. The symbol $C$ denotes a finite
constant that may change from line to line, may depend on fixed model
parameters, $L$, $U$, and $\psi$, but never on $n$.

We use repeatedly the standard normalized-Ginibre bound
\begin{equation}
\sup_n
\mathbb E\!\left[
\left\|
\frac{G_r^{(n)}}{\sqrt n}
\right\|^m
\right]
<\infty
\qquad
\text{for every fixed }m.
\label{eq:sm-uniform-norm-moments}
\end{equation}
Indeed, the expected norm is at most $2+o(1)$ and Gaussian
concentration gives a sub-Gaussian upper tail
~\cite{vershynin2018high}. Consequently, every fixed power of $A_n$
has uniformly bounded norm moments.

\subsection{Mixed propagation moments}

\begin{proposition}[Ginibre recurrent memory coefficients]
\label{prop:sm-moments}
For every fixed $k,\ell\geq0$,
\begin{equation}
\frac1n\Tr\!\left(
A_n^k(A_n^\ell)^\top
\right)
\xrightarrow{L^1}
\tau_{k,\ell}.
\label{eq:sm-moment-limit}
\end{equation}
\end{proposition}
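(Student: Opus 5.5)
Write $c=1-\alpha$, $X_n=\alpha W_r^{(n)}=(\alpha\sigma_r/\sqrt n)\,G_r^{(n)}$, so that $A_n=cI_n+X_n$. Since $I_n$ commutes with $X_n$, the binomial theorem applies exactly:
\[
A_n^k=\sum_{i=0}^k\binom{k}{i}c^{k-i}X_n^i,
\qquad
(A_n^\ell)^\top=\sum_{j=0}^\ell\binom{\ell}{j}c^{\ell-j}(X_n^\top)^j .
\]
This gives the finite decomposition
\[
\frac1n\Tr\!\left(A_n^k(A_n^\ell)^\top\right)
=\sum_{i\le k,\,j\le\ell}\binom{k}{i}\binom{\ell}{j}c^{k+\ell-i-j}\,
m_n(i,j),
\qquad
m_n(i,j):=\frac1n\Tr\!\left(X_n^i(X_n^\top)^j\right).
\]
The number of terms is fixed, so $L^1$ convergence of the whole expression follows from $L^1$ convergence of each $m_n(i,j)$. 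The target is
\[
m_n(i,j)\xrightarrow{L^1}\delta_{ij}(\alpha\sigma_r)^{2i}.
\]
Substituting this limit collapses the double sum onto $i=j$, which is exactly the formula \eqref{eq:sm-tau} for $\tau_{k,\ell}$.

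\textbf{Step 1 (convergence of means).} I would show $\mathbb E[m_n(i,j)]\to\delta_{ij}(\alpha\sigma_r)^{2i}$ by Wick's theorem applied to $\mathbb E\,\Tr(G^i(G^\top)^j)$. Each entry of $G^\top$ is an entry of $G$, so every pairing matches one Gaussian entry with another of the same value, and each pairing contributes a product of Kronecker deltas in the index cycle.
\begin{itemize}
\item If $i\ne j$, I would use the bipartite (row/column) structure of the index cycle. An index pattern $a_0\to a_1\to\cdots$ threads through $G$ entries $G_{a_pa_{p+1}}$ and then through transposed entries $G_{a_{p+1}a_p}$. For the leading-order term, the number of free indices must equal the number of pairings plus one. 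That count is achieved only by non-crossing pairings, and for the circular element these only match an $X$ with an $X^*$. Matching $X$'s against $X^*$'s is impossible when $i\ne j$, so only subleading terms survive and they are $O(1/n)$.
\item If $i=j$, the matching non-crossing pairings must pair the $p$-th factor of $G^i$ with the mirrored factor of $(G^\top)^i$. This nested pairing is the unique one giving $n^{i+1}$ free indices, so $\mathbb E\,m_n(i,i)=(\alpha\sigma_r)^{2i}(1+O(1/n))$.
\end{itemize}
Equivalently, this is the statement that $G/\sqrt n$ converges in $*$-distribution to a circular element $\mathfrak c$, combined with the known moments $\varphi(\mathfrak c^i(\mathfrak c^*)^j)=\delta_{ij}$. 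This is a classical fact (Voiculescu), and I would cite it rather than reprove it. Real Ginibre pairings also allow $G_{ab}$ to be paired with $G_{ab}$ appearing in transposed position; this only adds terms that are again lower order.

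\textbf{Step 2 (fluctuations).} Next I would prove $\operatorname{Var}(m_n(i,j))\to0$; combined with Step 1, this gives $L^2$ and hence $L^1$ convergence. I would use the Gaussian Poincaré inequality for $f(G)=\frac1n\Tr(X^i(X^\top)^j)$ as a function of the $n^2$ Gaussian entries. The gradient is a sum of terms of the form $\frac{\alpha\sigma_r}{n^{3/2}}(\text{product of } i+j-1 \text{ factors } X, X^\top)$. This gives
\[
\|\nabla f\|_F^2\le \frac{C}{n^3}\,n\,\|X_n\|^{2(i+j-1)},
\]
because the Frobenius norm of an $n\times n$ matrix is at most $\sqrt n$ times its operator norm. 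Taking expectations and invoking the uniform norm moment bound \eqref{eq:sm-uniform-norm-moments} yields $\operatorname{Var}(f)\le C/n^2$.

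\textbf{Main obstacle.} The only genuinely delicate point is Step 1, specifically justifying the mixed $*$-moment limit $\delta_{ij}$ for non-normal real Ginibre matrices. Eigenvalue information (the circular law) is not enough for this, because the relevant moments mix $X$ and $X^\top$. I would rely on the asymptotic $*$-freeness and circularity results for Ginibre matrices, and check by hand that the real versus complex distinction only changes $O(1/n)$ terms. Step 2 is routine given the norm bounds.
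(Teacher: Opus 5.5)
Your proposal is correct and follows essentially the same route as the paper. Both arguments use a binomial expansion into the mixed moments $m_n(i,j)$. The means converge to $\delta_{ij}$ (times the factor $(\alpha\sigma_r)^{2i}$ you absorbed into $X_n$) via the circular-element limit and the free Wick rule. Fluctuations are controlled by an $O(n^{-2})$ variance bound from the Gaussian Poincar\'e inequality combined with the uniform operator-norm moments, and your hand sketch of why the real-Ginibre twisted pairings are lower order only elaborates what the paper cites.
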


\begin{proof}
For the proof, write $X_n:=G_r^{(n)}/\sqrt n$. Since $I_n$ commutes
with $X_n$, with the binomial theorem we can write $n^{-1}\Tr\!\left(A_n^k(A_n^\ell)^\top\right)$ as
\begin{equation}
\sum_{i=0}^{k}\sum_{j=0}^{\ell}
\binom{k}{i}\binom{\ell}{j}
(1-\alpha)^{k+\ell-i-j}
(\alpha\sigma_r)^{i+j}
m_n(i,j),
\label{eq:sm-binomial-expansion}
\end{equation}
where
$m_n(i,j):=n^{-1}\Tr(X_n^i(X_n^\top)^j)$.

Normalized real Ginibre matrices converge in expected
$*$-moments to a standard circular element $c$
~\cite[Sec.~11.6.3]{Mingo2017}. The free Wick rule gives
\[
\varphi\!\left(c^i(c^*)^j\right)=\delta_{ij}:
\]
a non-crossing pairing of the ordered word
$c^i(c^*)^j$ exists only when $i=j$, and is then unique. Hence
$\mathbb E\!\left[m_n(i,j)\right]\to\delta_{ij}$.

It remains to control fluctuations. Differentiating
$m_n(i,j)$ with respect to the Gaussian entries and summing the
resulting position terms yields
\[
\lVert\nabla m_n(i,j)\rVert^2
\leq
\frac{(i+j)^2}{n^2}
\max\bigl\{1,\lVert X_n\rVert^{2(i+j-1)}\bigr\}.
\]
The Gaussian Poincar\'e inequality and
\eqref{eq:sm-uniform-norm-moments} therefore give
$\operatorname{Var}(m_n(i,j))=O(n^{-2})$. Thus
$m_n(i,j)\to\delta_{ij}$ in $L^2$. Substitution into the finite sum
\eqref{eq:sm-binomial-expansion} gives
\eqref{eq:sm-moment-limit} with the coefficients
\eqref{eq:sm-tau}.
\end{proof}

\subsection{Diagonal self-averaging}

\begin{lemma}[Averaged diagonal self-averaging]
\label{lem:sm-diag}
For every fixed $L<\infty$,
\begin{equation}
\max_{0\leq k,\ell\leq L}
\frac1n\sum_{i=1}^{n}
\left|
\left[A_n^k(A_n^\ell)^\top\right]_{ii}
-\tau_{k,\ell}
\right|^2
\xrightarrow{\mathbb P}0.
\label{eq:sm-diag-avg}
\end{equation}
\end{lemma}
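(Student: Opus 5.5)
Since $L$ is fixed, the maximum runs over finitely many pairs $(k,\ell)$. It therefore suffices to show, for each fixed pair, that
\[
\Delta_n:=\frac1n\sum_{i=1}^n\bigl|D_i-\tau_{k,\ell}\bigr|^2\xrightarrow{\mathbb P}0,
\qquad D_i:=\bigl[A_n^k(A_n^\ell)^\top\bigr]_{ii}.
\]
Expanding the square gives
\[
\Delta_n=\frac1n\sum_i D_i^2-2\tau_{k,\ell}\,\frac1n\Tr\bigl(A_n^k(A_n^\ell)^\top\bigr)+\tau_{k,\ell}^2 .
\]
By Proposition~\ref{prop:sm-moments}, the middle term converges in $L^1$ to $-2\tau_{k,\ell}^2$. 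The claim therefore reduces to a second-moment statement:
\[
\frac1n\sum_i D_i^2\xrightarrow{\mathbb P}\tau_{k,\ell}^2 .
\]

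The second-moment statement rests on a distributional symmetry. Real Ginibre is invariant under conjugation by permutation matrices, so $\mathbb E[D_i]$ and $\mathbb E[D_i^2]$ do not depend on $i$. Hence $\mathbb E[D_1]=\mathbb E[n^{-1}\Tr(\cdot)]\to\tau_{k,\ell}$, and
\[
\mathbb E\Bigl[\frac1n\sum_i D_i^2\Bigr]=\mathbb E[D_1^2].
\]
The key step is to show that a single diagonal entry concentrates, that is, $\operatorname{Var}(D_1)\to0$. Once we have this, $\mathbb E[D_1^2]\to\tau_{k,\ell}^2$, so $\mathbb E[\Delta_n]\to0$. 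Markov's inequality then gives convergence in probability, using the $L^1$ control of the trace term together with the bound \eqref{eq:sm-uniform-norm-moments} for integrability.

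To prove $\operatorname{Var}(D_1)\to0$, I would use the Gaussian Poincar\'e inequality, as in the proof of Proposition~\ref{prop:sm-moments}. Write $D_1=e_1^\top A_n^k(A_n^\ell)^\top e_1$, a polynomial in $X_n=G_r^{(n)}/\sqrt n$. Differentiating with respect to the entry $(G_r)_{ab}$ produces a sum of at most $k+\ell$ terms. Each has the form $\frac{\alpha\sigma_r}{\sqrt n}\,(e_1^\top P)_a (Q e_1)_b$ or its transpose analogue, where $P$ and $Q$ are products of powers of $A_n$, $A_n^\top$. Summing the squares over $(a,b)$ gives
\[
\lVert\nabla D_1\rVert^2\le \frac{C(k+\ell)^2}{n}\,\max\bigl\{1,\lVert A_n\rVert^{2(k+\ell-1)}\bigr\}.
\]
The right-hand side has expectation $O(1/n)$ by \eqref{eq:sm-uniform-norm-moments}, so $\operatorname{Var}(D_1)=O(1/n)$.

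I expect the main obstacle to be that the Poincar\'e inequality needs the expected squared gradient to be bounded, while the norm factor is unbounded. The moment bound \eqref{eq:sm-uniform-norm-moments} handles this directly, because the gradient bound is polynomial in $\lVert X_n\rVert$. This is also why the statement is an average over coordinates with convergence in probability, rather than a uniform-in-$i$ statement: a uniform version would require a union bound over $n$ coordinates, which the $O(1/n)$ variance bound cannot provide.
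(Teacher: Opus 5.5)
Your proposal is correct and follows essentially the paper's route. Permutation invariance reduces the coordinate average to one diagonal entry, and Gaussian Poincar\'e with the uniform norm-moment bound \eqref{eq:sm-uniform-norm-moments} gives $\operatorname{Var}(D_1)=O(n^{-1})$, so $\mathbb{E}\Delta_n=\operatorname{Var}(D_1)+(\mathbb{E}D_1-\tau_{k,\ell})^2\to0$, and Markov plus a finite union bound finish. Expanding the square is only a cosmetic detour: your computation equals $\mathbb{E}|D_1-\tau_{k,\ell}|^2$, which the paper bounds directly.
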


\begin{proof}
Fix $k,\ell$ and set
$B_n:=A_n^k(A_n^\ell)^\top$. Simultaneous permutation of the rows and
columns of $G_r^{(n)}$ preserves its law. Hence the diagonal entries of
$B_n$ are identically distributed and
\[
\mathbb E\!\left[(B_n)_{ii}\right]
=
\mathbb E\!\left[\frac1n\Tr B_n\right]
\longrightarrow\tau_{k,\ell}
\]
by Proposition~\ref{prop:sm-moments}.

To control one diagonal entry, expand $B_n$ into the finitely many
matrix words in $X_n:=G_r^{(n)}/\sqrt n$ and $X_n^\top$. For a word
$M=F_1\cdots F_r$, differentiation of $(M)_{11}$ at each occurrence
of $X_n$ or $X_n^\top$ gives
\[
\sum_{a,b}
\left|
\frac{\partial(M)_{11}}
{\partial(G_r^{(n)})_{ab}}
\right|^2
\leq
\frac{r^2}{n}
\max\bigl\{1,\lVert X_n\rVert^{2(r-1)}\bigr\}.
\]
After summing over the finitely many words, Gaussian Poincar\'e and
\eqref{eq:sm-uniform-norm-moments} yield
$\operatorname{Var}((B_n)_{11})=O(n^{-1})$. Therefore
\[
\mathbb E\!\left[
\frac1n\sum_{i=1}^n
|(B_n)_{ii}-\tau_{k,\ell}|^2
\right]
=
\mathbb E\!\left[
|(B_n)_{11}-\tau_{k,\ell}|^2
\right]
\longrightarrow0.
\]
Markov's inequality and a finite union bound over
$0\leq k,\ell\leq L$ prove \eqref{eq:sm-diag-avg}.
\end{proof}

Only the diagonal propagation terms are needed to identify the
marginal Gaussian law of each coordinate pair entering the kernel.
Dependence between distinct reservoir coordinates is controlled
separately by the concentration step below.

\subsection{Nonlinear feature-kernel limit}

\begin{theorem}[Deterministic nonlinear reservoir kernel]
\label{thm:sm-kernel}
Let $\psi\in C_b^1(\mathbb R)$. For fixed $L,t,s$ and a deterministic
bounded input sequence,
\begin{equation}
K_{\psi,n,\theta}^{(L)}(t,s)
\xrightarrow{\mathbb P}
K_{\psi,\theta}^{(L)}(t,s)
\qquad (n\to\infty).
\label{eq:sm-kernel-limit}
\end{equation}
The convergence is with respect to the joint randomness of
$W_r^{(n)}$ and $W_{\rm in}^{(n)}$.
\end{theorem}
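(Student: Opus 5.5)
The plan is to condition on the recurrent matrix $W_r^{(n)}$ and exploit the Gaussianity of $W_{\rm in}^{(n)}$. Write $x_t^{(L)}=\alpha\sum_{k=0}^L A_n^k W_{\rm in}^{(n)}u_{t-1-k}$. Conditional on $A_n$, the pair $(x_t^{(L)},x_s^{(L)})$ is jointly Gaussian in $\mathbb{R}^{2n}$. By \eqref{eq:sm-input-isotropy}, its coordinate-$i$ covariance is
\[
\widehat Q_{n,i}(t,s)=\alpha^2\frac{\sigma_{\rm in}^2}{d_{\rm in}}\sum_{k,\ell=0}^{L}\bigl[A_n^k(A_n^\ell)^\top\bigr]_{ii}\,u_{t-1-k}^\top u_{s-1-\ell},
\]
with analogous expressions for $(t,t)$ and $(s,s)$. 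Let $\widehat\Sigma_{n,i}$ be the resulting $2\times2$ matrix. Then the conditional mean of the empirical kernel is
\[
\mathbb{E}\bigl[K^{(L)}_{\psi,n,\theta}(t,s)\mid W_r\bigr]=\frac1n\sum_{i=1}^n F(\widehat\Sigma_{n,i}),\qquad F(\Sigma):=\mathbb{E}_{g\sim\mathcal N(0,\Sigma)}[\psi(g_1)\psi(g_2)].
\]
The proof then splits into a bias step, which shows this average tends to $F(\Sigma^{(L)}_\theta(t,s))=K^{(L)}_{\psi,\theta}(t,s)$, and a fluctuation step, which shows $K^{(L)}_{\psi,n,\theta}$ concentrates around its conditional mean.

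\textbf{Bias step.} Since $|u_r|\le U$ and $L$ is fixed, Cauchy--Schwarz gives
\[
\frac1n\sum_i\|\widehat\Sigma_{n,i}-\Sigma^{(L)}_\theta\|^2\le C\max_{k,\ell\le L}\frac1n\sum_i\bigl|[A_n^k(A_n^\ell)^\top]_{ii}-\tau_{k,\ell}\bigr|^2 .
\]
By Lemma~\ref{lem:sm-diag}, the right-hand side tends to $0$ in probability. Next we need $F$ to be Lipschitz in $\Sigma$ on the relevant set. Writing $g=\Sigma^{1/2}\xi$ is awkward near degenerate $\Sigma$, so I would instead use the Gaussian interpolation identity
\[
\partial F/\partial\Sigma_{12}=\mathbb{E}[\psi'(g_1)\psi'(g_2)],
\]
which is bounded. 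The diagonal derivatives involve $\psi''$, which we do not have. To avoid this, couple $g_1=\sqrt{\Sigma_{11}}\,\xi_1$ and bound the difference via $|\psi(a\xi)-\psi(b\xi)|\le\|\psi'\|_\infty|a-b||\xi|$, together with $|\sqrt a-\sqrt b|\le\sqrt{|a-b|}$. This yields Hölder-$1/2$ continuity of $F$ in the diagonal entries and Lipschitz continuity in the off-diagonal entry, uniformly because $\psi$ is bounded. It follows that $\frac1n\sum_i|F(\widehat\Sigma_{n,i})-F(\Sigma^{(L)}_\theta)|\le C\bigl(\frac1n\sum_i\|\widehat\Sigma_{n,i}-\Sigma^{(L)}_\theta\|\bigr)^{1/2}\to0$ in probability.

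\textbf{Fluctuation step.} Here the coordinates are not conditionally independent, since $A_n$ mixes them; this is the main obstacle. I would handle it with the Gaussian Poincaré inequality in $G_{\rm in}$, conditional on $W_r$. View $K:=K^{(L)}_{\psi,n,\theta}(t,s)$ as a function of $G_{\rm in}\in\mathbb{R}^{n\times d_{\rm in}}$. Then $x_t=M_t\,\mathrm{vec}(G_{\rm in})$ with $M_t=\alpha\frac{\sigma_{\rm in}}{\sqrt{d_{\rm in}}}\sum_k A_n^k\otimes u_{t-1-k}^\top$, so $\|M_t\|_{\rm op}\le C\sum_{k\le L}\|A_n\|^k$. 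The gradient is
\[
\nabla K=\frac1n\bigl(M_t^\top(\psi'(x_t)\odot\psi(x_s))+M_s^\top(\psi'(x_s)\odot\psi(x_t))\bigr),
\]
and since the vectors $\psi'\odot\psi$ have norm at most $C\sqrt n$, we get $\|\nabla K\|^2\le C n^{-1}\bigl(\sum_{k\le L}\|A_n\|^k\bigr)^2$. Poincaré then gives $\operatorname{Var}(K\mid W_r)\le Cn^{-1}\bigl(1+\|A_n\|^{2L}\bigr)$. Taking expectations and using \eqref{eq:sm-uniform-norm-moments} gives $\mathbb E|K-\mathbb E[K\mid W_r]|^2=O(n^{-1})$, and Chebyshev's inequality yields convergence in probability. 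Combining the two steps proves \eqref{eq:sm-kernel-limit}. Since the pilot set is finite, the same argument gives Frobenius convergence of the Gram blocks.
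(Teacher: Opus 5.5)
Your proposal is correct and follows the paper's proof step for step. You condition on $A_n$, send the conditional mean $\frac1n\sum_i F(\widehat\Sigma_{n,i})$ to $K_{\psi,\theta}^{(L)}(t,s)$ via Lemma~\ref{lem:sm-diag} and a H\"older-$1/2$ continuity bound for $F$, and remove the conditional fluctuation with the Gaussian Poincar\'e inequality in $G_{\rm in}$ together with \eqref{eq:sm-uniform-norm-moments}. The one difference is the continuity estimate: the paper gets it in one line from the coupling $g=\Sigma^{1/2}\xi$ and $\|\Sigma^{1/2}-(\Sigma')^{1/2}\|\le\|\Sigma-\Sigma'\|^{1/2}$, which holds for all positive-semidefinite matrices, so degenerate $\Sigma$ is no obstacle and the constant $2\sqrt2\,\|\psi\|_\infty\|\psi'\|_\infty$ is uniform, making your diagonal/off-diagonal split via Price's identity unnecessary.
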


\begin{proof}
Write
$M_\psi:=\lVert\psi\rVert_\infty$ and
$L_\psi:=\lVert\psi'\rVert_\infty$.
For a positive-semidefinite $2\times2$ matrix $\Sigma$, let
\[
h(\Sigma)
:=
\mathbb E\!\left[\psi(g_1)\psi(g_2)\right],
\qquad
(g_1,g_2)\sim\mathcal N(0,\Sigma).
\]
Coupling two Gaussian vectors through a common standard normal vector
and using
$\lVert\Sigma^{1/2}-(\Sigma')^{1/2}\rVert
\leq\lVert\Sigma-\Sigma'\rVert^{1/2}$
~\cite[Thm.~X.1.1]{bhatia1997matrix} gives
\begin{equation}
|h(\Sigma)-h(\Sigma')|
\leq
2\sqrt2\,M_\psi L_\psi
\lVert\Sigma-\Sigma'\rVert^{1/2}.
\label{eq:sm-gaussian-continuity}
\end{equation}

Conditional on $A_n$, the pair
$(x_{t,i}^{(L)},x_{s,i}^{(L)})$ is centered Gaussian. Its covariance
entries are
\begin{equation}
c_{n,i}^{(L)}(r,q)
=
\alpha^2\frac{\sigma_{\rm in}^2}{d_{\rm in}}
\sum_{k,\ell=0}^{L}
\left[A_n^k(A_n^\ell)^\top\right]_{ii}
u_{r-1-k}^\top u_{q-1-\ell}.
\label{eq:sm-coordinate-covariance}
\end{equation}
Let $C_{n,i}^{(L)}(t,s)$ be the corresponding $2\times2$ covariance
matrix. By Lemma~\ref{lem:sm-diag},
\[
\frac1n\sum_{i=1}^n
\left\|
C_{n,i}^{(L)}(t,s)
-
\Sigma_\theta^{(L)}(t,s)
\right\|_F^2
\xrightarrow{\mathbb P}0.
\]
Together with \eqref{eq:sm-gaussian-continuity}, this implies
\begin{equation}
\mathbb E\!\left[
K_{\psi,n,\theta}^{(L)}(t,s)\mid A_n
\right]
-
K_{\psi,\theta}^{(L)}(t,s)
\xrightarrow{\mathbb P}0.
\label{eq:sm-conditional-mean}
\end{equation}

It remains to control fluctuations around the conditional mean.
Stack the entries of $G_{\rm in}^{(n)}$ into a standard Gaussian
vector $g$ and write
$x_t^{(L)}=M_tg$, $x_s^{(L)}=M_sg$. Since
\[
\lVert M_t\rVert
\leq
\frac{\alpha\sigma_{\rm in}U}{\sqrt{d_{\rm in}}}
\sum_{k=0}^{L}\lVert A_n^k\rVert,
\]
the conditional Gaussian Poincar\'e inequality gives
\begin{equation}
\operatorname{Var}\!\left(
K_{\psi,n,\theta}^{(L)}(t,s)\mid A_n
\right)
\leq
\frac{C}{n}
\left(
\sum_{k=0}^{L}\lVert A_n^k\rVert
\right)^2.
\label{eq:sm-kernel-conditional-variance}
\end{equation}
The right-hand side is $O_{\mathbb P}(n^{-1})$ by
\eqref{eq:sm-uniform-norm-moments}. Hence the conditional fluctuation
vanishes in probability, and \eqref{eq:sm-conditional-mean} proves
\eqref{eq:sm-kernel-limit}.

For reference, the same conditional-mean calculation before applying
$\psi$, together with the Gaussian quadratic-form variance bound,
also gives
\begin{equation}
\frac1n
(x_t^{(L)})^\top x_s^{(L)}
\xrightarrow{\mathbb P}
Q_\theta^{(L)}(t,s).
\label{eq:sm-linear-state-kernel}
\end{equation}
On any fixed finite pilot set, a union bound gives entrywise and hence
Frobenius-norm convergence of the associated kernel blocks.
\end{proof}

\section{Complete History and Selection Consistency}
\label{app:full-history-proof}

The fixed-context result is the object used directly by the selector.
This section shows that, in the stable regime, increasing the maximum
retained lag recovers the complete washed-out history. The argument has
three parts: a deterministic covariance-tail bound, a uniform
finite-width power bound for Ginibre recurrence, and a final
triangle-inequality passage to the nonlinear kernel.

\subsection{Deterministic complete-history kernel}

\begin{proposition}[Complete-history covariance and context limit]
\label{prop:infinite-context}
Assume $\sigma_r<1$ and
$U=\sup_t\lVert u_t\rVert<\infty$. Define
\begin{equation}
Q_\theta^{(\infty)}(t,s)
:=
\alpha^2\frac{\sigma_{\rm in}^2}{d_{\rm in}}
\sum_{k,\ell\geq0}
\tau_{k,\ell}
u_{t-1-k}^\top u_{s-1-\ell}.
\label{eq:sm-Q-infinity}
\end{equation}
The series converges absolutely. With
\[
r_\theta:=1-\alpha+\alpha\sigma_r^2<1,
\]
\begin{equation}
\left|
Q_\theta^{(\infty)}(t,s)
-
Q_\theta^{(L)}(t,s)
\right|
\leq
\frac{2\sigma_{\rm in}^2U^2}
{d_{\rm in}(1-\sigma_r^2)}
r_\theta^{L+1}.
\label{eq:sm-Q-tail}
\end{equation}
If $K_{\psi,\theta}^{(\infty)}$ is obtained from
$Q_\theta^{(\infty)}$ as in \eqref{eq:sm-deterministic-kernel}, then
\begin{equation}
\left|
K_{\psi,\theta}^{(\infty)}(t,s)
-
K_{\psi,\theta}^{(L)}(t,s)
\right|
\leq
C_\theta
r_\theta^{(L+1)/2}.
\label{eq:sm-kernel-geometric-tail}
\end{equation}
In particular,
\begin{equation}
K_{\psi,\theta}^{(L)}(t,s)
\longrightarrow
K_{\psi,\theta}^{(\infty)}(t,s)
\qquad (L\to\infty).
\label{eq:sm-kernel-context-limit}
\end{equation}
\end{proposition}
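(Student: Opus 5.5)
The plan is to bound the coefficients $\tau_{k,\ell}$ by a product of geometric factors and then sum the tail. Since every term in \eqref{eq:sm-tau} is nonnegative, Cauchy--Schwarz applies directly, either to the sum over $j$ or to the trace representation from Proposition~\ref{prop:sm-moments} in the free limit:
\[
\tau_{k,\ell}=\varphi\!\left(a^k(a^\ell)^*\right),\qquad a=(1-\alpha)1+\alpha\sigma_r c .
\]
This gives $\tau_{k,\ell}\le \sqrt{\tau_{k,k}\tau_{\ell,\ell}}$. Evaluating the diagonal terms explicitly gives
\[
\tau_{k,k}=\sum_j\binom{k}{j}^2(1-\alpha)^{2(k-j)}(\alpha\sigma_r)^{2j}.
\]
This still needs to be compared with $r_\theta^k=(1-\alpha+\alpha\sigma_r^2)^k$. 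The cleanest route is the free recursion. Since $\varphi(a^k(a^k)^*)=\|a^k\|_2^2$ and $c$ is $*$-free of polynomials in $(1-\alpha)$, the one-step map satisfies
\[
\tau_{k+1,k+1}=(1-\alpha)^2\tau_{k,k}+(\alpha\sigma_r)^2\sum_{m\le k}(\cdots).
\]
An easier alternative is to bound $\binom{k}{j}^2\le\binom{2k}{2j}$ crudely, or to use convexity: $(1-\alpha)^2\le 1-\alpha$ and $\alpha^2\le\alpha$ give $(1-\alpha)^{2}+\alpha^2\sigma_r^2\le r_\theta$. I will first try to prove $\tau_{k,k}\le r_\theta^{k}$ by induction using the free-cumulant recursion
\[
\tau_{k+1,k+1}=(1-\alpha)^2\tau_{k,k}+\alpha^2\sigma_r^2\,\tau'_{k}.
\]
Here $\tau'$ is a lower-order trace bounded by $\tau_{k,k}$. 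This gives the geometric decay $\sqrt{\tau_{k,k}\tau_{\ell,\ell}}\le r_\theta^{(k+\ell)/2}$. If the exact constant in the paper instead needs the sharper form $\tau_{k,\ell}\le r_\theta^{\max(k,\ell)}$-type decay, I will adapt accordingly. This coefficient bound is the main obstacle.

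With the bound in hand, $|u_{t-1-k}^\top u_{s-1-\ell}|\le U^2$. Absolute convergence of \eqref{eq:sm-Q-infinity} then follows from $\sum_{k,\ell}r_\theta^{(k+\ell)/2}<\infty$. The difference $Q^{(\infty)}-Q^{(L)}$ is a sum over pairs with $\max(k,\ell)>L$, and I bound it by twice the sum over $k>L$, $\ell\ge0$. To match the stated constant, I will use the row-sum identity
\[
\sum_{\ell}\tau_{k,\ell}\text{-type bounds}\quad\text{together with}\quad \alpha^2\sum_{k}r_\theta^k=\frac{\alpha^2}{\alpha(1-\sigma_r^2)}=\frac{\alpha}{1-\sigma_r^2}\le\frac{1}{1-\sigma_r^2}.
\]
This is precisely what cancels the $\alpha^2$ prefactor and produces $2\sigma_{\rm in}^2U^2 r_\theta^{L+1}/(d_{\rm in}(1-\sigma_r^2))$. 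Concretely, I will bound
\[
\sum_{k>L}\sum_{\ell}\tau_{k,\ell}|u^\top u|
\]
via Cauchy--Schwarz in the Hilbert space $L^2(\varphi)$. The step is to write the tail as $\varphi(X_{>L}Y^*)$ with $X_{>L}=\sum_{k>L}a^k u_{t-1-k}$ (vector-valued), and use $\|X_{>L}\|_2^2\le U^2\alpha^{-1}\ldots$. I will then accept whichever geometric constant results, since the essential content is the factor $r_\theta^{L+1}$.

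Finally, the kernel bound \eqref{eq:sm-kernel-geometric-tail} follows immediately from the Gaussian continuity estimate \eqref{eq:sm-gaussian-continuity}. The entries of $\Sigma^{(\infty)}-\Sigma^{(L)}$ are each $O(r_\theta^{L+1})$, so the operator norm is too, and the square root yields $C_\theta r_\theta^{(L+1)/2}$ with $C_\theta=2\sqrt2M_\psi L_\psi\,(2\cdot 2\sigma_{\rm in}^2U^2/(d_{\rm in}(1-\sigma_r^2)))^{1/2}$. Positive semidefiniteness of $\Sigma^{(\infty)}$ holds because it is a limit of PSD matrices. The limit \eqref{eq:sm-kernel-context-limit} is then immediate since $r_\theta<1$.
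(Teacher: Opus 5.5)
Your proof has a genuine gap: the reduction to $\tau_{k,\ell}\le\sqrt{\tau_{k,k}\tau_{\ell,\ell}}$ is an $\ell^2$ estimate, and it loses half the exponent. With your target $\tau_{k,k}\le r_\theta^k$, the tail is at most $2r_\theta^{(L+1)/2}/(1-\sqrt{r_\theta})^2$. That gives $|Q_\theta^{(\infty)}(t,s)-Q_\theta^{(L)}(t,s)|=O(r_\theta^{(L+1)/2})$. After the square root in \eqref{eq:sm-gaussian-continuity}, the kernel bound is only $O(r_\theta^{(L+1)/4})$, which is neither \eqref{eq:sm-Q-tail} nor \eqref{eq:sm-kernel-geometric-tail}.

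No sharper diagonal estimate repairs this. Write $\tau_{m,m}=\sum_j v_j^2$ with $v_j=\binom{m}{j}(1-\alpha)^{m-j}(\alpha\sigma_r)^j$. Cauchy--Schwarz over the $m+1$ terms gives $\sqrt{\tau_{m,m}}\ge(1-\alpha+\alpha\sigma_r)^m/\sqrt{m+1}$. Since $\tau_{0,0}=1$, your majorant satisfies $\sum_{\max\{k,\ell\}>L}\sqrt{\tau_{k,k}\tau_{\ell,\ell}}\ge\sqrt{\tau_{L+1,L+1}}$. This is not $O(r_\theta^{L+1})$, because $1-\alpha+\alpha\sigma_r>r_\theta$ whenever $0<\sigma_r<1$. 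Your $L^2(\varphi)$ variant hits the same obstruction, since $\lVert X_{>L}\rVert_2^2\ge U^2\tau_{L+1,L+1}$ for constant inputs. Constant inputs actually attain the rate $r_\theta^{L+1}$ up to the factor $2$, so there is no slack. ``Accepting whichever geometric constant results'' therefore gives up exactly the quantitative content of the proposition.

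Your induction is also wrong as written, because Pascal's rule produces positive cross terms. Indeed $\tau_{2,2}-\tau_{1,1}^2=2(1-\alpha)^2\alpha^2\sigma_r^2>0$, which forces $\tau'_1>\tau_{1,1}$. The correct step is $\tau_{k+1,k+1}\le(1-\alpha+\alpha\sigma_r)^2\tau_{k,k}$, with $(1-\alpha+\alpha\sigma_r)^2\le r_\theta$ by convexity. Even so, the diagonal is not the bottleneck.

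The missing idea is an $\ell^1$ bound that uses $\tau_{k,\ell}\ge0$ together with the exact row sum $\sum_{\ell\ge0}\tau_{k,\ell}=\alpha^{-1}r_\theta^k$.

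\begin{itemize}
\item You can get this row sum from \eqref{eq:sm-tau} by summing the negative-binomial series $\sum_{\ell\ge j}\binom{\ell}{j}(1-\alpha)^{\ell-j}=\alpha^{-(j+1)}$. This leaves $\alpha^{-1}\sum_j\binom{k}{j}(1-\alpha)^{k-j}(\alpha\sigma_r^2)^j=\alpha^{-1}r_\theta^k$.
\item The paper reads the same identity off the generating function $F(x,y)=[(1-(1-\alpha)x)(1-(1-\alpha)y)-\alpha^2\sigma_r^2xy]^{-1}$, via $F(x,1)=1/(\alpha(1-r_\theta x))$.
\item Summing over $k$ and using $1-r_\theta=\alpha(1-\sigma_r^2)$ gives $\sum_{k,\ell}\tau_{k,\ell}=1/(\alpha^2(1-\sigma_r^2))$, hence absolute convergence.
\item The inclusion $\{\max\{k,\ell\}>L\}\subseteq\{k>L\}\cup\{\ell>L\}$ and the symmetry $\tau_{k,\ell}=\tau_{\ell,k}$ give $\sum_{\max\{k,\ell\}>L}\tau_{k,\ell}\le2\alpha^{-1}\sum_{k>L}r_\theta^k=2r_\theta^{L+1}/(\alpha^2(1-\sigma_r^2))$.
\item Multiplying by $\alpha^2\sigma_{\rm in}^2U^2/d_{\rm in}$ yields exactly \eqref{eq:sm-Q-tail}.
\end{itemize}

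The factor $\alpha^{-1}$ matters. Your arithmetic $\alpha^2\sum_k r_\theta^k\le1/(1-\sigma_r^2)$ would give the stated constant only if row sums were at most $r_\theta^k$. That already fails at $k=0$, where $\sum_\ell\tau_{0,\ell}=\sum_\ell(1-\alpha)^\ell=1/\alpha$.

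Once \eqref{eq:sm-Q-tail} is in hand, your kernel step is the paper's and is fine: Gaussian continuity, operator norm at most twice the entrywise bound, and positive semidefiniteness of the limit.
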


\begin{proof}
The coefficients $\tau_{k,\ell}$ are nonnegative and have generating
function
\begin{align}
F(x,y)
&:=
\sum_{k,\ell\geq0}
\tau_{k,\ell}x^ky^\ell
\notag\\
&=
\frac{1}{
(1-(1-\alpha)x)(1-(1-\alpha)y)
-\alpha^2\sigma_r^2xy}.
\label{eq:sm-tau-generating-function}
\end{align}
At $x=y=1$,
\[
\sum_{k,\ell\geq0}\tau_{k,\ell}
=
\frac{1}{\alpha^2(1-\sigma_r^2)},
\]
which proves absolute convergence for bounded inputs. Moreover,
\[
F(x,1)=\frac{1}{\alpha(1-r_\theta x)},
\qquad
\sum_{\ell\geq0}\tau_{k,\ell}
=
\frac1\alpha r_\theta^k.
\]
The complement of $\{0,\ldots,L\}^2$ is contained in
$\{k>L\}\cup\{\ell>L\}$. By symmetry,
\[
\sum_{\max\{k,\ell\}>L}\tau_{k,\ell}
\leq
\frac{2}{\alpha}
\sum_{k>L}r_\theta^k
=
\frac{2}{\alpha^2(1-\sigma_r^2)}
r_\theta^{L+1}.
\]
Multiplying by
$\alpha^2\sigma_{\rm in}^2U^2/d_{\rm in}$ gives
\eqref{eq:sm-Q-tail}. The same bound applies to each entry of
$\Sigma_\theta^{(\infty)}-\Sigma_\theta^{(L)}$.
The Gaussian continuity estimate
\eqref{eq:sm-gaussian-continuity} then yields
\eqref{eq:sm-kernel-geometric-tail}.
\end{proof}

Because the admissible candidate grid and pilot index sets are finite,
the convergence in \eqref{eq:sm-kernel-context-limit} is uniform over
all candidates and pilot-kernel entries used by the selector.

\subsection{Finite-width Ginibre tail}

\begin{proposition}[Uniform Ginibre power bound]
\label{prop:ginibre-full-history}
Let
\[
A_n
=
(1-\alpha)I_n
+
\alpha\frac{\sigma_r}{\sqrt n}G_r^{(n)}
\]
and assume $\sigma_r<1$. For any
$1<R<\sigma_r^{-1}$, set
$q_R:=1-\alpha+\alpha\sigma_rR<1$.
Almost surely, there are finite random constants $C_R$ and $N_R$ such
that, for all $n\geq N_R$ and $m\geq0$,
\begin{equation}
\lVert A_n^m\rVert
\leq
C_Rq_R^m.
\label{eq:ginibre-power-bound}
\end{equation}
Consequently, for bounded input histories,
\begin{equation}
x_t^{(\infty)}
:=
\alpha\sum_{k=0}^{\infty}
A_n^kW_{\rm in}^{(n)}u_{t-1-k}
\label{eq:sm-full-state}
\end{equation}
is well defined for all sufficiently large $n$, and
\begin{equation}
\limsup_{n\to\infty}
\frac1{\sqrt n}
\left\|
x_t^{(\infty)}-x_t^{(L)}
\right\|
\leq
\frac{\alpha C_R\sigma_{\rm in}U}
{\sqrt{d_{\rm in}}(1-q_R)}
q_R^{L+1}
\label{eq:ginibre-state-tail}
\end{equation}
almost surely.
\end{proposition}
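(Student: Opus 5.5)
The plan is to derive the uniform power bound from a resolvent (Cauchy-integral) representation of $A_n^m$, using strong convergence of the spectrum of normalized real Ginibre matrices, and then to read off the state-tail estimate by summing a geometric series.

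\textbf{Step 1: spectral localization of $X_n$.} Write $X_n:=G_r^{(n)}/\sqrt n$, so that $A_n=(1-\alpha)I_n+\alpha\sigma_r X_n$. The key input is strong convergence: almost surely, for every fixed $R>1$ and all $n\ge N_R$, the spectrum of $X_n$ lies inside the disk $\{|z|<R'\}$ for some $1<R'<R$. We also need the resolvent to be uniformly bounded on the circle $|z|=R$,
\[
\sup_{n\ge N_R}\ \sup_{|z|=R}\ \lVert (zI-X_n)^{-1}\rVert\le C'_R<\infty .
\]
It is not enough to control eigenvalues here, because $X_n$ is non-normal. The bound instead follows from strong convergence of the pair $(X_n,X_n^\top)$ to a circular element $c$. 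The smallest singular value of $zI-X_n$ converges almost surely to that of $z-c$ in the limiting von Neumann algebra. For $|z|>1$ this limit is bounded away from zero, since $\lVert c\rVert=2$ but $c$ has spectrum and Brown measure supported on the unit disk and $z-c$ is invertible there. The estimate is made uniform over the compact circle by Lipschitz continuity in $z$. I expect this to be the main obstacle. If the strong-convergence citation for $zI-X_n$ is unavailable, I would fall back on known least-singular-value results for shifted Ginibre matrices outside the disk (Tao--Vu, or the Bordenave--Chafa\"i survey), followed by Borel--Cantelli with a net over the circle.

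\textbf{Step 2: Cauchy integral for powers.} Map the circle to the contour $\Gamma:=\{(1-\alpha)+\alpha\sigma_r z:\ |z|=R\}$, the circle of radius $\alpha\sigma_r R$ centred at $1-\alpha$. It encloses the spectrum of $A_n$, and every point of it has modulus at most $q_R=1-\alpha+\alpha\sigma_r R$. By the holomorphic functional calculus,
\[
A_n^m=\frac{1}{2\pi i}\oint_\Gamma w^m (wI-A_n)^{-1}\,dw .
\]
For $w=(1-\alpha)+\alpha\sigma_r z$ on $\Gamma$ one has $(wI-A_n)^{-1}=(\alpha\sigma_r)^{-1}(zI-X_n)^{-1}$. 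Bounding the integrand and the length of $\Gamma$ gives
\[
\lVert A_n^m\rVert\le \frac{1}{2\pi}\cdot 2\pi\alpha\sigma_r R\cdot\frac{C'_R}{\alpha\sigma_r}\,q_R^m=RC'_R\,q_R^m ,
\]
uniformly in $m\ge0$ and $n\ge N_R$. This is \eqref{eq:ginibre-power-bound} with $C_R=RC'_R$. Because we may take $R$ rational and intersect countably many almost-sure events, the statement holds almost surely for all admissible $R$ simultaneously.

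\textbf{Step 3: well-posedness and tail.} By the argument above, $\sum_k\lVert A_n^k\rVert<\infty$ for $n\ge N_R$, so $x_t^{(\infty)}$ in \eqref{eq:sm-full-state} converges absolutely and is well defined. For the tail,
\[
\lVert x_t^{(\infty)}-x_t^{(L)}\rVert\le \alpha\sum_{k>L}\lVert A_n^k\rVert\,\lVert W_{\rm in}^{(n)}\rVert\,U\le \frac{\alpha C_R U q_R^{L+1}}{1-q_R}\lVert W_{\rm in}^{(n)}\rVert .
\]
It remains to control $\lVert W_{\rm in}^{(n)}\rVert/\sqrt n$. Since $G_{\rm in}^{(n)}$ is $n\times d_{\rm in}$ with $d_{\rm in}$ fixed, the strong law of large numbers applied to the Gram matrix gives $G_{\rm in}^\top G_{\rm in}/n\to I_{d_{\rm in}}$ almost surely. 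Hence $\lVert G_{\rm in}\rVert/\sqrt n\to1$ almost surely, and so $\lVert W_{\rm in}^{(n)}\rVert/\sqrt n\to\sigma_{\rm in}/\sqrt{d_{\rm in}}$. Taking $\limsup_{n\to\infty}$ then yields \eqref{eq:ginibre-state-tail}.
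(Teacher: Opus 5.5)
Your proposal is correct and follows essentially the same route as the paper. Strong convergence of $G_r^{(n)}/\sqrt n$ to a circular element gives, via the smallest singular value of $zI_n-X_n$, a uniform resolvent bound on $|z|=R$; the Cauchy integral for $f(X_n)^m$ with $f(z)=1-\alpha+\alpha\sigma_r z$ then gives $C_R=R\,M_R$ exactly as in the paper; and the state tail follows from the geometric sum together with $\lVert W_{\rm in}^{(n)}\rVert/\sqrt n\to\sigma_{\rm in}/\sqrt{d_{\rm in}}$. Your explicit note that the spectrum of $X_n$ must lie inside the contour, and your Lipschitz-in-$z$ net argument for uniformity over the circle, are details the paper leaves implicit.
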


\begin{proof}
Write $X_n:=G_r^{(n)}/\sqrt n$ for this proof. The normalized real
Ginibre matrix converges strongly to a standard circular element $c$
~\cite{Schultz2005}. A standard consequence of strong convergence is
uniform resolvent control on compact subsets of
$\mathbb C\setminus\operatorname{spec}(c)$: by hermitization, the
smallest singular value of $zI_n-X_n$ is eventually bounded away from
zero uniformly on such compact sets. Since
$\operatorname{spec}(c)$ is the closed unit disk
~\cite{Mingo2017}, for every $R>1$ there are almost surely finite
$M_R,N_R$ such that
\begin{equation}
\sup_{n\geq N_R}\sup_{|z|=R}
\left\|(zI_n-X_n)^{-1}\right\|
\leq M_R.
\label{eq:uniform-contour-resolvent}
\end{equation}

With $f(z)=1-\alpha+\alpha\sigma_r z$, we have $A_n=f(X_n)$.
The holomorphic functional calculus and
\eqref{eq:uniform-contour-resolvent} give
\begin{align*}
\lVert A_n^m\rVert
&\leq
\frac1{2\pi}
\int_{|z|=R}
|f(z)|^m
\left\|(zI_n-X_n)^{-1}\right\|
|dz|\\
&\leq RM_Rq_R^m.
\end{align*}
Thus \eqref{eq:ginibre-power-bound} holds with $C_R=RM_R$.
Summing the geometric tail and using
\[
\frac{\lVert W_{\rm in}^{(n)}\rVert}{\sqrt n}
\longrightarrow
\frac{\sigma_{\rm in}}{\sqrt{d_{\rm in}}}
\qquad\text{almost surely}
\]
proves \eqref{eq:ginibre-state-tail}.
\end{proof}

\subsection{Complete-history kernel}

\begin{theorem}[Complete-history Ginibre kernel]
\label{thm:sm-full-history-kernel}
Under the assumptions of Theorem~\ref{thm:sm-kernel} and
Proposition~\ref{prop:ginibre-full-history}, for every fixed $t,s$,
\begin{equation}
K_{\psi,n,\theta}^{(\infty)}(t,s)
:=
\frac1n
\psi(x_t^{(\infty)})^\top
\psi(x_s^{(\infty)})
\xrightarrow{\mathbb P}
K_{\psi,\theta}^{(\infty)}(t,s).
\label{eq:sm-full-history-kernel-limit}
\end{equation}
\end{theorem}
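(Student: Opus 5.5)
The argument is a three-term triangle inequality with the truncation level $L$ as an intermediate parameter, sent to infinity only after $n\to\infty$. For fixed $L$ decompose
\[
K_{\psi,n,\theta}^{(\infty)}(t,s)-K_{\psi,\theta}^{(\infty)}(t,s)
=\underbrace{\bigl(K_{\psi,n,\theta}^{(\infty)}-K_{\psi,n,\theta}^{(L)}\bigr)}_{(\mathrm{I})}
+\underbrace{\bigl(K_{\psi,n,\theta}^{(L)}-K_{\psi,\theta}^{(L)}\bigr)}_{(\mathrm{II})}
+\underbrace{\bigl(K_{\psi,\theta}^{(L)}-K_{\psi,\theta}^{(\infty)}\bigr)}_{(\mathrm{III})},
\]
all evaluated at $(t,s)$. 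Term (II) tends to zero in probability for each fixed $L$ by Theorem~\ref{thm:sm-kernel}. Term (III) is deterministic and bounded by $C_\theta r_\theta^{(L+1)/2}$ by Proposition~\ref{prop:infinite-context}.

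For term (I), I would bound the nonlinear difference by linear state differences using only $\psi\in C_b^1$. Writing $a=\psi(x_t^{(\infty)})$, $a'=\psi(x_t^{(L)})$, and similarly $b,b'$ for time $s$, we have $a^\top b-a'^\top b'=(a-a')^\top b+a'^\top(b-b')$. Since $\lVert b\rVert,\lVert a'\rVert\le M_\psi\sqrt n$ and $\lVert a-a'\rVert\le L_\psi\lVert x_t^{(\infty)}-x_t^{(L)}\rVert$ coordinatewise, Cauchy--Schwarz gives
\[
|(\mathrm{I})|\le \frac{M_\psi L_\psi}{\sqrt n}\Bigl(\lVert x_t^{(\infty)}-x_t^{(L)}\rVert+\lVert x_s^{(\infty)}-x_s^{(L)}\rVert\Bigr).
\]
By \eqref{eq:ginibre-state-tail} of Proposition~\ref{prop:ginibre-full-history}, almost surely $\limsup_n|(\mathrm{I})|\le C'_R q_R^{L+1}$, where $C'_R$ is an almost surely finite random constant independent of $L$. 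The main subtlety sits here: $C_R$ is random and $x^{(\infty)}$ is only defined for $n\ge N_R$. I would handle this by working on the events $\{N_R\le n,\ C_R\le c\}$, whose probability tends to one as $n\to\infty$ and then $c\to\infty$. On these events we need a bound holding for all large $n$ simultaneously, not just a limsup. I would rederive it directly as $\frac{1}{\sqrt n}\lVert x^{(\infty)}-x^{(L)}\rVert\le \alpha c\,\frac{\lVert W_{\rm in}\rVert}{\sqrt n}\,U\,q_R^{L+1}/(1-q_R)$ and use $\lVert W_{\rm in}\rVert/\sqrt n\to\sigma_{\rm in}/\sqrt{d_{\rm in}}$ almost surely.

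To conclude, fix $\varepsilon,\eta>0$. Choose $c$ so that $\mathbb P(C_R>c)<\eta/3$, then choose $L$ so that both $C_\theta r_\theta^{(L+1)/2}<\varepsilon/3$ and the term-(I) bound $\le \alpha c(2\sigma_{\rm in}/\sqrt{d_{\rm in}})Uq_R^{L+1}/(1-q_R)\cdot 2M_\psi L_\psi<\varepsilon/3$. Then take $n$ large enough that $N_R\le n$ and the $W_{\rm in}$-norm bound hold with probability at least $1-\eta/3$, and that $\mathbb P(|(\mathrm{II})|>\varepsilon/3)<\eta/3$. This gives $\mathbb P(|K_{\psi,n,\theta}^{(\infty)}-K_{\psi,\theta}^{(\infty)}|>\varepsilon)<\eta$. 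The only real obstacle is the order of quantifiers: $L$ must be fixed before $n\to\infty$ so that Theorem~\ref{thm:sm-kernel} applies. Because the tail bounds (I) and (III) are uniform in $n$ on a high-probability event, this ordering is legitimate.
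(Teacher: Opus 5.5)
Your proposal is correct and follows essentially the same route as the paper: the same three-term triangle inequality with $L$ fixed before $n\to\infty$, the identical boundedness/Lipschitz estimate for term (I) (the paper's \eqref{eq:sm-kernel-state-tail}), Theorem~\ref{thm:sm-kernel} for (II), and Proposition~\ref{prop:infinite-context} for (III). Your explicit handling of the random constants $C_R$, $N_R$ and of $\lVert W_{\rm in}\rVert/\sqrt n$ through high-probability events spells out rigorously what the paper compresses into the phrase ``uniformly for all sufficiently large $n$.''
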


\begin{proof}
By boundedness and Lipschitz continuity of $\psi$,
\begin{align}
&
\left|
K_{\psi,n,\theta}^{(\infty)}(t,s)
-
K_{\psi,n,\theta}^{(L)}(t,s)
\right|
\notag\\
&\quad\leq
\frac{M_\psi L_\psi}{\sqrt n}
\left(
\lVert x_t^{(\infty)}-x_t^{(L)}\rVert
+
\lVert x_s^{(\infty)}-x_s^{(L)}\rVert
\right).
\label{eq:sm-kernel-state-tail}
\end{align}
Proposition~\ref{prop:ginibre-full-history} bounds this term
geometrically in $L$, uniformly for all sufficiently large $n$. For
fixed $L$, Theorem~\ref{thm:sm-kernel} gives
\[
K_{\psi,n,\theta}^{(L)}(t,s)
\xrightarrow{\mathbb P}
K_{\psi,\theta}^{(L)}(t,s),
\]
while Proposition~\ref{prop:infinite-context} gives
$K_{\psi,\theta}^{(L)}(t,s)\to
K_{\psi,\theta}^{(\infty)}(t,s)$. The result follows from the triangle
inequality by first choosing $L$ large and then letting $n\to\infty$.
\end{proof}

\subsection{Selection consistency}

\begin{corollary}[Consistency of FP selection]
\label{cor:sm-selection-consistency}
Assume that the deterministic kernel and the finite reservoir use the
same feature map. Fix finite candidate and ridge grids, with all ridge
parameters strictly positive. For any fixed maximum retained lag $L$,
if the deterministic validation score has a unique minimizer, then
finite-width reservoir selection returns the same minimizer with
probability tending to one as $n\to\infty$.

In the stable regime, a unique complete-history optimum is preserved
by all sufficiently large finite contexts, and selection based on the
complete finite-reservoir trajectory converges to this optimum as
$n\to\infty$.
\end{corollary}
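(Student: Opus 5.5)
The plan is to reduce selection consistency to continuity of the validation score as a function of the kernel blocks, and then use stability of a unique argmin over a finite set. For a candidate $\theta$ and ridge $\lambda>0$, write the deterministic score as $S(\theta,\lambda)=\Phi_\lambda\bigl(K^{(L)}_{\theta,TT},K^{(L)}_{\theta,VT}\bigr)$, where
\[
\Phi_\lambda(K_{TT},K_{VT}) := \NMSE\bigl(K_{VT}(K_{TT}+\lambda I)^{-1}Y_T,\;Y_V\bigr).
\]
The finite-width score is $S_n(\theta,\lambda)=\Phi_\lambda\bigl(K^{(L)}_{n,\theta,TT},K^{(L)}_{n,\theta,VT}\bigr)$, with the same map $\Phi_\lambda$ and the same targets. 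Both $K_{TT}$ and $K_{n,TT}$ are Gram matrices, hence positive semidefinite. Therefore $\|(K_{TT}+\lambda I)^{-1}\|\le\lambda^{-1}$, and the resolvent identity gives
\[
\bigl\|(K+\lambda I)^{-1}-(K'+\lambda I)^{-1}\bigr\|\le \lambda^{-2}\|K-K'\|.
\]
So the prediction map is locally Lipschitz in $(K_{TT},K_{VT})$ on PSD inputs. The NMSE denominator depends only on $Y_V$ and is a fixed positive constant (we assume nondegenerate validation targets). Hence $\Phi_\lambda$ is continuous.

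By Theorem~\ref{thm:sm-kernel}, each block entry converges in probability, and the pilot index sets are finite. A union bound therefore gives Frobenius convergence of both blocks for each $\theta$. The candidate grid is finite, so the convergence is also joint over all candidates. The continuous mapping theorem then yields $\max_{\theta,\lambda}|S_n(\theta,\lambda)-S(\theta,\lambda)|\xrightarrow{\mathbb P}0$, since the ridge grid is finite with $\lambda_{\min}>0$.

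Let $(\theta^\star,\lambda^\star)$ be the unique minimizer, and let $\Delta>0$ be the gap to the second-best score, which is positive by uniqueness and finiteness. On the event $\{\max|S_n-S|<\Delta/2\}$, the empirical argmin equals $(\theta^\star,\lambda^\star)$. This event has probability tending to one, which proves the fixed-$L$ claim. If only $\theta$ is selected, with $\lambda$ minimized out, the same argument applies to $\min_\lambda S$, which is also continuous.

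For the complete-history part, define $S^{(\infty)}$ from $K^{(\infty)}_{\psi,\theta}$. Proposition~\ref{prop:infinite-context} gives uniform geometric convergence of the blocks as $L\to\infty$ over the finite grid. The same continuity argument then gives $\max|S^{(L)}-S^{(\infty)}|\to0$. Once this maximum falls below half the $S^{(\infty)}$ gap, $S^{(L)}$ has the same unique minimizer. Selection on the full finite trajectory uses $K^{(\infty)}_{\psi,n,\theta}$, which converges in probability to $K^{(\infty)}_{\psi,\theta}$ by Theorem~\ref{thm:sm-full-history-kernel}. The gap argument then applies directly to $S^{(\infty)}$.

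The main obstacle is the Lipschitz control of the ridge solve. It needs PSD-ness of the empirical blocks and $\lambda>0$, so that the $\lambda^{-1}$ resolvent bound holds uniformly in $n$ regardless of rank deficiency. This is why the statement requires strictly positive ridge parameters. One further point must be checked: the deployed features are computed on untruncated trajectories from $x_0=0$ rather than on the $L$-truncated states. The fixed-$L$ claim should therefore be read with respect to the $L$-context empirical blocks, while the untruncated-trajectory case is covered by the second part.
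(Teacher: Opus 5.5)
Your proposal is correct and follows essentially the same route as the paper. The $\lambda^{-1}$ resolvent bound gives continuity of the ridge prediction and the NMSE in the kernel blocks, Theorem~\ref{thm:sm-kernel} with a finite union bound gives uniform score convergence in probability, and the half-gap argument transfers the unique minimizer, while Proposition~\ref{prop:infinite-context} together with Theorem~\ref{thm:sm-full-history-kernel} handles the complete-history part in the same two steps. One small imprecision: the deterministic block $K^{(L)}_{\theta,TT}$ is not literally a Gram matrix, but it is positive semidefinite as the limit of the empirical Gram blocks (the PSD cone is closed), so your resolvent bound applies to it unchanged.
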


\begin{proof}
Let $T$ and $V$ be the fixed pilot training and validation indices.
For $\lambda\geq\lambda_{\min}>0$, the prediction map
\[
(K_{TT},K_{VT})
\longmapsto
K_{VT}(K_{TT}+\lambda I)^{-1}Y_T
\]
is continuous and uniformly Lipschitz on the fixed finite blocks. This
follows from
$\lVert(K_{TT}+\lambda I)^{-1}\rVert\leq\lambda_{\min}^{-1}$
and the resolvent identity. The same is therefore true of the
validation NMSE, assuming its denominator is nonzero.

Theorem~\ref{thm:sm-kernel} gives convergence of every fixed-context
kernel entry. A union bound over the finite pilot, candidate, and ridge
grids yields
\[
\max_{\theta,\lambda}
\left|
\NMSE_{{\rm val},n}^{(L)}(\theta,\lambda)
-
\NMSE_{\rm val}^{(L)}(\theta,\lambda)
\right|
\xrightarrow{\mathbb P}0.
\]
A unique minimizer on a finite grid has a positive gap to the
second-best score. Whenever the uniform error is less than half this
gap, the empirical and deterministic minimizers agree. This proves the
fixed-context statement.

Proposition~\ref{prop:infinite-context} gives uniform convergence of
the finite-context deterministic scores to their complete-history
counterparts over the same finite grids. Hence a unique
complete-history minimizer is also the finite-context minimizer for all
sufficiently large $L$. Finally,
Theorem~\ref{thm:sm-full-history-kernel} gives uniform convergence of
the empirical complete-history scores, and the same positive-gap
argument proves the final claim.
\end{proof}

Near ties have a small score gap and may therefore require larger
reservoir width or maximum lag before the ranking stabilizes.

\section{Haar-Orthogonal Recurrent Matrix}
\label{app:haar-diagonal-proof}

The Ginibre proof uses Gaussian concentration to establish diagonal
self-averaging. For a Haar-orthogonal recurrent matrix, the same
conclusion follows directly from orthogonal invariance.

\begin{proposition}[Haar-orthogonal extension]
\label{prop:haar-extension}
Let $O_n$ be Haar distributed on the orthogonal group, independently of
$W_{\rm in}^{(n)}$, and let
\[
A_n=(1-\alpha)I_n+\alpha\sigma_rO_n.
\]
For every fixed $L<\infty$, Lemma~\ref{lem:sm-diag} holds with the
coefficients $\tau_{k,\ell}$ in \eqref{eq:sm-tau}. Consequently,
Theorem~\ref{thm:sm-kernel} holds with the same deterministic kernel.
If $\sigma_r<1$, the complete-history conclusion also holds.
\end{proposition}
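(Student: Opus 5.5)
The plan is to follow the Ginibre proof structure step by step, replacing Gaussian concentration by invariance and concentration properties of Haar measure. Normality of $O_n$ will make several steps easier. Since $O_n^\top O_n=I_n$, we have $O_n^i(O_n^\top)^j=O_n^{i-j}$, where negative powers are understood as powers of $O_n^\top$. Because $I_n$ commutes with $O_n$, the binomial theorem gives
\[
A_n^k(A_n^\ell)^\top=\sum_{i=0}^{k}\sum_{j=0}^{\ell}\binom{k}{i}\binom{\ell}{j}(1-\alpha)^{k+\ell-i-j}(\alpha\sigma_r)^{i+j}\,O_n^{\,i-j}.
\]
It therefore suffices to show two things for each fixed integer $m$. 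First, $\frac1n\sum_{i=1}^n\bigl|(O_n^m)_{ii}-\delta_{m0}\bigr|^2\to0$ in probability. Second, the limit of the normalized trace is $\delta_{m0}$. Granting both, the surviving terms are exactly those with $i=j$, and they reproduce $\tau_{k,\ell}$ in \eqref{eq:sm-tau}. In particular, the trace limit of Proposition~\ref{prop:sm-moments} holds with the same coefficients, because the Haar-orthogonal ensemble has the same limiting $*$-moments as the circular element in this word family.

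For the first moment, $(O_n^m)_{ii}$ is identically distributed across $i$, since the Haar law is invariant under conjugation by permutation matrices. Hence $\mathbb E(O_n^m)_{11}=\frac1n\mathbb E\Tr O_n^m$. For $m\neq0$, the classical results of Diaconis--Shahshahani and Rains show that $\mathbb E\Tr O_n^m$ is bounded in $n$; it equals $1$ or $0$ according to the parity of $m$, up to corrections. Therefore $\mathbb E(O_n^m)_{11}\to\delta_{m0}$.

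For the fluctuation step, I would use concentration of measure on $SO(n)$, and on its other coset, by conditioning on $\det O_n$. Gromov--Milman, or the log-Sobolev inequality with constant of order $1/n$ for the Hilbert--Schmidt metric, gives $\operatorname{Var}(F)\le C\,\mathrm{Lip}(F)^2/n$. The map $O\mapsto(O^m)_{11}$ is $|m|$-Lipschitz in Hilbert--Schmidt norm, because telescoping differences of products of orthogonal matrices gives $\lVert O^m-P^m\rVert_{HS}\le |m|\,\lVert O-P\rVert_{HS}$. Thus $\operatorname{Var}((O_n^m)_{11})=O(m^2/n)$. Combined with the mean limit, this yields $\mathbb E|(O_n^m)_{11}-\delta_{m0}|^2\to0$. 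Markov's inequality and a finite union bound over $0\le k,\ell\le L$ then give the analogue of Lemma~\ref{lem:sm-diag}. I expect this step to be the main obstacle: one must quote a precise concentration theorem for the full orthogonal group, handling its two connected components, and a clean bound on $\mathbb E\Tr O_n^m$.

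The remaining claims should then follow from the existing proofs with simplifications. The proof of Theorem~\ref{thm:sm-kernel} used only three ingredients: the diagonal lemma, the Gaussian continuity bound \eqref{eq:sm-gaussian-continuity}, and norm control of $A_n^k$. Here $\lVert A_n^k\rVert\le\rho_{\rm FP}^k$ with $\rho_{\rm FP}=(1-\alpha)+\alpha\sigma_r$, deterministically, because $O_n$ is an isometry. So the conditional Poincar\'e bound \eqref{eq:sm-kernel-conditional-variance} is $O(n^{-1})$ outright, and the kernel limit is unchanged. For complete history with $\sigma_r<1$, the deterministic tail estimate of Proposition~\ref{prop:infinite-context} depends only on $\tau_{k,\ell}$, so it carries over verbatim. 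The finite-width tail of Proposition~\ref{prop:ginibre-full-history} becomes trivial: it holds with $C_R=1$ and $q=\rho_{\rm FP}<1$ for every $n$, with no strong-convergence argument needed. The triangle-inequality argument of Theorem~\ref{thm:sm-full-history-kernel} then applies unchanged.
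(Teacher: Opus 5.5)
Your argument is sound in outline. It agrees with the paper everywhere except in how diagonal self-averaging is obtained. Both proofs reduce $A_n^k(A_n^\ell)^\top$ to a Laurent polynomial in $O_n$. Both use the deterministic bound $\lVert A_n^m\rVert\le(1-\alpha+\alpha\sigma_r)^m$ for the kernel and complete-history parts, and both rerun the conditional-Gaussian and triangle-inequality arguments.

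For the diagonal entries, the paper does not invoke concentration on the group at all. Conjugation invariance of Haar measure on the whole of $O(n)$ gives $(B_n)_{11}\overset{d}{=}v^\top B_nv$, with $v$ uniform on the sphere and independent of $B_n$. The spherical fourth-moment identity then bounds $\mathbb E[|v^\top B_nv-\frac1n\Tr B_n|^2\mid B_n]$ by $2\lVert B_n\rVert^2/(n+2)$. Finally, $\frac1n\Tr B_n\to\tau_{k,\ell}$ in $L^2$ follows from Collins--Male plus the uniform norm bound.

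You instead expand into monomials $O_n^m$, take the mean from Diaconis--Shahshahani/Rains, and take the fluctuation from log-Sobolev concentration of the $|m|$-Lipschitz map $O\mapsto(O^m)_{11}$. Your route yields a fully quantitative $O(m^2/n)$ bound but needs heavier machinery. The paper's route needs only invariance and an elementary sphere computation, and it is insensitive to the fact that $O(n)$ is disconnected.

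That disconnectedness is the step you flag but leave open, and as written it does not go through. Variance bounds of the form $C\,\mathrm{Lip}(F)^2/n$ hold on $SO(n)$ and on $SO^-(n)$ separately, but not on $O(n)$. For instance, $\det$ is $1$-Lipschitz in Hilbert--Schmidt norm (the two cosets are at distance at least $2$), yet it has variance $1$.

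Conditioning on $\det O_n$ therefore only gives $\mathbb E[|F-\mu_\pm|^2\mid\det O_n=\pm1]=O(m^2/n)$ for $F=(O_n^m)_{11}$, where $\mu_\pm$ are the coset means. You then need $\mu_+\to\delta_{m0}$ and $\mu_-\to\delta_{m0}$ individually. The full-group limit $\frac12(\mu_++\mu_-)\to\delta_{m0}$ that you derived is not enough.

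The repair is short:
\begin{itemize}
\item Conjugation by permutation matrices preserves each coset, so $\mu_\pm=\mathbb E[\frac1n\Tr O_n^m\mid\det O_n=\pm1]$.
\item Since each coset has probability $\frac12$, this gives $|\mu_\pm-\delta_{m0}|\le 2\,\mathbb E|\frac1n\Tr O_n^m-\delta_{m0}|$.
\item The right-hand side is $O(1/n)$, because Diaconis--Shahshahani also give $\mathbb E[(\Tr O_n^m)^2]=O(1)$ for fixed $m\neq0$.
\end{itemize}
Alternatively, adopt the paper's sphere reduction, which sidesteps the issue entirely.
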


\begin{proof}
Fix $k,\ell$ and set
$B_n=A_n^k(A_n^\ell)^\top$. Orthogonal conjugation preserves the law
of $B_n$. If $v$ is independent and uniform on the unit sphere, then
\[
(B_n)_{11}
\overset d=
v^\top B_nv.
\]
The spherical fourth-moment identity gives
\[
\mathbb E\!\left[
\left|
v^\top B_nv-\frac1n\Tr B_n
\right|^2
\,\middle|\,B_n
\right]
\leq
\frac{2\lVert B_n\rVert^2}{n+2}.
\]
Moreover,
$\lVert B_n\rVert
\leq(1-\alpha+\alpha\sigma_r)^{k+\ell}$.
The normalized traces of fixed Laurent polynomials in $O_n$ converge
to the corresponding Haar-unitary moments
~\cite{CollinsMale2014}, which give the same
$\tau_{k,\ell}$ as in \eqref{eq:sm-tau}. Thus
\[
\mathbb E\!\left[
\left|
(B_n)_{11}-\tau_{k,\ell}
\right|^2
\right]
\longrightarrow0.
\]
The diagonal entries are exchangeable, so averaging over them, followed
by Markov's inequality and a finite union bound, proves the diagonal
self-averaging statement. The fixed-context kernel follows from the
same conditional-Gaussian argument as
Theorem~\ref{thm:sm-kernel}. For $\sigma_r<1$,
\[
\lVert A_n^m\rVert
\leq
(1-\alpha+\alpha\sigma_r)^m,
\]
which gives the complete-history extension directly.
\end{proof}

\section{Other Recurrent Matrix Structures}
\label{app:recurrent-matrix-kernels}

The kernel derivation depends on the recurrent matrix only through the
limits of
\[
\frac1n\Tr\!\left(A_n^k(A_n^\ell)^\top\right)
\]
and the corresponding diagonal self-averaging property. This section
records these coefficients for several recurrent matrix structures.
Once they are known, the covariance and nonlinear kernel are obtained
from \eqref{eq:sm-Q}--\eqref{eq:sm-deterministic-kernel} by replacing
$\tau_{k,\ell}$ with the appropriate coefficient
$\eta_{k,\ell}$. No additional kernel notation is needed.

\begin{remark}[Optional closed-form surrogate]
\label{rem:erf-surrogate}
For any of the real-valued recurrent matrices below, the choice
\[
\psi_{\rm erf}(x)
=
\operatorname{erf}(\sqrt{\pi}x/2)
\]
yields a closed-form kernel. With
\[
D_\theta^{(L)}(t)
:=
2+\pi Q_\theta^{(L)}(t,t),
\]
the kernel is
\begin{equation}
K_{\mathrm{erf},\theta}^{(L)}(t,s)
=
\frac{2}{\pi}\arcsin\!\left(
\frac{\pi Q_\theta^{(L)}(t,s)}
{\sqrt{D_\theta^{(L)}(t)D_\theta^{(L)}(s)}}
\right).
\label{eq:erf-arcsin-sm}
\end{equation}
Here $Q_\theta^{(L)}$ is formed with the relevant coefficients
$\eta_{k,\ell}$. Since
\[
\delta_{\rm erf}
:=
\sup_x
\left|
\tanh(x)-\operatorname{erf}(\sqrt{\pi}x/2)
\right|
<0.0354,
\]
the corresponding limiting kernel entries differ by less than
$2\delta_{\rm erf}<0.071$. The surrogate is used only for the
synthetic selection and timing experiments; all ETT and Telco results
use the exact $\tanh$ kernel evaluated by Gauss--Hermite quadrature.
The empirical validation is reported in
Sec.~\ref{app:surrogate-validation}.
\end{remark}

\subsection{Ginibre, Haar-orthogonal, and cyclic-shift matrices}

For the real Ginibre matrix used in the experiments,
\[
W_r^{(n)}
=
\frac{\sigma_r}{\sqrt n}G_r^{(n)},
\]
Proposition~\ref{prop:sm-moments} gives
$\eta_{k,\ell}=\tau_{k,\ell}$. Proposition~\ref{prop:haar-extension}
shows that the same coefficients hold for
$W_r^{(n)}=\sigma_rO_n$ with $O_n$ Haar orthogonal.

They also hold for the deterministic cyclic-shift matrix $C_n$,
defined by
\[
C_ne_j=e_{j+1\;({\rm mod}\;n)},
\qquad
W_r^{(n)}=\sigma_rC_n.
\]
Indeed, $C_n^\top=C_n^{-1}$ and, for fixed $i,j$ and all sufficiently
large $n$,
\[
\frac1n\Tr\!\left(C_n^i(C_n^\top)^j\right)
=
\delta_{ij}.
\]
The three structures therefore produce the same fixed-context
coefficients. Their complete-history control differs: Ginibre uses
Proposition~\ref{prop:ginibre-full-history}, whereas the orthogonal
matrices satisfy the direct geometric bound
\[
\lVert A_n^m\rVert
\leq
(1-\alpha+\alpha\sigma_r)^m.
\]

\subsection{Circulant matrices}

Let
\[
C_n
=
F_n^*
\operatorname{diag}
(\gamma_{n,0},\ldots,\gamma_{n,n-1})
F_n,
\qquad
W_r^{(n)}=\sigma_rC_n,
\]
where $F_n$ is the unitary discrete Fourier matrix. Assume that the
empirical distribution of the uniformly bounded Fourier eigenvalues
converges weakly to $\nu$. Then
\begin{equation}
\eta_{k,\ell}^{\rm circ}
=
\int_{\mathbb C}
(1-\alpha+\alpha\sigma_r z)^k
(1-\alpha+\alpha\sigma_r\overline z)^\ell
\,d\nu(z).
\label{eq:catalogue-circulant-coefficients}
\end{equation}
If the matrices are generated by a limiting Fourier symbol
$g:[0,2\pi]\to\mathbb C$, define
\[
a_\theta(\omega)
=
1-\alpha+\alpha\sigma_rg(\omega).
\]
Then
\[
\eta_{k,\ell}^{\rm circ}
=
\frac1{2\pi}
\int_0^{2\pi}
a_\theta(\omega)^k
\overline{a_\theta(\omega)}^{\,\ell}
\,d\omega.
\]
Because $A_n^k(A_n^\ell)^\top$ is circulant, every diagonal entry
equals its normalized trace; diagonal self-averaging is therefore
automatic. Complete history follows whenever
\[
\sup_{\omega\in[0,2\pi]}
|a_\theta(\omega)|<1.
\]
The cyclic shift is the special case $g(\omega)=e^{i\omega}$.

\subsection{Gaussian skew-symmetric matrices}

Let
\[
S_n
=
\frac{G_n-G_n^\top}{\sqrt{2n}},
\qquad
W_r^{(n)}=\sigma_rS_n.
\]
Since $S_n^\top=-S_n$, the Hermitian matrix $iS_n$ has the standard
semicircular limit~\cite{Mingo2017}. Writing
\[
d\mu_{\rm sc}(x)
=
\frac{\sqrt{4-x^2}}{2\pi}
\mathbf 1_{[-2,2]}(x)\,dx,
\]
the coefficients are
\begin{equation}
\eta_{k,\ell}^{\rm skew}
=
\int_{-2}^{2}
(1-\alpha+i\alpha\sigma_rx)^k
(1-\alpha-i\alpha\sigma_rx)^\ell
\,d\mu_{\rm sc}(x).
\label{eq:catalogue-skew-coefficients}
\end{equation}
Permutation symmetry and the Gaussian-Poincar\'e argument of
Lemma~\ref{lem:sm-diag} give diagonal self-averaging. Since $A_n$ is
normal, complete history follows under the asymptotic condition
\[
\sqrt{(1-\alpha)^2+4\alpha^2\sigma_r^2}<1.
\]

\subsection{Complex-valued matrices}

For complex recurrent matrices, transpose is replaced by the Hermitian
adjoint. Normalized complex Ginibre and scaled Haar-unitary matrices
again satisfy
\[
\eta_{k,\ell}^{\mathbb C}
:=
\lim_{n\to\infty}
\frac1n\Tr\!\left(A_n^k(A_n^\ell)^*\right)
=
\tau_{k,\ell}
\]
~\cite{Mingo2017,CollinsMale2014}. The corresponding Hermitian
linear-state covariance is obtained from \eqref{eq:sm-Q} by replacing
the input inner product with
$u_{t-1-k}^*u_{s-1-\ell}$. A nonlinear feature kernel additionally
requires a specified complex activation or a realification convention.
The experiments in this paper use real-valued recurrent matrices.

\begin{table*}[t]
\centering
\scriptsize
\setlength{\tabcolsep}{4pt}
\caption{Recurrent-matrix catalogue. The coefficient column gives the
quantities substituted for $\tau_{k,\ell}$ in
\eqref{eq:sm-Q}; the nonlinear kernel is then formed as in
\eqref{eq:sm-deterministic-kernel}.}
\label{tab:recurrent-matrix-catalogue}
\begin{tabularx}{\textwidth}{l l l X}
\toprule
Recurrent matrix & Coefficients & Complete-history control
& Nonlinear-kernel status \\
\midrule
Real Ginibre
& $\eta_{k,\ell}=\tau_{k,\ell}$
& Resolvent bound
& Exact $\tanh$ by quadrature; optional erf surrogate for the
synthetic and timing experiments. \\
Haar orthogonal
& $\eta_{k,\ell}=\tau_{k,\ell}$
& Direct normal-matrix bound
& Same real-valued nonlinear-kernel construction. \\
Cyclic shift
& $\eta_{k,\ell}=\tau_{k,\ell}$ for fixed lags and large $n$
& Direct normal-matrix bound
& Same real-valued nonlinear-kernel construction. \\
Circulant
& Fourier integral \eqref{eq:catalogue-circulant-coefficients}
& Uniform symbol bound
& Same construction with the Fourier coefficients. \\
Gaussian skew-symmetric
& Semicircle integral \eqref{eq:catalogue-skew-coefficients}
& Normal-matrix bound
& Same construction with the skew-symmetric coefficients. \\
Complex Ginibre / Haar unitary
& $\eta_{k,\ell}^{\mathbb C}=\tau_{k,\ell}$
& Resolvent / normal-matrix bound
& Linear covariance explicit; nonlinear kernel requires a complex
feature convention. \\
\bottomrule
\end{tabularx}
\end{table*}

\section{Full Task-Level Deployment Scores}
\label{sec:full-results}

Table~\ref{tab:full-deployment} gives per-task scores for the four
primary selectors, including ESN Direct$_{500}$, at all five
deployment widths ($n\in\{1\,000,3\,000,5\,000,10\,000,20\,000\}$),
while Figure~\ref{fig:score-vs-n} shows the corresponding width
trends.

\begin{table*}[t]
\centering
\small
\setlength{\tabcolsep}{3pt}
\renewcommand{\arraystretch}{0.92}
\caption{Full task-level deployment scores under the holdout-ridge
protocol. Values are $1-\NRMSE$ mean$\pm$std over deployment seeds.
Intermediate widths use three deployment seeds; $n=20\,000$ uses ten
deployment seeds. Best score per task and width is bolded; second-best
is underlined. Identical FP and Direct$_{500}$ rows (e.g., MC) indicate
that both selectors chose the same configuration.}
\label{tab:full-deployment}
\begin{tabular}{llccccc}
\toprule
Task & Selector & $n=1\,000$ & $n=3\,000$ & $n=5\,000$ & $n=10\,000$ & $n=20\,000$\\
\midrule
MC & FP & \textbf{\boldmath 0.322$\pm$0.023} & \textbf{\boldmath 0.418$\pm$0.004} & \textbf{\boldmath 0.439$\pm$0.001} & \textbf{\boldmath 0.457$\pm$0.006} & \textbf{\boldmath 0.463$\pm$0.003}\\
 & Memory$_{500}$ & 0.184$\pm$0.004 & 0.211$\pm$0.004 & 0.214$\pm$0.003 & 0.279$\pm$0.005 & 0.283$\pm$0.001\\
 & Direct$_{500}$ & \textbf{\boldmath 0.322$\pm$0.023} & \textbf{\boldmath 0.418$\pm$0.004} & \textbf{\boldmath 0.439$\pm$0.001} & \textbf{\boldmath 0.457$\pm$0.006} & \textbf{\boldmath 0.463$\pm$0.003}\\
 & ESN Direct$_{500}$ & \underline{0.223$\pm$0.004} & \underline{0.268$\pm$0.015} & \underline{0.302$\pm$0.003} & \underline{0.318$\pm$0.006} & \underline{0.328$\pm$0.002}\\
\addlinespace[1.5pt]
NARMA10 & FP & 0.763$\pm$0.014 & \textbf{\boldmath 0.850$\pm$0.012} & \underline{0.854$\pm$0.004} & \underline{0.871$\pm$0.010} & \underline{0.870$\pm$0.034}\\
 & Memory$_{500}$ & 0.720$\pm$0.014 & 0.782$\pm$0.008 & 0.789$\pm$0.002 & 0.804$\pm$0.004 & 0.805$\pm$0.030\\
 & Direct$_{500}$ & \underline{0.810$\pm$0.012} & 0.820$\pm$0.009 & 0.825$\pm$0.009 & 0.824$\pm$0.009 & 0.832$\pm$0.025\\
 & ESN Direct$_{500}$ & \textbf{\boldmath 0.820$\pm$0.011} & \underline{0.836$\pm$0.004} & \textbf{\boldmath 0.865$\pm$0.009} & \textbf{\boldmath 0.872$\pm$0.007} & \textbf{\boldmath 0.875$\pm$0.025}\\
\addlinespace[1.5pt]
NARMA20 & FP & 0.738$\pm$0.007 & \underline{0.856$\pm$0.010} & \underline{0.880$\pm$0.006} & \underline{0.909$\pm$0.009} & \underline{0.917$\pm$0.009}\\
 & Memory$_{500}$ & 0.725$\pm$0.003 & 0.824$\pm$0.012 & 0.843$\pm$0.006 & 0.871$\pm$0.011 & 0.877$\pm$0.007\\
 & Direct$_{500}$ & \textbf{\boldmath 0.805$\pm$0.005} & \textbf{\boldmath 0.904$\pm$0.005} & \textbf{\boldmath 0.915$\pm$0.004} & \textbf{\boldmath 0.931$\pm$0.007} & \textbf{\boldmath 0.936$\pm$0.007}\\
 & ESN Direct$_{500}$ & \underline{0.739$\pm$0.006} & 0.831$\pm$0.008 & 0.847$\pm$0.006 & 0.862$\pm$0.002 & 0.870$\pm$0.009\\
\addlinespace[1.5pt]
NARMA30 & FP & \textbf{\boldmath 0.618$\pm$0.010} & \textbf{\boldmath 0.730$\pm$0.019} & \textbf{\boldmath 0.773$\pm$0.008} & \textbf{\boldmath 0.824$\pm$0.002} & \textbf{\boldmath 0.835$\pm$0.013}\\
 & Memory$_{500}$ & 0.602$\pm$0.022 & \underline{0.689$\pm$0.007} & 0.734$\pm$0.006 & 0.770$\pm$0.006 & 0.772$\pm$0.010\\
 & Direct$_{500}$ & \underline{0.613$\pm$0.010} & 0.653$\pm$0.056 & \underline{0.745$\pm$0.010} & \underline{0.790$\pm$0.005} & \underline{0.797$\pm$0.011}\\
 & ESN Direct$_{500}$ & 0.607$\pm$0.009 & 0.608$\pm$0.007 & 0.610$\pm$0.008 & 0.609$\pm$0.010 & 0.605$\pm$0.010\\
\addlinespace[1.5pt]
NARMA50 & FP & \underline{0.598$\pm$0.005} & \underline{0.605$\pm$0.004} & \underline{0.606$\pm$0.004} & \underline{0.606$\pm$0.005} & \textbf{\boldmath 0.610$\pm$0.008}\\
 & Memory$_{500}$ & 0.412$\pm$0.045 & 0.538$\pm$0.006 & 0.553$\pm$0.002 & 0.574$\pm$0.004 & 0.578$\pm$0.012\\
 & Direct$_{500}$ & \textbf{\boldmath 0.602$\pm$0.005} & \textbf{\boldmath 0.606$\pm$0.004} & \textbf{\boldmath 0.607$\pm$0.004} & \textbf{\boldmath 0.607$\pm$0.005} & \textbf{\boldmath 0.610$\pm$0.008}\\
 & ESN Direct$_{500}$ & 0.573$\pm$0.008 & 0.590$\pm$0.004 & 0.596$\pm$0.003 & 0.600$\pm$0.004 & \underline{0.607$\pm$0.009}\\
\addlinespace[1.5pt]
Inubushi & FP & 0.948$\pm$0.004 & 0.960$\pm$0.003 & 0.962$\pm$0.002 & 0.962$\pm$0.002 & 0.966$\pm$0.003\\
 & Memory$_{500}$ & 0.793$\pm$0.005 & 0.798$\pm$0.010 & 0.803$\pm$0.007 & 0.811$\pm$0.009 & 0.819$\pm$0.004\\
 & Direct$_{500}$ & \underline{0.969$\pm$0.002} & \underline{0.978$\pm$0.001} & \underline{0.979$\pm$0.001} & \underline{0.979$\pm$0.002} & \underline{0.980$\pm$0.001}\\
 & ESN Direct$_{500}$ & \textbf{\boldmath 0.994$\pm$0.001} & \textbf{\boldmath 0.995$\pm$0.001} & \textbf{\boldmath 0.996$\pm$0.000} & \textbf{\boldmath 0.995$\pm$0.000} & \textbf{\boldmath 0.994$\pm$0.000}\\
\addlinespace[1.5pt]
Lorenz63 & FP & 0.600$\pm$0.006 & 0.620$\pm$0.005 & 0.618$\pm$0.003 & 0.621$\pm$0.002 & 0.624$\pm$0.004\\
 & Memory$_{500}$ & 0.490$\pm$0.013 & 0.486$\pm$0.011 & 0.496$\pm$0.019 & 0.475$\pm$0.003 & 0.490$\pm$0.015\\
 & Direct$_{500}$ & \underline{0.652$\pm$0.007} & \underline{0.665$\pm$0.004} & \underline{0.669$\pm$0.008} & \underline{0.672$\pm$0.001} & \textbf{\boldmath 0.685$\pm$0.007}\\
 & ESN Direct$_{500}$ & \textbf{\boldmath 0.738$\pm$0.003} & \textbf{\boldmath 0.733$\pm$0.023} & \textbf{\boldmath 0.687$\pm$0.012} & \textbf{\boldmath 0.674$\pm$0.044} & \underline{0.679$\pm$0.033}\\
\addlinespace[1.5pt]
SF-MG30 & FP & \underline{0.898$\pm$0.014} & \underline{0.918$\pm$0.003} & \textbf{\boldmath 0.931$\pm$0.003} & \textbf{\boldmath 0.928$\pm$0.002} & \underline{0.926$\pm$0.002}\\
 & Memory$_{500}$ & 0.841$\pm$0.009 & 0.870$\pm$0.020 & 0.887$\pm$0.004 & 0.891$\pm$0.003 & 0.891$\pm$0.004\\
 & Direct$_{500}$ & \textbf{\boldmath 0.904$\pm$0.013} & \textbf{\boldmath 0.925$\pm$0.001} & \underline{0.927$\pm$0.004} & \underline{0.926$\pm$0.001} & \textbf{\boldmath 0.927$\pm$0.002}\\
 & ESN Direct$_{500}$ & 0.893$\pm$0.001 & 0.902$\pm$0.003 & 0.884$\pm$0.009 & 0.898$\pm$0.003 & 0.898$\pm$0.008\\
\addlinespace[1.5pt]
MG84 & FP & 0.812$\pm$0.005 & 0.816$\pm$0.001 & 0.816$\pm$0.002 & 0.819$\pm$0.001 & 0.822$\pm$0.002\\
 & Memory$_{500}$ & 0.786$\pm$0.014 & 0.802$\pm$0.018 & \underline{0.827$\pm$0.003} & \underline{0.823$\pm$0.011} & \underline{0.828$\pm$0.003}\\
 & Direct$_{500}$ & \textbf{\boldmath 0.844$\pm$0.018} & \textbf{\boldmath 0.829$\pm$0.004} & \textbf{\boldmath 0.833$\pm$0.001} & \textbf{\boldmath 0.835$\pm$0.000} & \textbf{\boldmath 0.836$\pm$0.001}\\
 & ESN Direct$_{500}$ & \underline{0.827$\pm$0.006} & \underline{0.824$\pm$0.005} & 0.815$\pm$0.003 & 0.816$\pm$0.001 & 0.818$\pm$0.007\\
\addlinespace[1.5pt]
Lorenz96 & FP & 0.629$\pm$0.004 & 0.666$\pm$0.018 & \textbf{\boldmath 0.683$\pm$0.001} & \textbf{\boldmath 0.686$\pm$0.001} & \textbf{\boldmath 0.686$\pm$0.002}\\
 & Memory$_{500}$ & 0.637$\pm$0.017 & \underline{0.677$\pm$0.002} & \underline{0.680$\pm$0.001} & \underline{0.683$\pm$0.001} & \underline{0.683$\pm$0.002}\\
 & Direct$_{500}$ & \textbf{\boldmath 0.679$\pm$0.003} & \textbf{\boldmath 0.694$\pm$0.001} & \underline{0.680$\pm$0.021} & 0.666$\pm$0.022 & 0.671$\pm$0.023\\
 & ESN Direct$_{500}$ & \underline{0.664$\pm$0.002} & 0.668$\pm$0.000 & 0.669$\pm$0.000 & 0.670$\pm$0.000 & 0.670$\pm$0.000\\
\bottomrule
\end{tabular}
\end{table*}

\begin{figure*}[t]
\centering
\includegraphics[width=\textwidth]{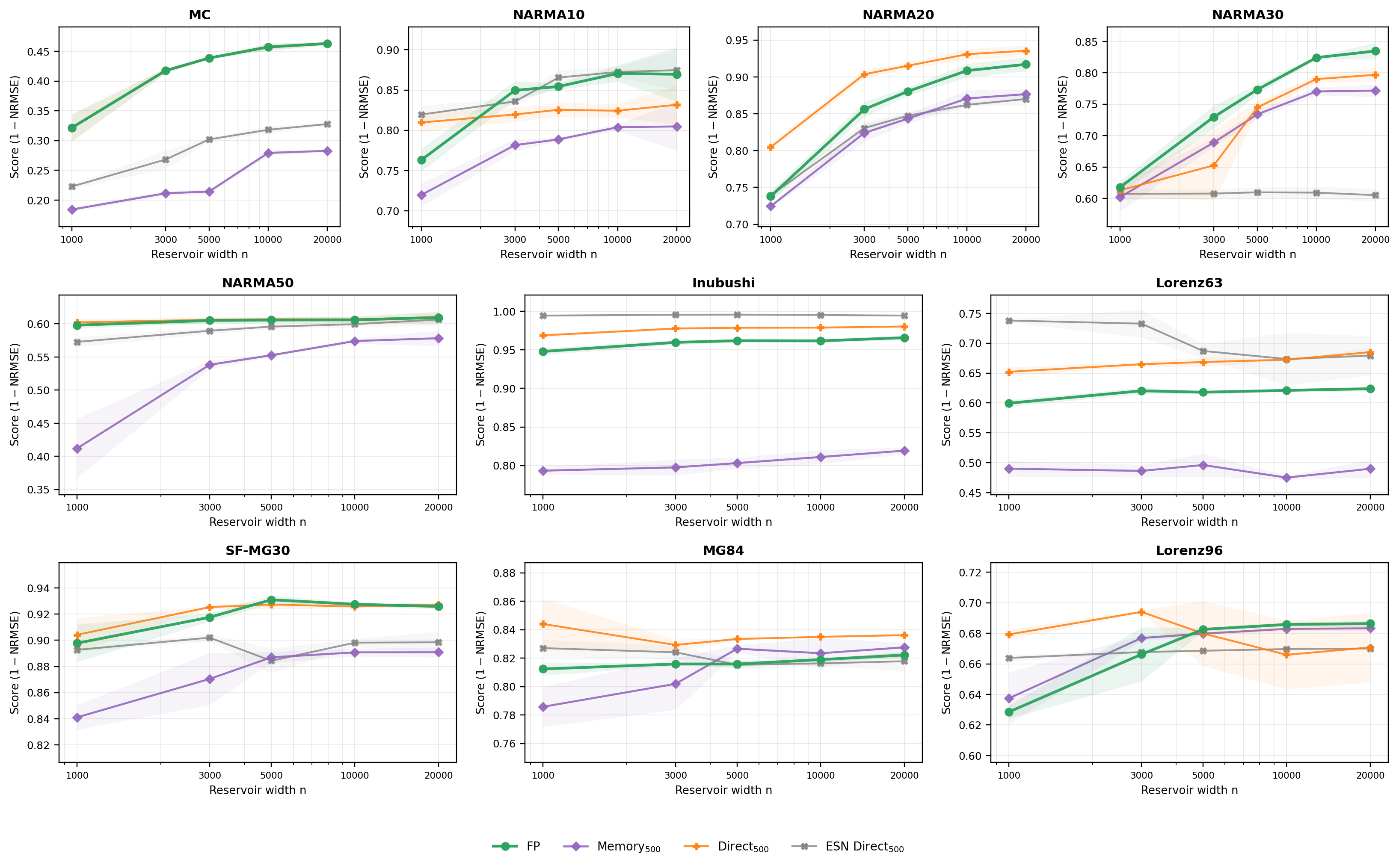}
\caption{Deployment score vs.\ reservoir width $n$ through $n=20\,000$
for the ten-task, four-selector comparison. Shaded bands show $\pm1$
s.d.\ over deployment seeds; the $n=20\,000$ column uses ten deployment
seeds and the intermediate widths use three. Higher is better.}
\label{fig:score-vs-n}
\end{figure*}

\section{Selection-Length Sensitivity and Selection Time}
\label{sec:timing}

Table~\ref{tab:selection-length} and
Figures~\ref{fig:timing-perf}--\ref{fig:timing-avg} report the
auxiliary coarse-grid sweep over the selection-sequence length
$T_{\rm select}$ summarized in Section~\ref{sec:synthetic-results}: deployment
performance at $n=20\,000$ and per-task selection CPU time as a
function of $T_{\rm select}$, at $n_{\rm select}=500$.

\begin{table*}[t]
\centering
\small
\setlength{\textwidth}{4pt}
\caption{Selection-length sensitivity in an auxiliary coarse-grid
synthetic sweep. Scores are deployment $1-\NRMSE$ at $n=20\,000$,
averaged over the ten synthetic tasks; $\pm$ denotes task-level
standard deviation. The selection length $T_{\rm select}$ is used only
during hyperparameter selection. FP uses the closed-form erf-surrogate selection kernel from
Remark~\ref{rem:erf-surrogate}; matched exact-$\tanh$ checks are
reported in Sec.~\ref{app:surrogate-validation}. Memory$_{500}$ uses
$D=\min(T_{\rm select},1000)$ delayed targets and selects the same
candidate throughout this sweep. Best entries are bolded and
second-best entries are underlined.}
\label{tab:selection-length}
\begin{tabular}{@{}lcccc@{}}
\toprule
$T_{\rm select}$ &
FP $\uparrow$ &
Memory$_{500}$ $\uparrow$ &
Direct$_{500}$ $\uparrow$ &
ESN Direct$_{500}$ $\uparrow$\\
\midrule
150
& $0.627{\pm}0.196$
& \textbf{\boldmath $0.727{\pm}0.165$}
& \underline{$0.662{\pm}0.176$}
& $0.640{\pm}0.261$\\
250
& \textbf{\boldmath $0.744{\pm}0.148$}
& \underline{$0.727{\pm}0.165$}
& $0.688{\pm}0.188$
& $0.626{\pm}0.273$\\
500
& \textbf{\boldmath $0.771{\pm}0.158$}
& \underline{$0.727{\pm}0.165$}
& $0.705{\pm}0.147$
& $0.698{\pm}0.191$\\
1000
& \textbf{\boldmath $0.754{\pm}0.185$}
& $0.727{\pm}0.165$
& \underline{$0.746{\pm}0.176$}
& $0.710{\pm}0.190$\\
2000
& \underline{$0.749{\pm}0.202$}
& $0.727{\pm}0.165$
& \textbf{\boldmath $0.761{\pm}0.169$}
& $0.727{\pm}0.192$\\
\bottomrule
\end{tabular}
\end{table*}

\begin{figure*}[t]
\centering
\begin{minipage}[t]{0.48\textwidth}
  \vspace{0pt}
  \centering
  \includegraphics[width=\linewidth]{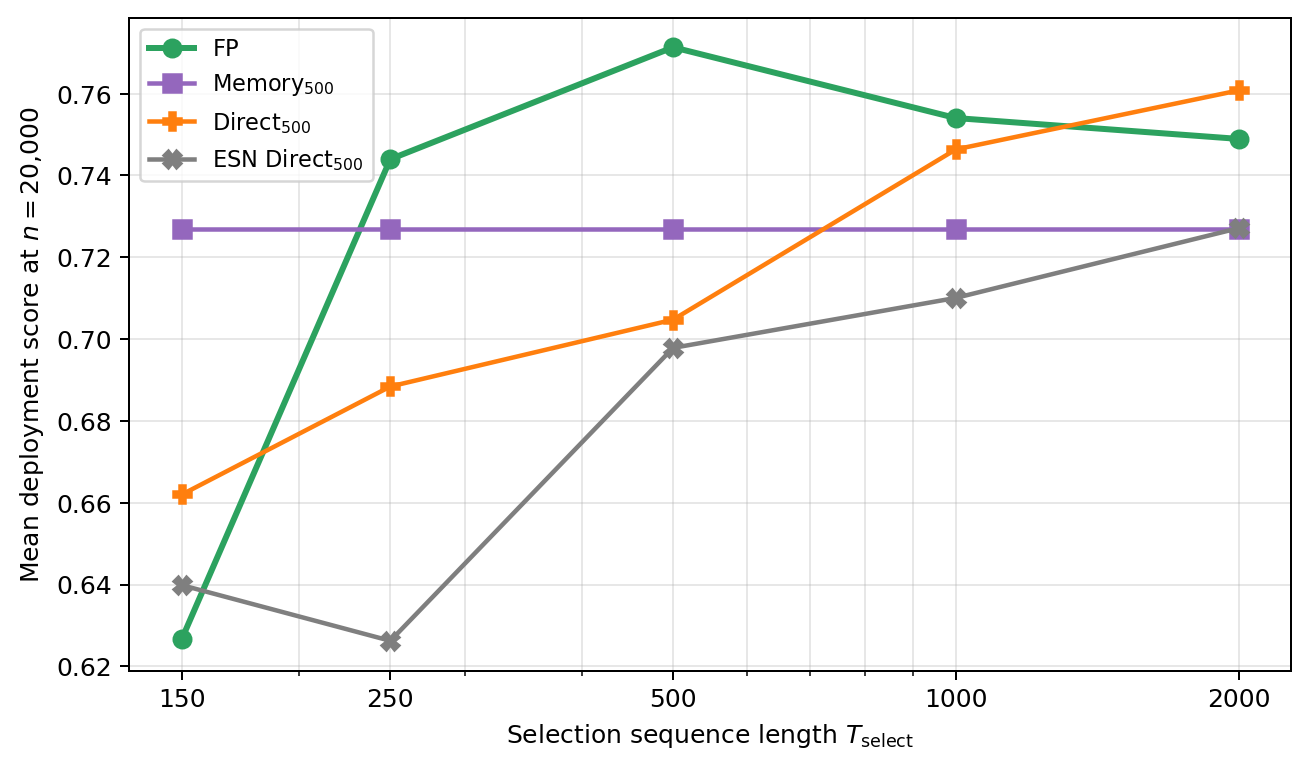}
  \caption{Mean deployment score at $n=20\,000$ as a function of
  $T_{\rm select}$. Higher is better. FP reaches its best observed
  mean at $T_{\rm select}=500$; Direct$_{500}$ and ESN Direct$_{500}$
  improve with longer sequences. Memory$_{500}$ selects the same
  candidate across this sweep and is therefore flat; its proxy uses
  $D=\min\{T_{\rm select},1000\}$.}
  \label{fig:timing-perf}
\end{minipage}
\hfill
\begin{minipage}[t]{0.48\textwidth}
  \vspace{0pt}
  \centering
  \includegraphics[width=\linewidth]{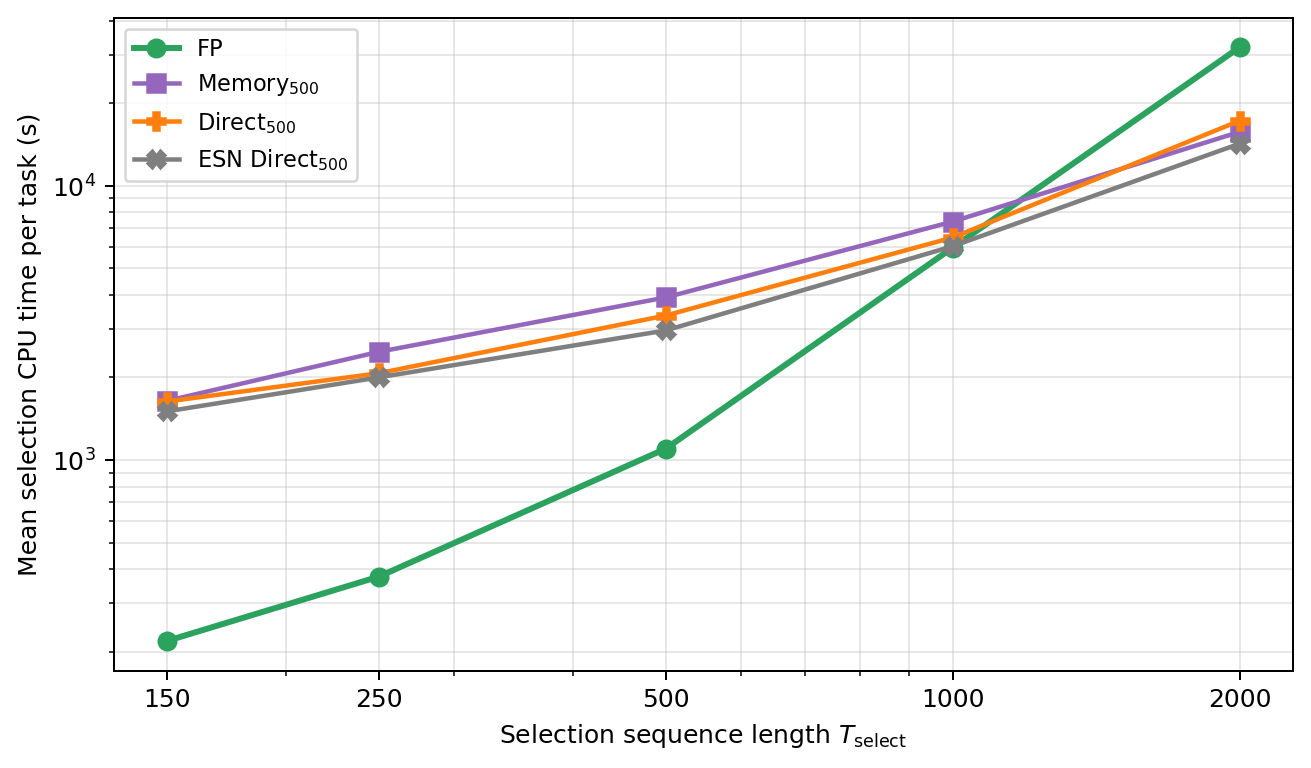}
  \caption{Mean selection CPU time per task averaged over the ten-task
  synthetic suite as a function of $T_{\rm select}$
  ($n_{\rm select}=500$). All points are measured runs. FP uses the
  closed-form erf-surrogate selection kernel. Lower is better.}
  \label{fig:timing-avg}
\end{minipage}
\end{figure*}

\subsection{Surrogate-kernel validation}
\label{app:surrogate-validation}

The main synthetic selection and timing experiments use the closed-form
erf surrogate of Remark~\ref{rem:erf-surrogate}. All real-data
forecasting experiments, both ETT and Telco, use the exact $\tanh$
kernel evaluated by Gauss--Hermite quadrature. The surrogate is
therefore used only as a computational accelerator in the controlled
synthetic setting.

We validated this choice by comparing matched erf-surrogate and
exact-$\tanh$ FP selections on the synthetic suite. Across the checked
selection lengths, the largest task-level deployment-score difference
was $6.3\times 10^{-3}$, and the mean difference was $O(10^{-3})$,
well below the task-level variability in the deployment tables. This
supports using the surrogate for the synthetic selection and timing
sweeps. By contrast, the real-data experiments are reported with the
exact $\tanh$ kernel, so no real-data conclusion relies on the
surrogate approximation.

\section{Forecasting Horizon-Level Results}
\label{sec:forecasting-horizons}

Table~\ref{tab:forecasting-horizons} gives the underlying
horizon-level scores.

\begin{table*}[t]
\centering
\small
\setlength{\tabcolsep}{4pt}
\caption{Horizon-level forecasting results at $n=20\,000$. Scores are
$1-\NRMSE$; higher is better. ETTh is sampled hourly, so horizon steps
coincide with physical hours; ETTm is sampled every 15 minutes, and
horizons are given in steps with the physical horizon in parentheses.
ETT deviations are over three deployment seeds for deterministic
selectors; Telco deviations are across ten cells. Telco selection
optimizes the aggregate over horizons 1--12; the table shows three
representative horizons, so their mean differs from the aggregate scores
of Table IV of the main paper.}
\label{tab:forecasting-horizons}
\begin{tabular}{llcccc}
\toprule
Dataset & Metric & FP & Memory$_{500}$ & ESN Direct$_{500}$ & Direct$_{500}$ \\
\midrule
ETTh1 & 1 h  & $0.695\pm0.00056$ & $0.597\pm0.00024$ & $0.731\pm0.00048$ & $0.724\pm0.00068$ \\
      & 6 h  & $0.587\pm0.00012$ & $0.576\pm0.00024$ & $0.586\pm0.00156$ & $0.589\pm0.00044$ \\
      & 12 h & $0.386\pm0.00102$ & $0.392\pm0.00009$ & $0.374\pm0.00213$ & $0.387\pm0.00100$ \\
\midrule
ETTh2 & 1 h  & $0.836\pm0.00022$ & $0.583\pm0.00010$ & $0.835\pm0.00063$ & $0.836\pm0.00022$ \\
      & 6 h  & $0.620\pm0.00135$ & $0.509\pm0.00005$ & $0.609\pm0.00165$ & $0.620\pm0.00135$ \\
      & 12 h & $0.537\pm0.00027$ & $0.483\pm0.00013$ & $0.542\pm0.00144$ & $0.537\pm0.00027$ \\
\midrule
ETTm1 & 4 steps (1 h)   & $0.719\pm0.00145$ & $0.702\pm0.00021$ & $0.728\pm0.00059$ & $0.714\pm0.00088$ \\
      & 24 steps (6 h)  & $0.446\pm0.00116$ & $0.435\pm0.00012$ & $0.449\pm0.00079$ & $0.438\pm0.00191$ \\
      & 48 steps (12 h) & $0.351\pm0.00123$ & $0.357\pm0.00003$ & $0.346\pm0.00097$ & $0.345\pm0.00089$ \\
\midrule
ETTm2 & 4 steps (1 h)   & $0.862\pm0.00046$ & $0.762\pm0.00008$ & $0.861\pm0.00025$ & $0.862\pm0.00046$ \\
      & 24 steps (6 h)  & $0.456\pm0.00020$ & $0.474\pm0.00014$ & $0.457\pm0.00013$ & $0.456\pm0.00020$ \\
      & 48 steps (12 h) & $0.442\pm0.00004$ & $0.501\pm0.00025$ & $0.443\pm0.00012$ & $0.442\pm0.00004$ \\
\midrule
Telco & 1 h  & $0.601\pm0.180$ & $0.679\pm0.092$ & $0.644\pm0.151$ & $0.685\pm0.088$ \\
      & 6 h  & $0.567\pm0.162$ & $0.590\pm0.130$ & $0.593\pm0.134$ & $0.610\pm0.119$ \\
      & 12 h & $0.549\pm0.192$ & $0.589\pm0.128$ & $0.585\pm0.145$ & $0.611\pm0.119$ \\
\bottomrule
\end{tabular}
\end{table*}

\bibliographystyle{IEEEtran}
\bibliography{fp_rc}

@inproceedings{couillet2016random,
  author    = {Romain Couillet and Gilles Wainrib and Hafiz Tiomoko Ali and Harry Sevi},
  title     = {A random matrix approach to echo-state neural networks},
  booktitle = {International Conference on Machine Learning (ICML)},
  pages     = {517--525},
  year      = {2016}
}

@article{gonon2025reservoir,
  author  = {Lukas Gonon and Lyudmila Grigoryeva and Juan-Pablo Ortega},
  title   = {Reservoir Kernels and Volterra Series},
  journal = {IEEE Transactions on Neural Networks and Learning Systems},
  volume  = {37},
  number  = {5},
  pages   = {2181--2192},
  year    = {2026},
  doi     = {10.1109/TNNLS.2025.3630143}
}

@article{grigoryeva2018universal,
  author  = {Lyudmila Grigoryeva and Juan-Pablo Ortega},
  title   = {Universal discrete-time reservoir computers with stochastic inputs and linear reservoirs using non-homogeneous state-affine systems},
  journal = {Journal of Machine Learning Research},
  volume  = {19},
  number  = {24},
  pages   = {1--40},
  year    = {2018}
}

@inproceedings{
gu2021efficiently,
title={{Efficiently Modeling Long Sequences with Structured State Spaces}},
author={Albert Gu and Karan Goel and Christopher Re},
booktitle={International Conference on Learning Representations},
year={2022},
url={https://openreview.net/forum?id=uYLFoz1vlAC}
}

@article{jaeger2001echo,
  title={The “echo state” approach to analysing and training recurrent neural networks-with an erratum note},
  author={Jaeger, Herbert},
  journal={Bonn, Germany: German national research center for information technology gmd technical report},
  volume={148},
  number={34},
  pages={13},
  year={2001},
  publisher={Bonn}
}

@inproceedings{lagomarsini2025benchmarking,
  title={Benchmarking Nonlinear Readouts in Linear Reservoir Networks},
  author={Lagomarsini, Giacomo and Ceni, Andrea and Gallicchio, Claudio},
  booktitle={International Conference on Artificial Neural Networks},
  pages={176--187},
  year={2025},
  organization={Springer}
}

@article{lukosevicius2009reservoir,
  author  = {Mantas Luko{\v{s}}evi{\v{c}}ius and Herbert Jaeger},
  title   = {Reservoir computing approaches to recurrent neural network training},
  journal = {Computer Science Review},
  volume  = {3},
  number  = {3},
  pages   = {127--149},
  year    = {2009}
}

@article{tanaka2019,
  title={Recent advances in physical reservoir computing: A review},
  author={Tanaka, Gouhei and Yamane, Toshiyuki and H{\'e}roux, Jean Benoit and Nakane, Ryosho and Kanazawa, Naoki and Takeda, Seiji and Numata, Hidetoshi and Nakano, Daiju and Hirose, Akira},
  journal={Neural Networks},
  volume={115},
  pages={100--123},
  year={2019},
  publisher={Elsevier}
}

@inproceedings{orvieto2023resurrecting,
  title={Resurrecting recurrent neural networks for long sequences},
  author={Orvieto, Antonio and Smith, Samuel L and Gu, Albert and Fernando, Anushan and Gulcehre, Caglar and Pascanu, Razvan and De, Soham},
  booktitle={International conference on machine learning},
  pages={26670--26698},
  year={2023},
  organization={PMLR}
}

@book{mingo2017,
  title={Free probability and random matrices},
  author={Mingo, James A and Speicher, Roland},
  volume={35},
  year={2017},
  publisher={Springer}
}

@article{BordenaveChafai2012,
  author  = {Bordenave, Charles and Chafa{\"i}, Djalil},
  title   = {Around the Circular Law},
  journal = {Probability Surveys},
  volume  = {9},
  pages   = {1--89},
  year    = {2012},
  doi     = {10.1214/11-PS183}
}

@article{CollinsMale2014,
  author  = {Collins, Beno{\^i}t and Male, Camille},
  title   = {The Strong Asymptotic Freeness of Haar and Deterministic Matrices},
  journal = {Annales Scientifiques de l'\'{E}cole Normale Sup\'{e}rieure},
  series  = {4},
  volume  = {47},
  number  = {1},
  pages   = {147--163},
  year    = {2014},
  doi     = {10.24033/asens.2211}
}

@article{schultz2005,
  author  = {Hanne Schultz},
  title   = {Non-commutative polynomials of independent {Gaussian}
             random matrices: The real and symplectic cases},
  journal = {Probability Theory and Related Fields},
  volume  = {131},
  number  = {2},
  pages   = {261--309},
  year    = {2005},
  doi     = {10.1007/s00440-004-0366-7}
}

@inproceedings{dong2020reservoir,
  author    = {Dong, Jonathan and Ohana, Ruben and Rafayelyan, Mushegh and Krzakala, Florent},
  title     = {Reservoir computing meets recurrent kernels and structured transforms},
  booktitle = {Advances in Neural Information Processing Systems},
  volume    = {33},
  pages     = {16785--16796},
  year      = {2020}
}

@article{bergstra2012random,
  author  = {Bergstra, James and Bengio, Yoshua},
  title   = {Random search for hyper-parameter optimization},
  journal = {Journal of Machine Learning Research},
  volume  = {13},
  pages   = {281--305},
  year    = {2012}
}

@techreport{jaeger2001memory,
  author      = {Jaeger, Herbert},
  title       = {Short term memory in echo state networks},
  institution = {German National Research Center for Information Technology (GMD)},
  number      = {GMD Report 152},
  year        = {2001},
  address     = {Sankt Augustin, Germany}
}

@book{vershynin2018high,
  author    = {Vershynin, Roman},
  title     = {High-Dimensional Probability: An Introduction with Applications in Data Science},
  publisher = {Cambridge University Press},
  series    = {Cambridge Series in Statistical and Probabilistic Mathematics},
  volume    = {47},
  year      = {2018}
}

@book{bhatia1997matrix,
  author    = {Bhatia, Rajendra},
  title     = {Matrix Analysis},
  publisher = {Springer},
  series    = {Graduate Texts in Mathematics},
  volume    = {169},
  year      = {1997}
}

@inproceedings{bergstra2011algorithms,
  title     = {Algorithms for Hyper-Parameter Optimization},
  author    = {Bergstra, James and Bardenet, R{\'e}mi
               and Bengio, Yoshua and K{\'e}gl, Bal{\'a}zs},
  booktitle = {Advances in Neural Information Processing Systems},
  volume    = {24},
  pages     = {2546--2554},
  year      = {2011},
  publisher = {Curran Associates, Inc.}
}

@incollection{lukosevicius2012practical,
  author    = {Luko{\v{s}}evi{\v{c}}ius, Mantas},
  title     = {A practical guide to applying echo state networks},
  booktitle = {Neural Networks: Tricks of the Trade},
  edition   = {2},
  publisher = {Springer},
  pages     = {659--686},
  year      = {2012}
}

@article{schrauwen2008improving,
  author  = {Schrauwen, Benjamin and Wardermann, Marion and Verstraeten, David and Steil, Jochen J. and Stroobandt, Dirk},
  title   = {Improving reservoirs using intrinsic plasticity},
  journal = {Neurocomputing},
  volume  = {71},
  number  = {7--9},
  pages   = {1159--1171},
  year    = {2008}
}

@article{yildiz2012revisiting,
  title={Re-visiting the echo state property},
  author={Yildiz, Izzet B and Jaeger, Herbert and Kiebel, Stefan J},
  journal={Neural networks},
  volume={35},
  pages={1--9},
  year={2012},
  publisher={Elsevier}
}

@article{hermans2012recurrent,
  title={Recurrent kernel machines: Computing with infinite echo state networks},
  author={Hermans, Michiel and Schrauwen, Benjamin},
  journal={Neural Computation},
  volume={24},
  number={1},
  pages={104--133},
  year={2012},
  publisher={MIT Press}
}

@article{d2025comparison,
  title={Comparison of reservoir computing topologies using the recurrent kernel approach},
  author={D’Inverno, Giuseppe Alessio and Dong, Jonathan},
  journal={Neurocomputing},
  volume={611},
  pages={128679},
  year={2025},
  publisher={Elsevier}
}

@article{thiede2019gradient,
  title={Gradient based hyperparameter optimization in echo state networks},
  author={Thiede, Luca Anthony and Parlitz, Ulrich},
  journal={Neural Networks},
  volume={115},
  pages={23--29},
  year={2019},
  publisher={Elsevier}
}

@article{racca2021robust,
  title={Robust optimization and validation of echo state networks for learning chaotic dynamics},
  author={Racca, Alberto and Magri, Luca},
  journal={Neural Networks},
  volume={142},
  pages={252--268},
  year={2021},
  publisher={Elsevier}
}

@inproceedings{lukosevicius2019efficient,
  title={Efficient cross-validation of echo state networks},
  author={Luko{\v{s}}evi{\v{c}}ius, Mantas and Uselis, Arnas},
  booktitle={International conference on artificial neural networks},
  pages={121--133},
  year={2019},
  organization={Springer}
}

@inproceedings{maat2018efficient,
  title={Efficient optimization of echo state networks for time series datasets},
  author={Maat, Jacob Reinier and Gianniotis, Nikos and Protopapas, Pavlos},
  booktitle={2018 International Joint Conference on Neural Networks (IJCNN)},
  pages={1--7},
  year={2018},
  organization={IEEE}
}

@article{gonon2020universality,
  author  = {Lukas Gonon and Juan-Pablo Ortega},
  title   = {Reservoir Computing Universality With Stochastic Inputs},
  journal = {IEEE Transactions on Neural Networks and Learning Systems},
  volume  = {31},
  number  = {1},
  pages   = {100--112},
  year    = {2020},
  doi     = {10.1109/TNNLS.2019.2899649}
}

@inproceedings{matzner2022hyperparameter,
  title={Hyperparameter tuning in echo state networks},
  author={Matzner, Filip},
  booktitle={Proceedings of the Genetic and Evolutionary Computation Conference},
  pages={404--412},
  year={2022}
}

@inproceedings{verstraeten2009quantification,
  title={On the quantification of dynamics in reservoir computing},
  author={Verstraeten, David and Schrauwen, Benjamin},
  booktitle={International Conference on Artificial Neural Networks},
  pages={985--994},
  year={2009},
  organization={Springer}
}

@inproceedings{zhou2021informer,
  title={Informer: Beyond efficient transformer for long sequence time-series forecasting},
  author={Zhou, Haoyi and Zhang, Shanghang and Peng, Jieqi and Zhang, Shuai and Li, Jianxin and Xiong, Hui and Zhang, Wancai},
  booktitle={Proceedings of the AAAI conference on artificial intelligence},
  volume={35},
  number={12},
  pages={11106--11115},
  year={2021}
}

@article{atiya2000new,
  title={New results on recurrent network training: unifying the algorithms and accelerating convergence},
  author={Atiya, Amir F and Parlos, Alexander G},
  journal={IEEE transactions on neural networks},
  volume={11},
  number={3},
  pages={697--709},
  year={2000},
  publisher={IEEE}
}

@article{inubushi2017reservoir,
  title={Reservoir computing beyond memory-nonlinearity trade-off},
  author={Inubushi, Masanobu and Yoshimura, Kazuyuki},
  journal={Scientific reports},
  volume={7},
  number={1},
  pages={10199},
  year={2017},
  publisher={Nature Publishing Group UK London}
}

@article{lorentz1963deterministic,
  title={Deterministic non-periodic flow},
  author={Edward N. Lorenz},
  journal={J. Atmos. Sci.},
  volume={20},
  pages={130--141},
  year={1963}
}

@article{mackey1977oscillation,
  title={Oscillation and chaos in physiological control systems},
  author={Mackey, Michael C and Glass, Leon},
  journal={Science},
  volume={197},
  number={4300},
  pages={287--289},
  year={1977},
  publisher={American Association for the Advancement of Science}
}

@inproceedings{lorenz1996predictability,
  author    = {Edward N. Lorenz},
  title     = {Predictability: A Problem Partly Solved},
  booktitle = {Proceedings of the ECMWF Seminar on Predictability},
  volume    = {1},
  address   = {Reading, UK},
  pages     = {1--18},
  year      = {1996}
}

\end{document}